\begin{document}
%%%%%%%%%%%%%%%%

% Outcomment only when entries are known. Otherwise leave as is and
%   default values will be used.
%\setcounter{page}{1}
%\VOLUME{00}%
%\NO{0}%
%\MONTH{Xxxxx}% (month or a similar seasonal id)
%\YEAR{0000}% e.g., 2005
%\FIRSTPAGE{000}%
%\LASTPAGE{000}%
%\SHORTYEAR{00}% shortened year (two-digit)
%\ISSUE{0000} %
%\LONGFIRSTPAGE{0001} %
%\DOI{10.1287/xxxx.0000.0000}%

% Author's names for the running heads
% Sample depending on the number of authors;
% \RUNAUTHOR{Jones}
% \RUNAUTHOR{Jones and Wilson}
% \RUNAUTHOR{Jones, Miller, and Wilson}
% \RUNAUTHOR{Jones et al.} % for four or more authors
% Enter authors following the given pattern:
%\RUNAUTHOR{}
\RUNAUTHOR{X}

% Title or shortened title suitable for running heads. Sample:
% \RUNTITLE{Predictive Maintenance in Manufacturing}
% Enter the (shortened) title:
\RUNTITLE{Dual-Agent Deep Reinforcement Learning for Dynamic Pricing and Replenishment}

% Full title. Sample:
% \TITLE{Optimal Resource Allocation in Humanitarian Logistics: A Stochastic Programming Approach}
% Enter the full title:
\TITLE{Dual-Agent Deep Reinforcement Learning for Dynamic Pricing and Replenishment}

% Block of authors and their affiliations starts here:
% NOTE: Authors with same affiliation, if the order of authors allows,
%   should be entered in ONE field, separated by a comma.
%   \EMAIL field can be repeated if more than one author
\ARTICLEAUTHORS{%
%\AUTHOR{John Doe,\textsuperscript{a} Jane Smith,\textsuperscript{b}}
%\AFF{\textsuperscript{a}Department of Industrial Engineering, University of XYZ, \EMAIL{john.doe@xyz.edu; \textsuperscript{b}Department of Computer Science, University of ABC, \EMAIL{jane.smith@abc.edu}} 
\AUTHOR{Yi Zheng}
\AFF{Guanghua School of Management, Peking University, \EMAIL{yizheng@stu.pku.edu.cn}}

\AUTHOR{Zehao Li}
\AFF{Guanghua School of Management, Peking University, \EMAIL{zehaoli@stu.pku.edu.cn}}

\AUTHOR{Peng Jiang}
\AFF{School of Management, Huazhong University of Science and Technology, \EMAIL{pengjiang0731@outlook.com}}

\AUTHOR{Yijie Peng}
\AFF{Guanghua School of Management, Peking University, \EMAIL{pengyijie@pku.edu.cn}}

% Enter all authors% end of the block

% Enter all authors
} 
\ABSTRACT{%
% Enter your abstract
We study the dynamic pricing and replenishment problems under inconsistent decision frequencies. Different from the traditional demand assumption, the discreteness of demand and the parameter within the Poisson distribution as a function of price introduce complexity into analyzing the problem property.  We demonstrate the concavity of the single-period profit function with respect to product price and inventory within their respective domains. The demand model is enhanced by integrating a decision tree-based machine learning approach, trained on  comprehensive market data. Employing a two-timescale stochastic approximation scheme, we address the discrepancies in decision frequencies between pricing and replenishment, ensuring convergence to local optimum. We further refine our methodology by incorporating deep reinforcement learning (DRL) techniques and propose a fast-slow dual-agent DRL algorithm. In this approach, two agents handle pricing and inventory and are updated on different scales. Numerical results from both single and multiple products scenarios validate the effectiveness of our methods. 
}%

\FUNDING{This research was supported by [grant number, funding agency].}

%Supplemental Material:
%Data Ethics & Reproducibility Note:

% Sample
%\KEYWORDS{Stochastic programming, Decision support,Uncertainty, Disaster response, Optimization}

% Fill in data. If unknown, outcomment the field
\KEYWORDS{Dynamic pricing and replenishment, Stochastic approximation, Deep reinforcement learning} 
%\HISTORY{Received: Month DD, YYYY; Accepted: Month DD, YYYY; Published Online: Month DD, YYYY}

\maketitle
%%%%%%%%%%%%%%%%%%%%%%%%%%%%%%%%%%%%%%%%%%%%%%%%%%%%%%%%%%%%%%%%%%%%%%
\section{Introduction}
Dynamic pricing refers to the practice where sellers dynamically adjust prices based on factors such as sales period, historical demand information, and supply availability \citep{elmaghraby2003dynamic}, which typically faces challenges of high costs or low feasibility in practice. Dynamic pricing methods have been applied to industries where capacity (supply) is difficult to change in the short term, such as airlines, cruise lines, hotels, and power facilities \citep{gallego1994optimal,gallego1997multiproduct}. Sectors like retail, with more flexible replenishment or higher price-change costs, have traditionally focused on enhancing inventory management.  Appropriate coordination between pricing and inventory decisions can mitigate inventory-demand mismatches \citep{feng2020integrating} and optimize profits \citep{lei2018joint}. \par

The rapid development of e-commerce and information systems has reduced the costs associated with price adjustments and enhanced the ability for real-time inventory monitoring \citep{brynjolfsson2000frictionless}. The challenge lies in that retailers often need to coordinate the pricing and inventory of thousands of products simultaneously. The existing literature suffers the following limitations: (i) The optimality of the base-stock list-price (BSLP) and (s,S,p) policies requires assuming restrictive  functional forms of demand, such as additive or multiplicative demands. (ii) Existing literature often assumes that excess demand is backlogged. For retail firms selling to consumers (i.e., B2C), it is more common for consumers to take their demand elsewhere during stock-outs, leading to lost-sales \citep{bijvank2023lost}. 
(iii) Most previous works consider demand in a certain period that depends only on prices, without considering the impact of other market factors, such as competitors. We take into consideration the impact of market competition (including competitors' prices, market reference prices, and price rankings) while optimizing pricing and replenishment strategies. \par

In the literature, strong assumptions about market demand have often been made. For example, \citet{cohen2018dynamic} assume that the demand is a linear function of the price plus a random noise component, and \citet{aviv2005partially} assume that customer arrivals follow a Poisson distribution. In reality, demand is not solely related to price, and the relationship is not as simple as a linear one.
Our work is not constrained by the unrealistic assumptions mentioned above. \par

A recent trend gaining steam is to introduce reinforcement learning (RL) to solve sequential inventory decision problems \citep{gijsbrechts2021can,oroojlooyjadid2022deep}. Through continuous trial-and-error interactions
with an unknown demand, RL agents learn by themselves to achieve optimal strategies without
imposing assumptions on the demand \citep{sutton1999reinforcement}. Deep RL (DRL) is a subfield of RL, which employs deep neural networks to approximate the value or policy functions of Markov Decision Processes (MDPs). \citet{rana2014real} investigate the pricing problem for a given inventory within a fixed time period.  \citet{kastius2021dynamic} apply Deep Q-Networks (DQN) and Soft Actor Critic (SAC) RL algorithms to the pricing problem of a single product in duopoly and oligopoly markets where the customer behavior is described by the logit model in \citet{schlosser2018dynamic}.
\par
We first investigate a dynamic pricing and replenishment problem under competition, considering competitor strategies and market conditions. Different from the traditional demand assumption, the discreteness of demand we consider, along with the parameter within the Poisson distribution being a function of price, introduces complexity into analyzing the property of the problem. We prove that the single-period profit function is concave with respect to product price and inventory within respective domains, but it is not jointly concave. Based on the respective concavity, a two-timescale algorithm is proposed. For multi-period problems, we propose a dual-agent DRL algorithm combining the two-timescale approach with Multi-Agent DRL (MADRL). Two agents are introduced in the algorithm, responsible for pricing and inventory decisions respectively. This allows each agent to optimize its own decisions while fully considering the influence of the other agent. Furthermore, we allow the two agents for making decisions on pricing and inventory to be updated in different scales. Finally, we explore a dynamic pricing and replenishment problem where product demand is directly learned from comprehensive market data, encompassing product attributes, customer behavior, and marketing information. The simulation results indicate that our joint pricing and replenishment strategy, which incorporates demand forecasting and further price optimization, results in lower inventory and higher profits. \par
Our work consists of the following contributions.\par
(i) We prove that the single-period profit function in dynamic pricing and replenishment problem is concave with respect to product price and inventory within respective domains, but it is not jointly concave. The respective concavity ensures the convergence of our proposed two-timescale algorithm, with solutions converging to the limit points of the ordinary differential equation (ODE).\par

(ii) We formulate the dynamic pricing and replenishment problems as Markov Games (MGs) and propose a new solution integrating two-timescale algorithms with MADRL to construct intelligent pricing and ordering policies. Additionally, we enhance our model's capability to capture real market demand by combining a decision tree-based machine learning (ML) model with real market data to train the ML model.\par

(iii) We experimentally demonstrate that our algorithms achieve optimal performance on solvable numerical examples and outperform baseline algorithms in cases where optimal strategies are not analytical, thus validating their effectiveness. Furthermore, our method exhibits linear computational complexity relative to problem scale, ensuring efficiency in practical pricing and replenishment applications.

\section{Literature Review}
In this section, a brief review on related literature is provided. 
We begin by reviewing the literature on joint dynamic pricing and inventory control, and then proceed to review the problem of dynamic pricing and replenishment under competition. Relevant computing approach and their applications in the field of dynamic pricing and inventory management are introduced.

\subsection{Joint Dynamic Pricing and Inventory Control}
The joint pricing and inventory control has received considerable attention over the past decade, often formulated as a Markov Decision Process (MDP) for multi-period decision problems in finite or infinite horizons \citep{federgruen1999combined}. To solve this problem, strong assumptions have been made in the literature. For instance, \citet{jia2011dynamic} assume the demand follows a random price-dependent function; \citet{bisi2007dynamic} study an additive and multiplicative price-sensitive demand; \citet{raz2006fractiles} approximate a continuous demand model using a finite number of representative fractiles, which are piecewise linear functions of the price. \citet{federgruen1999combined} assume that replenishment, holding, and ordering costs are convex, and the seller has unlimited production capacity. \citet{chen2004coordinating} and \citet{chan2006pricing} extend the previous models by considering fixed ordering costs  and limited production capacity, respectively. \par
 
\citet{thowsen1975dynamic} proves that a BSLP policy is optimal for the additive demand function by assuming the random noise follows a Pólya frequency function of order 2 (PF2) distribution. 
\citet{federgruen1999combined} further establish the optimality of a BSLP policy with a general demand function for a backorder system without capacity constraints. Their sufficient conditions are as
follows: (i) the demand is decreasing and concave in the list price; (ii) the single-period expected
inventory cost is jointly convex in the base-stock level and price, which can be guaranteed only if the demand function is linear in the price \citep{feng2014dynamic}. With fixed ordering costs, \citet{chen2004coordinating}  prove that the $(s,S,p)$ policy is optimal when the demand is additive in a finite time horizon and the seller’s objective is to either maximize expected discounted profits or maximize average long-run profits in the infinite-horizon model.  \par

\subsection{Dynamic Pricing and Replenishment under Competition}

\citet{chen2012pricing} note that research on jointly optimizing pricing and inventory decisions under competition remains limited due to model complexity. Competition typically drives prices lower, reducing firms' profits compared to monopolistic scenarios \citep{mantin2011dynamic,mookherjee2008pricing}, and may lead to excessive inventory holdings \citep{xu2006monopolistic}. \citet{mahmoodi2019joint} examines the pricing and replenishment problem of a single product with deterministic linear demand in a duopoly market and also compares the case where two substitutable products are both sold by monopolists but does not consider products and competition simultaneously.  
\citet{schlosser2018dynamic} analyze stochastic dynamic pricing models in competitive markets with multiple offer
dimensions (e.g., product qualities and seller ratings). They compare different demand learning techniques, including least squares, gradient boosted trees,  and logistic regression, to estimate sales probabilities from partially observable market data. Ultimately, they adopt a logistic regression approach for its interpretability and performance. In the follow-up study, they further consider market reference prices, which are related to historical market prices, as well as competitor strategies and price rankings \citep{schlosser2019dynamic}. 

\subsection{Two-timescale Stochastic Approximation and Deep Reinforcement Learning}
Two-timescale algorithms aggregate data and update parameters at epochs with two timescales, which allows for discrete-valued observations and avoids the infrequent aggregation of data over regeneration epochs \citep{bhatnagar1998two}, and can handle the frequency discrepancy between different decision-making processes \citep{wernz2013multi}. \par
MDPs provide a mathematical framework to model decision-making problems in sequential environments. RL learns optimal policies or value functions in MDPs through iterative interactions with the environment, basing decisions on observed feedback. RL algorithms can be divided into three categories: value-based, policy-based, and actor-critic algorithms \citep{arulkumaran2017deep}. Value-based algorithms focus on learning the value function, which estimates the expected cumulative reward for actions taken in specific states. Policy-based algorithms directly learn the optimal policy without explicitly estimating the value function. Actor-critic algorithms combine value-based and policy-based approaches, utilizing an actor to learn the policy and a critic to estimate the value function. \par

While RL has shown promise in solving complex decision-making problems, it faces challenges with high-dimensional inputs, scalability, and generalization, which could be addressed by DRL algorithms, such as Asynchronous Advantage Actor-Critic (A3C) \citep{mnih2016asynchronous} and Proximal Policy Optimization (PPO) \citep{schulman2017proximal} algorithms. A3C employs multiple agents running in parallel, enabling effective exploration and faster convergence, and PPO ensures monotonic policy improvement in each training iteration based on the trust region learning proposed by \citet{schulman2015trust}. These algorithms have demonstrated excellent performance in supply chain management. For instance, \citet{gijsbrechts2021can} apply A3C to three classic inventory problems and their result shows that A3C beats a Base Stock policy by 9\%-12\%. 
\citet{vanvuchelen2020use} apply the PPO algorithm to the joint replenishment problem for multiple items and they show that PPO outperforms two other heuristics.\par 

In Multi-Agent Systems (MAS), multiple agents cooperate or compete within a shared environment to achieve common or individual goals. For any individual agent in a MAS, the transition of environmental states is no longer solely dependent on its own actions and the dynamics of the environment; it also depends on the actions taken by other agents, leading to non-stationarity in the environment. This is particularly evident in the dynamic pricing and replenishment problem investigated in this study, where the local objectives of different decisions are conflicting: for example, the pricing strategy aims to increase profits through high pricing, while inventory management aims to quickly clear stock through low pricing. A single-agent DRL algorithm can not effectively coordinate the competition between them, resulting in sub-optimal or unstable strategies. In Section 4 of our experiments, we apply a single-agent DRL algorithm, but the results were unsatisfactory, with a convergence speed merely 1/5th of the algorithms we subsequently adopted. \par
To solve MDPs in MAS, MADRL has been developed. The Centralized Training and Decentralized Execution (CTDE) scheme, a mainstream MADRL architecture, addresses non-stationarity in MAS environments \citep{foerster2016learning}. It trains agents with full environmental information but executes actions based only on local information. This architecture enables global optimization during training by sharing full information and decentralized execution during testing, ensuring both agent cooperation and system efficiency.  By applying CTDE to Deep Deterministic Policy Gradient (DDPG), \citet{lowe2017multi} show that Multi-Agent DDPG (MADDPG) has a desirable performance under both cooperative and competitive multi-agent environments. Similarly, \citet{yu2022surprising} apply CTDE to PPO and introduce Multi-Agent PPO (MAPPO). Unlike the PPO algorithm, MAPPO does not guarantee monotonic improvement. To address these issues, \citet{kuba2021trust} propose Heterogeneous-Agent Proximal Policy Optimization (HAPPO). 

\section{Dynamic Pricing and Replenishment under Competition}
In this section, we formulate the problem of dynamic pricing and replenishment under competition as an MDP and illustrate the dynamics of this problem. We first derive theoretical results for the single-period problem and propose a two-timescale approach based on these findings. Then, considering the multi-period problem structure, we combine this approach with MADRL to develop a novel fast-slow dual-agent DRL algorithm for solving multi-period dynamic pricing and replenishment problems. Finally, we demonstrate the convergence and computational complexity of our algorithm, which significantly outperforms other heuristic strategies in experimental results.

\subsection{Markov Decision Process Formulation}
An MDP can be defined as a tuple $<\mathcal{S},\mathcal{A},\mathcal{P},\mathcal{R},\gamma>$. $\mathcal{S}$ is the set of product states. The state $s_t$ in period $t$ encompasses: the previous period's ending inventory level, lost-sales, demand, price, and ordering information within the lead time. $\mathcal{A}$ is the set of product actions $a_t$. Here, action $a_t$ in period $t$ determines price $p_t$ and replenishment quantity $q_t$. $\mathcal{P}: \mathcal{S} \times \mathcal{A} \times \mathcal{S} \rightarrow [0,1]$  is the state transition function that maps the current  state, action and next possible state to a transition probability; $\mathcal{R}: \mathcal{S} \times \mathcal{A} \rightarrow \mathbb{R}$ represents the reward function that maps current state and action to a payoff and $\gamma \in (0,1]$ is the discount factor. In each period $t$, the agent receives $s_t \in \mathcal{S}$ and takes an action $a_t \sim \pi(\cdot | s_t)$, where $\pi$ is a policy which maps states to action’s probability distribution. After taking action $a_t$, a reward $\mathcal{R}(a_t|s_t) = r_t$ set as the sales profit in period $t$ is observed. Meanwhile, the system transfers to the next state $s_{t+1} \sim \mathcal{P}(\cdot|a_t,s_t)$, where the state transition function $\mathcal{P}$ is determined by the dynamics of the pricing and replenishment model.

At every time period, the decision maker decides the price and replenishment quantity of a product for this period in order to maximize the discounted future profits, which are defined as
\begin{equation} 
    \begin{aligned} \label{eq:8}
   R_t = \sum_{k=0}^{T-t} \gamma^{k} r_{t+k}.
    \end{aligned}
\end{equation}

The expected value of these discounted profits $R_t$  corresponds to the value function $V^{\pi_t}(s_t)$, which indicates the value of being in state $s_t$ following a certain policy $\pi_t$:

\begin{equation} 
    \begin{aligned} \label{eq:9}
   V^{\pi_t}(s_t) = \sum_{k=0}^{T-t} \gamma^{k} \mathbb{E}_{\pi_t} r_{t+k}.
    \end{aligned}
\end{equation}

The value $V(s_t)$ is recursively dependent on the value of the next state $V(s_{t+1})$. The optimal pricing and replenishment action in period $t$ can be determined by solving the well-known Bellman equations:

\begin{equation} 
    \begin{aligned} \label{eq:10}
   V^{\pi_*}(s_t) = \max \limits_{a_t \in \mathcal{A}} \Big\{ r_t + \gamma \sum_{s_{t+1}} \mathcal{P}(s_{t+1}|s_t,a_t) V^{\pi_*}(s_{t+1}) \Big\}.
    \end{aligned}
\end{equation}

Traditional demand assumptions mainly follow the form: 
\begin{equation}\label{eq:new10}
    d_{t}(p_t,\epsilon_t) = \gamma_t(p_t)\epsilon_t + \delta_t(p_t),
\end{equation}
with $\gamma(\cdot)$ and $\delta(\cdot)$ being nonincreasing functions. The cases of $\gamma_t(p)=1$ and $\delta_t(p)=0$ are often referred to as the additive and multiplicative models, respectively.
Continuous demand models are often preferred due to their mathematical tractability and computational efficiency \citep{babai2011analysis}. However, real-world demands could be discrete. Our analysis of real market demand data, using four common continuous demand models, results in poor fitness with R-squared values ranging only between 0.106 and 0.146. Example \ref{example:1} in Appendix B provides further illustration of this inadequacy. \citet{swaminathan1999supplier} point out that optimal ordering policies for discrete and continuous demand distributions may be totally different. In Section 3.2, we formulate a problem with the discrete Poisson demand model.

\subsection{Dynamics of the Dynamic Pricing and Replenishment Model}
Consider a $T$-period problem, where Table \ref{tab:parameters and varibles1} presents the notations. At the beginning of each period $t$ $(t = 1,...,T)$, product prices and replenishment quantities need to be determined. Product demand is uncertain, influenced not only by its own price but also by competitors' prices and market conditions. 
We infer overall demand by modeling the probability of each individual consumer's choice. Specifically, customer behavior is captured through a logistic model, which is based on real-world data \citep{schlosser2019dynamic}. Given our offer price $p$, the competitor’s price $o$, and the current reference price $j$, the sales probability within a time span of length $\Delta \in [0,1]$ is denoted by
\begin{equation} \label{eq:1}
    \begin{aligned}
    P^{\Delta}(p,o,j):=
     Pois\Big(\Delta \cdot e^{\bold{\kappa}(p,o,j)^{'}\bold{\beta}} \text{/} (1+e^{\bold{\kappa}(p,o,j)^{'}\bold{\beta}})\Big),
    \end{aligned}
\end{equation}
 where $\bold{\kappa}(p,o,j)$ represents regressors relating to sales, such as the competitor's price and the reference price. $\bold{\beta}$ represents the corresponding parameter vector. This model calculates the probability of a sale to an individual customer in a given period based on these price-related features. The demand for a product in period $t$ follows
\begin{equation} \label{eq:2}
    d_t  \sim Pois\Big( \eta \Delta \cdot e^{\bold{\kappa}(p,o,j)^{'}\bold{\beta}} \text{/} (1+e^{\bold{\kappa}(p,o,j)^{'}\bold{\beta}}) \Big).
\end{equation}
The scaling factor $\eta$ is introduced to adjust the magnitude of the Poisson distribution parameter $\lambda$.
\begin{table}[htb]
\renewcommand\arraystretch{0.8}
\centering
\caption{Parameters and variables \label{tab:parameters and varibles1}}{
\begin{tabular}{cl}\hline
\multicolumn{2}{l}{Parameters}  \\  \hline
$h$  & unit holding cost    \\ \hline
$b$ & unit shortage cost  \\ \hline
$c$ & unit ordering cost   \\ \hline
$z$ & lead time   \\ \hline
\multicolumn{2}{l}{Variables} \\ \hline
$q_t$ & replenishment quantity offered by agents in period $t$\\ \hline
$p_t$ & price offered by the DRL agent in period $t$ \\ \hline
$o_t$ & price offered by the competitor in period $t$ \\ \hline
$d_t$ & demand in period $t$  \\ \hline
$I_t$ & inventory level in period $t$ \\ \hline
$L_t$ & lost-sales in period $t$ \\ \hline
$S_t$ & actual sales in period $t$ \\ \hline
\end{tabular}}
\end{table}

We assume that excess demand is lost and unsold products have no salvage value. This lost-sales assumption complicates the model compared to backlog scenarios \citep{morton1971near}. \citet{arrow1958studies} demonstrate that in a single sourcing model with backlogging and constant lead time, inventory and outstanding orders can be simplified to one inventory position. When excess demand is lost, this simplification does not apply, and the optimal policy relies on the full inventory pipeline vector, exponentially increasing complexity with lead time. \par
At the beginning of any period $t$, the replenishment quantity $q_t$ and price $p_t$ for a product must be decided. Since the unmet demand is lost, the dynamics of the evolution of inventory can be expressed by \par
\begin{equation} \label{eq:3}
    I_t = [I_{t-1}+q_{t-{z}}-d_t]^+.
\end{equation}
The lost-sales is 
\begin{equation}\label{eq:5}
    L_t = \max \{d_t-(I_{t-1}+q_{t-{z}}),0\}.
\end{equation}
The actual sales is given by
\begin{equation}\label{eq:4}
    S_t = \min \{d_t,I_{t-1}+q_{t-{z}}\}.
\end{equation}
The sales profit $r_t$ is
\begin{equation}\label{eq:6}
    r_t = p_t S_t - h I_t - b L_t - c q_t.
\end{equation}
\par

\subsection{Theoretical Analysis for Single-period Problem}
Before studying multi-period decision problems, we first consider a single-period problem with zero lead time and ending inventory given by
\begin{equation}
    I = [x_0+q-d]^+,
\end{equation}
where $q$ represents the replenishment quantity, $x_0$ is the initial inventory at the beginning of the period, which is known, and $d$ is the demand for the current period.\par
The lost-sales is $L = \max \{d-(x_0+q),0\}$.
The actual sales is given by $S = \min \{d,x_0+q\}$. The objective is to find the optimal order quantity $q^*$ and the optimal price $p^*$, in order to maximize the single-period expected profit depending on the demand model, as follows:
\begin{equation} 
    \begin{aligned} \nonumber
F(p, x) = \mathbb{E}[pS(p,x) - hI(p,x) - bL(p,x) - c(x-x_0)],
    \end{aligned}
\end{equation}
where $x=x_0+q$. Therefore, the problem comes down to deciding on $p$ and $x$, given that $x_0$ is known.\par 
Unlike the traditional demand assumption in Eq.(\ref{eq:new10}), our discrete Poisson demand model (Eq.(\ref{eq:2})) incorporates a price-dependent parameter $\lambda(p)$. This dependency introduces complexity into the analysis of related functions and deviates from conventional assumptions.

\begin{proposition}\label{pro:necessity}
    The Poisson distribution $D(p) \sim Pois(\lambda(p))$, where $\lambda(p)$ is the parameter, does not fall under the traditional demand assumption described in Eq.(\ref{eq:new10}): $d(p,\epsilon) = \gamma(p)\epsilon + \delta(p)$, with $\epsilon$ being a random variable that is independent of $p$.
\end{proposition}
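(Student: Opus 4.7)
The plan is to proceed by contradiction. I would assume that $D(p)\sim\mathrm{Pois}(\lambda(p))$ admits the decomposition $d(p,\epsilon)=\gamma(p)\epsilon+\delta(p)$ with $\epsilon$ independent of $p$, and then exploit the incompatibility between (i) the fact that the Poisson support is the fixed set $\mathbb{Z}_{\geq 0}$ for every $\lambda(p)>0$ and (ii) the fact that an affine relabeling of $\epsilon$ by price-dependent coefficients deforms its support. First I would handle the degenerate case $\gamma(p)=0$, in which $d(p,\epsilon)=\delta(p)$ would be deterministic while $\mathrm{Pois}(\lambda(p))$ is not, ruling it out immediately, so I may assume $\gamma(p)\neq 0$.

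Given $\gamma(p)\neq 0$, I would invert the assumed representation to get $\epsilon=(d(p,\epsilon)-\delta(p))/\gamma(p)$, and conclude that for every $p$ the support of $\epsilon$ must equal
\[
\bigl\{(n-\delta(p))/\gamma(p):n\in\mathbb{Z}_{\geq 0}\bigr\}.
\]
This is an arithmetic progression with common difference $1/|\gamma(p)|$, one finite endpoint at $-\delta(p)/\gamma(p)$, and a direction (bounded below if $\gamma(p)>0$, bounded above if $\gamma(p)<0$) determined by the sign of $\gamma(p)$. Since the support of $\epsilon$ is a fixed set, each of these three geometric features must be independent of $p$: the common direction rules out sign flips in $\gamma(p)$, the common spacing forces $|\gamma(p)|$ to be constant, and the common endpoint then forces $\delta(p)$ to be constant. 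Hence $\gamma(p)\equiv\gamma$ and $\delta(p)\equiv\delta$ for some constants $\gamma,\delta$.

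Substituting back gives $d(p,\epsilon)=\gamma\epsilon+\delta$, whose distribution carries no dependence on $p$ at all. But by construction $d(p,\epsilon)\sim\mathrm{Pois}(\lambda(p))$ has mean $\lambda(p)$, which by the logistic expression in Eq. (\ref{eq:2}) is a genuinely non-constant function of $p$. This contradiction would complete the argument.

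I expect the main subtlety to lie not in computation but in the support-matching step: I need to ensure that matching two arithmetic progressions of the same cardinality forces the coefficients to agree, and in particular to rule out the sign-flipped case cleanly by observing that a progression unbounded above cannot coincide with one unbounded below. Once this geometric observation is locked in, the rest of the argument is essentially a substitution and a sentence noting that $\lambda(\cdot)$ is non-constant in the model under study.
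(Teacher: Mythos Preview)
Your argument is correct, and it takes a genuinely different route from the paper's own proof. The paper proceeds by matching moments: it equates the mean and variance of $\gamma(p)\epsilon+\delta(p)$ to $\lambda(p)$ (using that the Poisson has mean equal to variance) to solve for $\gamma(p)$ and $\delta(p)$ in terms of $\lambda(p)$ and the moments of $\epsilon$, and then compares third central moments to force $\lambda(p)$ to equal a constant depending only on the moments of $\epsilon$, yielding the contradiction. Your approach instead exploits the rigidity of the \emph{support}: the Poisson support $\mathbb{Z}_{\geq 0}$ is the same for every $p$, so the affine preimage $\{(n-\delta(p))/\gamma(p):n\in\mathbb{Z}_{\geq 0}\}$ must be a single fixed set, and the geometry of one-sided arithmetic progressions pins down $\gamma(p)$ and $\delta(p)$ as constants. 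Your route is more elementary---no moment computations---and in fact generalizes immediately to any family of distributions with common full support on $\mathbb{Z}_{\geq 0}$, not just Poisson. The paper's moment-based argument, by contrast, leans on the specific Poisson identity that the first three cumulants all equal $\lambda$, but would transfer more readily to continuous-demand settings where support arguments are unavailable.
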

The proof of Proposition \ref{pro:necessity}, along with the following omitted proofs, can be found in Appendix A. This proposition highlights the structural difference between our Poisson demand model $D(p) \sim Pois(\lambda(p))$ and traditional models. The key distinction lies in the inseparability of the price-dependent function $\lambda(p)$ from the distribution of the random variable in the Poisson model, unlike the explicit separation of the random variable $\epsilon$ and price-dependent functions in the traditional model $\gamma(p)\epsilon + \delta(p)$.\par
Furthermore, previous studies on joint pricing and replenishment often focus on scenarios with demand backlog, as in \cite{federgruen1999combined}, where revenue depends on demand volume rather than sales, making the revenue function independent of inventory. In contrast, we analyze scenarios with lost-sales under discrete demand (Eq.(\ref{eq:2})). We derive the concavity and convexity of the inventory function, revenue function, and lost-sales function under discrete demand Eq.(\ref{eq:2}). The final comparison with traditional assumptions is presented in Proposition \ref{thrm:2}. 
\subsubsection{Properties of Objective Function}
\begin{assumption}\label{assumption:1}
    We suppose $lp+a > 0$ in the domain of $p$ and the domain is a bounded set.
\end{assumption}

\begin{lemma}\label{lemma:1}
    Under Assumption \ref{assumption:1}, define $$\lambda = \eta \Delta \cdot e^{lp + a} \text{/} (1+e^{lp + a}),$$ then $\lambda(p)$ is concave and monotonically decreasing with respect to $p$.
\end{lemma}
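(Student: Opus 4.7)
The plan is to reduce everything to the standard sigmoid $\sigma(u)=e^{u}/(1+e^{u})$ via the substitution $u=lp+a$, so that $\lambda(p)=\eta\Delta\cdot\sigma(lp+a)$. Since $\eta\Delta>0$, both monotonicity and concavity of $\lambda$ in $p$ are controlled entirely by the behavior of $\sigma(lp+a)$. The two classical identities $\sigma'(u)=\sigma(u)(1-\sigma(u))$ and $\sigma''(u)=\sigma(u)(1-\sigma(u))(1-2\sigma(u))$ will do essentially all of the work once we apply the chain rule.

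First I would compute
\[
\lambda'(p) \;=\; \eta\Delta\, l\, \sigma(1-\sigma), \qquad \lambda''(p) \;=\; \eta\Delta\, l^{2}\, \sigma(1-\sigma)\bigl(1-2\sigma\bigr),
\]
where each $\sigma$ is evaluated at $lp+a$. For monotonicity, I would use that $\sigma(1-\sigma)\in(0,1/4]$ is strictly positive everywhere; since $l$ is the coefficient of price in the logit of the demand model and the model is a decreasing one in price, we have $l<0$, hence $\lambda'(p)<0$. This gives strict monotone decrease.

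For concavity, the prefactor $\eta\Delta\, l^{2}\,\sigma(1-\sigma)$ is nonnegative, so the sign of $\lambda''(p)$ is determined solely by $1-2\sigma(lp+a)$. Assumption \ref{assumption:1} gives $lp+a>0$ throughout the (bounded) domain, which yields $\sigma(lp+a)>1/2$ and therefore $1-2\sigma(lp+a)<0$. Combining, $\lambda''(p)<0$ on the domain, which establishes concavity.

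The only substantive point to notice is the role of Assumption \ref{assumption:1}: the logistic curve has its inflection at $u=0$ (where $\sigma=1/2$), so it is S-shaped and concave only on the upper branch $u>0$. The hypothesis $lp+a>0$ is precisely what confines $p$ to that concave regime; without it, neither global concavity nor a one-sign second derivative could be claimed. So the "hard part," such as it is, is recognizing that the hypothesis is tailored to exclude the convex (lower) half of the sigmoid; once that is seen, the proof is essentially a two-line chain-rule computation.
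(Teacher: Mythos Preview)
Your argument is correct and is essentially the same as the paper's: both compute $\lambda'(p)$ and $\lambda''(p)$ directly and read off the signs, with Assumption~\ref{assumption:1} used precisely to force $1-2\sigma(lp+a)<0$ (equivalently $1-e^{lp+a}<0$). The only cosmetic difference is that you work via the sigmoid identities $\sigma'=\sigma(1-\sigma)$ and $\sigma''=\sigma(1-\sigma)(1-2\sigma)$, whereas the paper writes out the exponential form explicitly; your added remark about the inflection point clarifies why the hypothesis $lp+a>0$ is exactly what is needed.
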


\begin{proposition}\label{proposition:2}
Under Assumption \ref{assumption:1}, the inventory function $I(p,x)=\mathbb{E}[(x-d)^+]$ is convex and monotonically increasing with respect to price $p$.
\end{proposition}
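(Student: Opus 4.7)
The natural strategy is to exploit the fact that $I(p,x)$ depends on $p$ only through the Poisson rate $\lambda(p)$. Set $g(\lambda):=\mathbb{E}_{d\sim\mathrm{Pois}(\lambda)}[(x-d)^+]$, so that $I(p,x)=g(\lambda(p))$, and apply the chain rule:
\begin{equation*}
\frac{\partial I}{\partial p}=g'(\lambda)\,\lambda'(p),\qquad \frac{\partial^2 I}{\partial p^2}=g''(\lambda)\,\lambda'(p)^2+g'(\lambda)\,\lambda''(p).
\end{equation*}
Lemma~\ref{lemma:1} already supplies $\lambda'(p)\leq 0$ and $\lambda''(p)\leq 0$, so everything reduces to showing that $g$ is nonincreasing and convex on $(0,\infty)$.

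For this reduction I would use the Poisson derivative identity, a consequence of $\frac{d}{d\lambda}P(N=k)=P(N=k-1)-P(N=k)$ (with the convention $P(N=-1):=0$), which yields, for any bounded $\phi:\mathbb{Z}_{\geq 0}\to\mathbb{R}$,
\begin{equation*}
\frac{d}{d\lambda}\mathbb{E}[\phi(N(\lambda))]=\mathbb{E}[\phi(N+1)-\phi(N)].
\end{equation*}
Applied twice with $\phi(k)=(x-k)^+$, this gives $g'(\lambda)=\mathbb{E}[\phi(N+1)-\phi(N)]$ and $g''(\lambda)=\mathbb{E}[\phi(N+2)-2\phi(N+1)+\phi(N)]$. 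Because $\phi$ is piecewise linear with a single downward kink at $k=x$, its first forward difference is $\leq 0$ everywhere (equal to $-\mathbf{1}\{k< x\}$ up to a boundary correction when $x\notin\mathbb{Z}$), and its second forward difference is $\geq 0$ everywhere (the only strictly positive entries occur at the one or two integers straddling $x$). Hence $g'\leq 0$ and $g''\geq 0$.

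Substituting back, $\partial I/\partial p=g'(\lambda)\,\lambda'(p)$ is a product of two nonpositive numbers and is therefore $\geq 0$, which gives monotonicity. For convexity, $g''(\lambda)\,\lambda'(p)^2\geq 0$ and $g'(\lambda)\,\lambda''(p)\geq 0$ (again a product of two nonpositives), so $\partial^2 I/\partial p^2\geq 0$. The main obstacle I foresee is the discrete-convexity calculation for $\phi(k)=(x-k)^+$: a small but careful case analysis around the kink $k\approx x$ (distinguishing integer and non-integer $x$) is needed to verify the sign of the forward second differences. Once this is in hand, the composition with Lemma~\ref{lemma:1} via the chain rule is essentially immediate.
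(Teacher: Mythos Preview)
Your proposal is correct and, in fact, considerably cleaner than the paper's argument. Both proofs share the same chain-rule skeleton: write $I(p,x)=g(\lambda(p))$ and reduce the question to showing $g'(\lambda)\le 0$ and $g''(\lambda)\ge 0$, then invoke Lemma~\ref{lemma:1} for the signs of $\lambda'(p)$ and $\lambda''(p)$. The difference lies entirely in how those two sign conditions on $g$ are verified. The paper differentiates the Poisson sum directly, obtaining $g'(\lambda)=\sum_{k\le\lfloor x\rfloor}\frac{\lambda^{k-1}e^{-\lambda}}{k!}(x-k)(k-\lambda)$ and a corresponding expression for $g''$; it then establishes $g'\le 0$ via a truncated-Poisson mean comparison and $g''\ge 0$ through a rather intricate argument involving the moment identity $\mathbb{E}[(x-X)((X-\lambda)^2-X)]=0$, the roots of the quadratic $(k-\lambda)^2-k$, and a piecewise-linear minimization in $x$. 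Your route sidesteps all of this by using the Poisson forward-difference identity $\frac{d}{d\lambda}\mathbb{E}[\phi(N)]=\mathbb{E}[\phi(N+1)-\phi(N)]$, which immediately converts the two sign claims into the elementary facts that $\phi(k)=(x-k)^+$ has nonincreasing first differences and nonnegative second differences (i.e.\ is discrete convex). What your approach buys is brevity and transparency: the whole proof is a few lines, and the ``obstacle'' you flag---the case analysis for the second difference near the kink---is benign (for $x\notin\mathbb{Z}$ the second difference is $n+1-x>0$ at $k=\lfloor x\rfloor-1$ and $x-n>0$ at $k=\lfloor x\rfloor$, zero elsewhere; for $x\in\mathbb{Z}$ the only positive entry is $1$ at $k=x-1$). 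What the paper's approach buys, arguably, is that its explicit expressions for $g'$ and $g''$ are reused later (e.g.\ in Remark~\ref{remark:1} and Proposition~\ref{proposition:4}), so there is some economy in having them on the page.
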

\begin{proof}{Proof}
Denote $\lfloor x \rfloor$ as the maximum integer no more than $x$. We rewrite $I(p,x)$ as
    \begin{equation*} \small
        \begin{aligned}
            \mathbb{E}[(x-d)^+] = \sum_{k=0}^{\lfloor x \rfloor}\frac{\lambda^k e^{-\lambda}}{k!}(x-k).
        \end{aligned}
    \end{equation*}
    Then we calculate the first order derivative and the second order derivative:
    \begin{equation*} \small
        \begin{aligned}
        \frac{\partial I(p,x)}{\partial p} = \frac{\partial \lambda}{\partial p }\sum_{k=0}^{\lfloor x \rfloor}\frac{\lambda^{k-1} e^{-\lambda}}{k!}(x-k)(k-\lambda),
        \end{aligned}
    \end{equation*}
     \begin{equation}\small
        \begin{aligned}\label{eq:11}
    \frac{\partial^2 I(p,x)}{\partial p^2} = \frac{\partial^2 \lambda}{\partial p^2}\sum_{k=0}^{\lfloor x \rfloor}\frac{\lambda^{k-1} e^{-\lambda}}{k!}(x-k)(k-\lambda) + (\frac{\partial \lambda}{\partial p })^2\sum_{k=0}^{\lfloor x \rfloor}\frac{\lambda^{k-2} e^{-\lambda}}{k!}(x-k)((k-\lambda)^2-k).
     \end{aligned}
    \end{equation}
    We want to prove $\frac{\partial^2 I(p,x)}{\partial p^2}>0$. First, we consider the first term of Eq.(\ref{eq:11}). 
   When $x<\lambda$, for every $ k \le \lfloor x \rfloor$, $k-\lambda<0$ holds, we have $\sum_{k=0}^{\lfloor x \rfloor}\frac{\lambda^{k-1} e^{-\lambda}}{k!}(x-k)(k-\lambda)<0$. When $x \ge \lambda$, since $(x-k)(k-\lambda) \le (x-\lambda)(k-\lambda)$,  we have
    \begin{equation*}\small
        \begin{aligned}
            \sum_{k=0}^{\lfloor x \rfloor}\frac{\lambda^{k-1} e^{-\lambda}}{k!}(x-k)(k-\lambda) &\le (x-\lambda)\sum_{k=0}^{\lfloor x \rfloor}\frac{\lambda^{k-1} e^{-\lambda}}{k!}(k-\lambda) \\ = \frac{x-\lambda}{\lambda}\sum_{k=0}^{\lfloor x \rfloor}\mathbb{P}(X=k)(k-\lambda)  &= \frac{x-\lambda}{\lambda}(\sum_{k=0}^{\lfloor x \rfloor}k \mathbb{P}(X=k) -\lambda \mathbb{P}(X \le \lfloor x \rfloor),
        \end{aligned}
    \end{equation*}
where $X$ is a random variable following a Poisson distribution with mean $\lambda$. 
\vspace{-0.3cm}
\begin{figure}[H]
  \centering
  \caption{Poisson distribution and its truncated version}
  \includegraphics[width=0.65\textwidth]{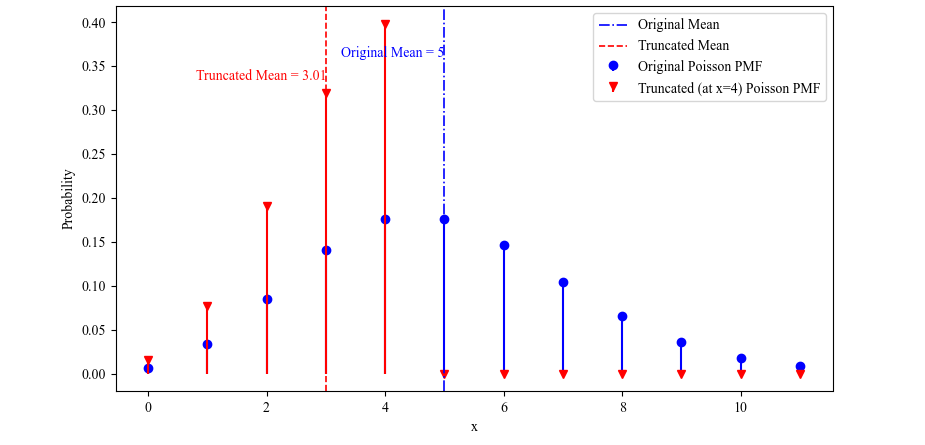} % 插入图片
  \label{fig:truncated}
\end{figure}  
\vspace{-1.0cm}
Considering a random variable $X'$ following Poisson distribution truncating at $\lfloor x \rfloor$, it is obvious that $E[X'] <  E[X]$ due to the truncation, as shown in Figure \ref{fig:truncated}. Therefore, \begin{equation*}\small
        \begin{aligned}
    E[X'] = \frac{\sum_{k=0}^{\lfloor x \rfloor}k \mathbb{P}(X=k)}{\mathbb{P}(X \le \lfloor x \rfloor)} <  E[X] = \lambda,        \end{aligned}
\end{equation*}
        which implies $\sum_{k=0}^{\lfloor x \rfloor}\frac{\lambda^{k-1} e^{-\lambda}}{k!}(x-k)(k-\lambda)<0$.
    Note that $\frac{\partial^2 \lambda}{\partial p^2}<0$, we prove that the first term of Eq.(\ref{eq:11}) is positive. The conclusion $\frac{\partial I(p,x)}{\partial p} > 0$ holds because $\frac{\partial \lambda}{\partial p }<0$. Next, we claim that the second term of Eq.(\ref{eq:11}) is also positive. Suppose that $X$ is a random variable following a Poisson distribution with mean $\lambda$, then $\mathbb{E}[X^2] = \lambda^2+\lambda$, $\mathbb{E}[X^3] = \lambda^3 + 3\lambda^2 + \lambda$. We can find that
    \begin{equation}\small
        \begin{aligned}\notag
            \mathbb{E}[(x-X)((X-\lambda)^2-X)] =& -\mathbb{E}[X^3] + (x+(2\lambda+1))\mathbb{E}[X^2] + (-\lambda^2-(2\lambda+1)x)\mathbb{E}X + \lambda^2x \\
            =& -\lambda^3 - 3\lambda^2 - \lambda + (x+(2\lambda+1))(\lambda^2+\lambda) +(-\lambda^2-(2\lambda+1)x)\lambda +\lambda^2x  =  0.
        \end{aligned}
    \end{equation}
    Therefore, the equation below holds for every $x$:
        \begin{equation}\small
        \begin{aligned}\label{eq:12}
    \sum_{k=0}^{\infty}\frac{\lambda^{k-2} e^{-\lambda}}{k!}(x-k)((k-\lambda)^2-k) = \lambda^{-2}\mathbb{E}[(x-X)((X-\lambda)^2-X)] = 0.
        \end{aligned}
    \end{equation}
    It is obvious that the the second term of Eq.(\ref{eq:11}) is continuous and piecewise linear with respect to $x$  with integer points as nodes, which implies that this function reaches its minimum at certain integer nodes. In order to find its minimum with respect to $x$, we can calculate the difference by taking $x$ as the integers $n$ and $n + 1$, respectively, as follows: 
    \begin{equation}\small
        \begin{aligned}\label{eq:13}
           &\sum_{k=0}^{n+1}\frac{\lambda^{k-2} e^{-\lambda}}{k!}(n+1-k)((k-\lambda)^2-k) - \sum_{k=0}^{n}\frac{\lambda^{k-2} e^{-\lambda}}{k!}(n-k)((k-\lambda)^2-k)  = \sum_{k=0}^{n}\frac{\lambda^{k-2} e^{-\lambda}}{k!}((k-\lambda)^2-k).
        \end{aligned}
    \end{equation}
     \begin{figure}
\centering
\caption{Eq.(\ref{eq:13}) and the second term of Eq.(\ref{eq:11}) without quadratic term with $\lambda=5$}% 第一个子图
\begin{subfigure}[b]{0.45\textwidth}            
   \captionsetup{font=tiny}
   \caption{Plot of Eq.(\ref{eq:13}).}
   \label{fig:illus_plot1}
   \includegraphics[width=\textwidth]{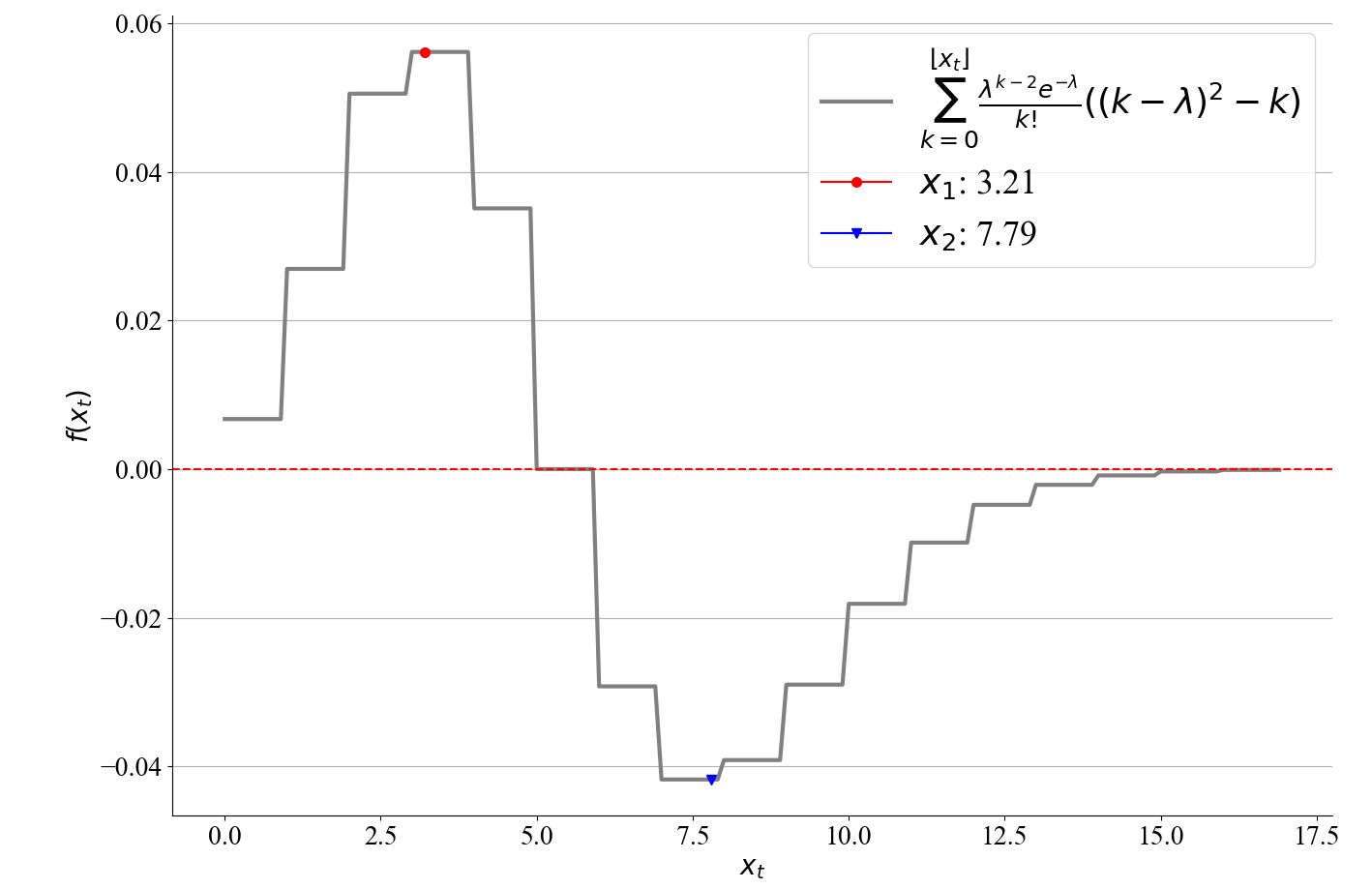}
   
\end{subfigure}
% 第二个子图
\begin{subfigure}[b]{0.45\textwidth}
   \captionsetup{font=tiny}  % 设置子图标题的字体大小
   \caption{Plot of the Second Term of Eq.(\ref{eq:11}) without Quadratic Term.}
   \includegraphics[width=\textwidth]{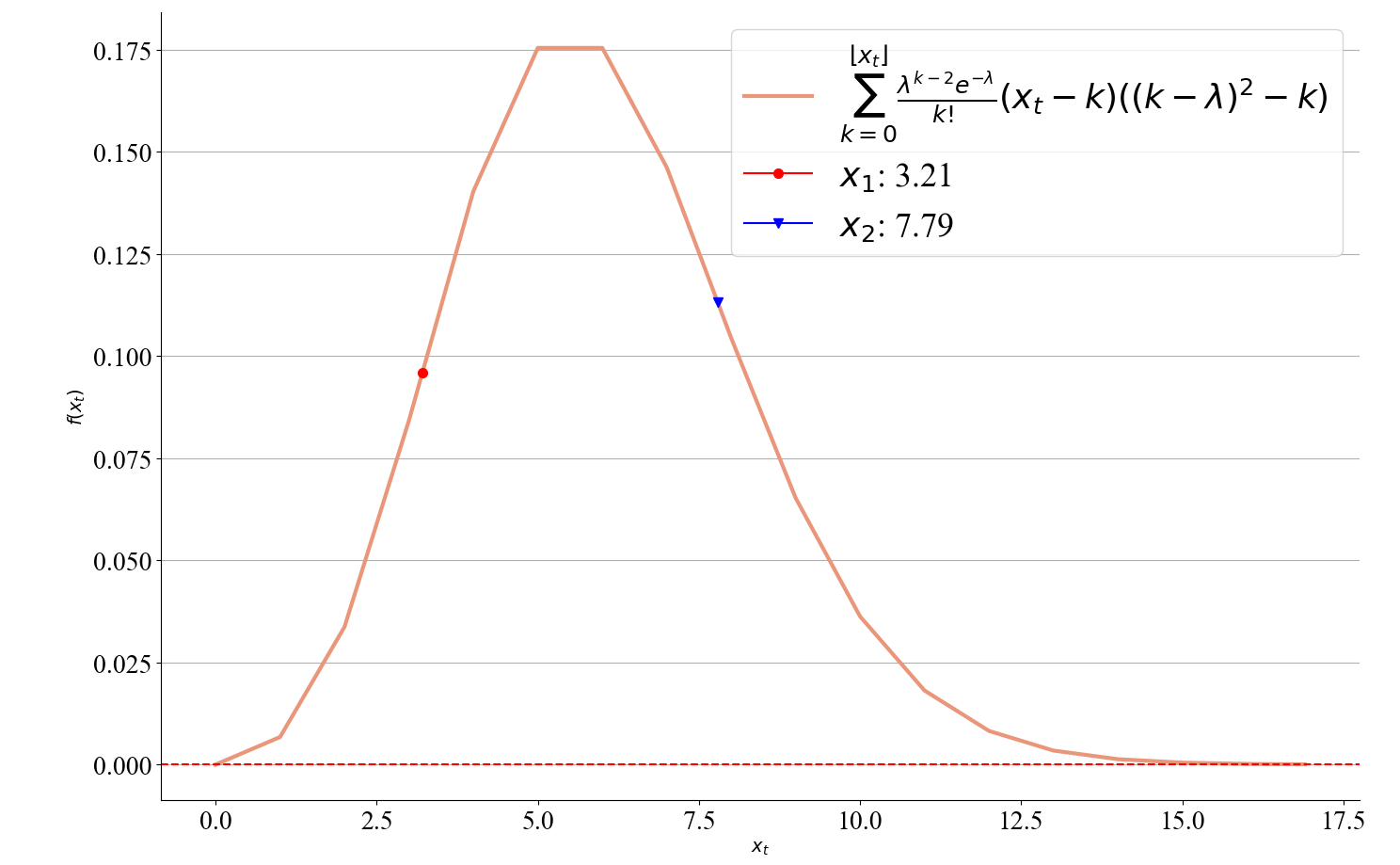}
   
   \label{fig:illus_plot2}
\end{subfigure}

\label{fig:illus_plot}
\end{figure}
    Given that $(\frac{\partial \lambda}{\partial p })^2 > 0$, the sign of Eq.(\ref{eq:13}) determines that of the derivative of the second term of Eq.(\ref{eq:11}) with respect to $x$ when $x$ is not an integer. We illustrate Eq.(\ref{eq:13}) and the second term of Eq.(\ref{eq:11}) without quadratic term $(\frac{\partial \lambda}{\partial p })^2$ with $\lambda=5$ in Figure \ref{fig:illus_plot}. The quadratic equation $(k-\lambda)^2-k$ with respect to $k$ has two roots, denoted as $x_1$ and $x_2$, where $0 < x_1 < x_2$. These roots are indicated as two red points in Figure \ref{fig:illus_plot}.  As is shown in Figure \ref{fig:illus_plot1}, when $0<x \le x_1$, the value of Eq.(\ref{eq:13}) is increasing with respect to $x$. When $x_1 < x < x_2$, the value of Eq.(\ref{eq:13}) is decreasing for $x_1<k<x_2$ as each added term containing $(k-\lambda)^2-k$ is negative. When $x > x_2$, the value of Eq.(\ref{eq:13}) is increasing for  $k>x_2$ as each added term containing $(k-\lambda)^2-k$ is positive. The minimum of the second term of Eq.(\ref{eq:11}) occurs at the point that Eq.(\ref{eq:13}) changes from negative to positive (if there exists) or at the boundary. Therefore, the second term of Eq.(\ref{eq:11}) reaches its minimum when $x=x^{*} \ge \lfloor x_2 \rfloor$ or $x = 0$.  Consequently, we have
    \begin{equation*}\small
        \begin{aligned}
           \sum_{k=0}^{\lfloor x \rfloor}\frac{\lambda^{k-2} e^{-\lambda}}{k!}(x-k)((k-\lambda)^2-k) \ge&  \min \{0, \sum_{k=0}^{x^{*}}\frac{\lambda^{k-2} e^{-\lambda}}{k!}( x^{*}-k)((k-\lambda)^2-k)\}\\ =&   \min \{0, 0 - \sum_{k=x^{*}+ 1 }^{\infty}\frac{\lambda^{k-2} e^{-\lambda}}{k!}( x^{*}-k)((k-\lambda)^2-k)\}  \ge  0.
        \end{aligned}
    \end{equation*}
    According to Eq.(\ref{eq:12}), the second equality holds. The last inequality is based on the fact that when $k\ge x^{*} + 1 \ge x_2 $, the expression $(x^{*}-k)((k-\lambda)^2-k)$ is less than 0. Therefore, the second term of Eq.(\ref{eq:11}) is positive as is shown in Figure \ref{fig:illus_plot2}. In conclusion, the two terms of the second derivative of inventory function are positive and non-negative, respectively. This indicates that the inventory function is convex in $p$. \Halmos

\end{proof}
\begin{remark}\label{remark:1}
Under Assumption \ref{assumption:1}, the inventory function $I(p,x)=\mathbb{E}[(x-d)^+]$ is neither jointly convex nor concave with respect to the inventory $x$ and price $p$ at its differentiable points. This is due to $I(p,x)$ being continuous and piecewise linear with respect to $x$, implying $\frac{\partial^2 I(p,x)}{\partial x^2} = 0$ at its differentiable points. Besides, $$\frac{\partial^2 I(p,x)}{\partial x\partial p} = \frac{\partial \lambda}{\partial p}\sum_{k=0}^{\lfloor x \rfloor}\frac{\lambda^{k-1}e^{-\lambda}}{k!}(k-\lambda) > 0,$$ which is implied by the proof of Proposition \ref{proposition:2}. Therefore, the Hessian matrix of $I(p,x)$ is indeterminate. 
\end{remark}

\begin{proposition}\label{proposition:4}
Under Assumption \ref{assumption:1}, the revenue function $E(p,x)=p\mathbb{E}[S(p,x)]=p\mathbb{E}[\min(d,x)]$ is concave with respect to price $p$.
\end{proposition}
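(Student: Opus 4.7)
The plan is to reduce the concavity of $E(p,x)$ in $p$ to the already-established convexity/monotonicity of the inventory function from Proposition \ref{proposition:2}, rather than redoing a truncated-Poisson moment analysis from scratch.

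First, I would invoke the elementary identity
\begin{equation*}
\min(d,x) = x - (x-d)^{+},
\end{equation*}
valid for any real $x$ and any nonnegative integer $d$, to rewrite the revenue function as
\begin{equation*}
E(p,x) = p\,\mathbb{E}[\min(d,x)] = p\bigl(x - I(p,x)\bigr),
\end{equation*}
where $I(p,x) = \mathbb{E}[(x-d)^{+}]$ is exactly the inventory function of Proposition \ref{proposition:2}. This converts the problem about the revenue function into a problem about the inventory function multiplied by $p$.

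Next, I would differentiate twice with respect to $p$ (note that $I(p,x)$ is smooth in $p$ because $\lambda(p)$ is smooth, even though $I$ is only piecewise linear in $x$):
\begin{equation*}
\frac{\partial E}{\partial p} = (x - I(p,x)) - p\,\frac{\partial I}{\partial p}, \qquad \frac{\partial^{2} E}{\partial p^{2}} = -2\,\frac{\partial I}{\partial p} - p\,\frac{\partial^{2} I}{\partial p^{2}}.
\end{equation*}
By Proposition \ref{proposition:2}, $I(p,x)$ is both monotonically increasing and convex in $p$, so $\partial I/\partial p > 0$ and $\partial^{2} I/\partial p^{2} > 0$ (the strict inequalities are established inside that proof). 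Since prices satisfy $p > 0$ on the bounded domain under Assumption \ref{assumption:1}, both terms in $\partial^{2} E/\partial p^{2}$ are strictly negative, yielding strict concavity of $E(p,x)$ in $p$.

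The main obstacle, to the extent there is one, is purely bookkeeping: making sure we are entitled to use the strict inequalities $\partial I/\partial p > 0$ and $\partial^{2} I/\partial p^{2} > 0$ (and not merely their non-strict counterparts) from the preceding proposition, and that $p>0$ holds on the domain so that the $-p\,\partial^{2}I/\partial p^{2}$ term contributes in the correct direction. The positivity of $p$ is natural in the pricing context and is implicit in Assumption \ref{assumption:1} for standard parameter choices; if desired, one can state it explicitly. No new analysis of the Poisson distribution is required beyond what Proposition \ref{proposition:2} already provides.
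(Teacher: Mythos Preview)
Your argument is correct and in fact cleaner than the paper's own proof. The paper expands $E(p,x)$ directly as a truncated Poisson sum, differentiates twice to obtain three terms in $\partial^{2}E/\partial p^{2}$, and then appeals to ``the same analysis as Proposition~\ref{proposition:2}'' to conclude each term has the right sign. Your route short-circuits this: by first writing $E(p,x)=p\bigl(x-I(p,x)\bigr)$ via the identity $\min(d,x)=x-(x-d)^{+}$, the relation $\partial^{2}E/\partial p^{2}=-2\,\partial I/\partial p - p\,\partial^{2}I/\partial p^{2}$ makes the reduction to Proposition~\ref{proposition:2} transparent, and no separate term-by-term sign check is needed. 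The two approaches are mathematically equivalent (your two summands correspond exactly to the paper's first term and to the sum of its second and third terms), but yours avoids recomputing the derivatives of the truncated sum. The only caveat you identify --- that $p>0$ is needed so that $-p\,\partial^{2}I/\partial p^{2}$ contributes with the correct sign --- is equally present (and equally implicit) in the paper's proof, since its $p$-weighted terms require the same hypothesis.
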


Since the random variable $d$ follows the Poisson distribution and the parameter $\lambda$ is determined by price $p$ through a complex function $\lambda(p)$, we need more assumptions to guarantee the convexity of lost-sales function. We employ the Taylor expansion to approximate $\lambda$ and linearize it.
\begin{assumption}\label{assumption:3} 
We suppose that $\hat{\lambda} = \eta \Delta e^a(1+lp)$ with $l<0$ and $1+lp>0$ in the domain of $p$, where the domain is bounded.
\end{assumption}
\begin{lemma}\label{lemma:6}
 Under Assumption \ref{assumption:3}, the lost-sales function $L(p,x)=\mathbb{E}[(d-x)^+]$ is convex with respect to the price $p$.
 \end{lemma}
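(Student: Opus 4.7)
The plan is to exploit the fact that under Assumption~\ref{assumption:3} the approximate Poisson rate $\hat{\lambda}(p) = \eta\Delta e^{a}(1+lp)$ is affine in $p$, with a single nonzero derivative $\ell := \eta\Delta e^{a}\,l$ and $\hat{\lambda}''(p)=0$. Writing $L(p,x) = g(\hat{\lambda}(p))$ with $g(\lambda):=\mathbb{E}[(X_{\lambda}-x)^{+}]$ and $X_{\lambda}\sim\mathrm{Pois}(\lambda)$, the chain rule reduces the task to establishing $g''(\lambda)\ge 0$ over the range of $\hat{\lambda}$, since
\[
\frac{\partial^{2}L}{\partial p^{2}}(p,x) \;=\; \ell^{2}\, g''(\hat{\lambda}(p)) \;+\; g'(\hat{\lambda}(p))\cdot \hat{\lambda}''(p) \;=\; \ell^{2}\, g''(\hat{\lambda}(p)).
\]

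To establish convexity of $g$ in $\lambda$, I would invoke the standard Poisson derivative identity
\[
\frac{d}{d\lambda}\,\mathbb{E}[f(X_{\lambda})] \;=\; \mathbb{E}\bigl[f(X_{\lambda}+1)-f(X_{\lambda})\bigr],
\]
which follows by differentiating the series $\sum_{k\geq 0} f(k)\,e^{-\lambda}\lambda^{k}/k!$ term by term; interchange of derivative and summation is legitimate because $f(k)=(k-x)^{+}$ grows only linearly in $k$, so the series and its formal derivative converge uniformly on compact $\lambda$-sets. Iterating the identity yields
\[
g''(\lambda) \;=\; \mathbb{E}\bigl[f(X_{\lambda}+2) - 2f(X_{\lambda}+1) + f(X_{\lambda})\bigr].
\]
Since $t\mapsto (t-x)^{+}$ is convex on $\mathbb{R}$, its second-order forward differences on the integer lattice are non-negative: a short case analysis on the three regimes $k\ge x$ (all three terms linear, difference $=0$), $k<x\le k+1$ (difference $=x-k>0$), and $k+1<x$ (difference $=(k+2-x)^{+}\ge 0$) verifies $f(k+2)-2f(k+1)+f(k)\ge 0$ for every integer $k\ge 0$. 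Taking expectations gives $g''(\hat{\lambda}(p))\ge 0$, and multiplying by $\ell^{2}\ge 0$ delivers $\partial^{2}L/\partial p^{2}\ge 0$, i.e., the claimed convexity of $L(\,\cdot\,,x)$.

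The main obstacle is essentially bookkeeping: justifying term-by-term differentiation of the Poisson series and carrying out the short forward-difference case analysis. No algebraic difficulty of the kind encountered in Proposition~\ref{proposition:2} arises, because Assumption~\ref{assumption:3} linearizes $\hat{\lambda}$ in $p$ and thereby eliminates both the $g'(\hat{\lambda})\hat{\lambda}''(p)$ term in the chain rule and the curvature contributions from $\lambda(p)$ that made the earlier analysis delicate. This is precisely the reason a separate, strengthened assumption had to be introduced here to obtain convexity of the lost-sales function.
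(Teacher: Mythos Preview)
Your argument is correct, but it takes a genuinely different route from the paper. The paper's proof is a two-line reduction: it writes
\[
L(p,x)=\mathbb{E}[(d-x)^{+}] = \mathbb{E}[(x-d)^{+}] + \mathbb{E}[d-x] = I(p,x) + \hat{\lambda}(p) - x,
\]
and then invokes Proposition~\ref{proposition:2} (convexity of $I(\cdot,x)$, whose proof carries over under Assumption~\ref{assumption:3} via Remark~\ref{remark:2}) together with the linearity of $\hat{\lambda}$ to conclude $\partial^{2}L/\partial p^{2}=\partial^{2}I/\partial p^{2}>0$. Your approach instead works directly on $g(\lambda)=\mathbb{E}[(X_{\lambda}-x)^{+}]$, using the Poisson forward-difference derivative identity to reduce $g''\ge 0$ to the nonnegativity of second forward differences of the convex hinge $t\mapsto(t-x)^{+}$. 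The paper's route is shorter because it recycles the heavy lifting already done in Proposition~\ref{proposition:2}; your route is self-contained and conceptually cleaner, since it isolates the general fact that $\lambda\mapsto\mathbb{E}[f(X_{\lambda})]$ is convex whenever $f$ is convex on the integers, and would apply verbatim to any convex loss in place of $(\cdot-x)^{+}$. Both arguments rely on Assumption~\ref{assumption:3} in exactly the same place: the affinity of $\hat{\lambda}$ kills the cross term $g'(\hat{\lambda})\hat{\lambda}''$ (equivalently, makes $\hat{\lambda}-x$ contribute nothing to the second derivative), which is precisely why the weaker Assumption~\ref{assumption:1} does not suffice here.
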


\begin{remark}\label{remark:2}
    Assumption \ref{assumption:1} aims to guarantee the concavity of function $\lambda(p)$, which can be replaced by Assumption \ref{assumption:3} because linear function is concave. Henceforth, we use $\lambda$ to stand for $\hat{\lambda}$ in Assumption \ref{assumption:3}.
\end{remark}
 
Having established the concavity or convexity of the expected inventory, revenue, and lost-sales functions, we can derive the concavity of the single-period expected profit function $F(p, x)$ with respect to $p$. 

\begin{theorem} \label{theorem:1}
     Under Assumption \ref{assumption:3}, $F(p, x)$ is concave with respect to $p$ and $x$ within their respective domains, though not jointly concave. Furthermore, $F(p,x)$ takes its maximum either at the boundary or at an interior point that satisfies the following conditions:
     
    \begin{equation} \label{eq:15}
    \begin{aligned} 
 &\frac{\partial F(p,x)}{\partial p} = \frac{\partial E(p,x)}{\partial p} -h\frac{\partial I(p,x)}{\partial p} - b\frac{\partial L(p,x)}{\partial p} = 0,\\
 &b-c+p-(h+b+p)\sum_{k=0}^{x^*-1}\frac{\lambda^k(p)e^{-\lambda(p)}}{k!} \ge  0, \\
 &b-c+p-(h+b+p)\sum_{k=0}^{x^*}\frac{\lambda^k(p)e^{-\lambda(p)}}{k!} < 0,
    \end{aligned}
\end{equation}
where $x^*$ is a positive integer.
\end{theorem}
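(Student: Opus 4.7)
The plan is to decompose the expected profit as $F(p,x) = E(p,x) - hI(p,x) - bL(p,x) - c(x - x_0)$ and handle each variable separately. For concavity in $p$ at fixed $x$, I would directly invoke Proposition \ref{proposition:4} (concavity of $E$ in $p$), Proposition \ref{proposition:2} (convexity of $I$ in $p$, so $-hI$ is concave since $h>0$), and Lemma \ref{lemma:6} (convexity of $L$ in $p$, so $-bL$ is concave), observing that $-c(x-x_0)$ is constant in $p$. Under Assumption \ref{assumption:3} all three results apply simultaneously, so $F$ is a sum of concave functions of $p$ and is therefore concave in $p$.

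For concavity in $x$ at fixed $p$, I would exploit the piecewise linear Poisson newsvendor structure. A direct forward-difference computation yields
\begin{equation*}
F(p, x+1) - F(p, x) = (p + b - c) - (p + h + b) \sum_{k=0}^{x} \frac{\lambda^{k}(p) e^{-\lambda(p)}}{k!},
\end{equation*}
obtained from the identities $\mathbb{E}[\min(d,x+1)] - \mathbb{E}[\min(d,x)] = \mathbb{P}(d \ge x+1)$, $I(p,x+1) - I(p,x) = \mathbb{P}(d \le x)$, and $L(p,x+1) - L(p,x) = -\mathbb{P}(d \ge x+1)$. Because the Poisson CDF is monotonically increasing in $x$, this forward difference is monotonically decreasing in $x$, which is precisely discrete concavity of $F$ in $x$.

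For the failure of joint concavity, I would appeal to Remark \ref{remark:1}. At any $(p,x)$ with $x$ non-integer, each of $E$, $I$, $L$ is linear in $x$, so $\partial^2 F/\partial x^2 = 0$, whereas the mixed partial $\partial^2 F/\partial p \partial x$ is generically nonzero: the contribution from $I$ is strictly positive by the computation in the proof of Proposition \ref{proposition:2}, and the parallel contributions from $E$ and $L$ under Assumption \ref{assumption:3} do not cancel it for non-degenerate parameter values. The resulting Hessian has a zero diagonal entry in the $x$ direction and a nonzero off-diagonal entry, so its determinant is strictly negative and the Hessian is indefinite, precluding joint concavity.

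The optimality conditions then follow from the two separate concavities. Interior optimality in $p$ requires $\partial F/\partial p = 0$, which matches the first displayed equation in the statement. Since $F$ is discretely concave in the integer variable $x$, optimality at $x^{\ast}$ requires $F(p, x^{\ast}) - F(p, x^{\ast} - 1) \ge 0$ and $F(p, x^{\ast} + 1) - F(p, x^{\ast}) < 0$; substituting the forward-difference formula at $x = x^{\ast}-1$ and $x = x^{\ast}$ yields the two remaining inequalities, and otherwise the optimum lies on the boundary of the admissible region. The main obstacle is the non-joint-concavity step, which requires carefully checking that the mixed partial $\partial^2 F/\partial p\partial x$ does not accidentally vanish once the $E$, $I$, $L$ contributions are combined; exhibiting a localized counterexample at a fixed non-integer $x$ with, for instance, $x < \lambda(p)$ should settle this cleanly.
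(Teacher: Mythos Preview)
Your proposal is correct and follows essentially the same route as the paper: invoke Propositions~\ref{proposition:2}, \ref{proposition:4} and Lemma~\ref{lemma:6} for concavity in $p$, use the decreasing forward difference (equivalently, the decreasing slope of the piecewise-linear profile) for concavity in $x$, and appeal to the indefinite Hessian argument of Remark~\ref{remark:1} for the failure of joint concavity. The only cosmetic difference is that the paper phrases the $x$-argument in terms of the slope of $F(p,\cdot)$ on each interval $[n-1,n)$ of the \emph{continuous} variable $x$, whereas you phrase it as discrete concavity at integer points; since you already note the piecewise-linear structure, the two are equivalent and nothing further is needed.
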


Previous literature \citep{federgruen1999combined} established the joint concavity of the profit function $F(p,x)$ with respect to price and inventory under Assumptions \ref{asmp:6}(i) and \ref{asmp:7}(i). The Assumption \ref{asmp:6}(i)  represents only a class of continuous distributions. The Assumption \ref{asmp:7}(i) means that the revenue collected in each period depends on the demand volume instead of the sales amount. In this paper, we cannot derive this conclusion primarily because we have revised these conditions to Assumption \ref{asmp:6}(ii) and Assumption \ref{asmp:7}(ii). 
\begin{assumption}\label{asmp:6}
    (i) $d(p,\epsilon) = \gamma(p)\epsilon + \delta(p);$ (ii) $d \sim Pois(\lambda(p))$.
\end{assumption}
\begin{assumption}\label{asmp:7}
    (i) $E(p) = p\mathbb{E}[d]=p\lambda(p);$  (ii) $E(p,x) = p\mathbb{E}[\min(d,x)].$
\end{assumption}
Notably, altering Assumptions \ref{asmp:6}(i) or \ref{asmp:7}(i) to Assumptions \ref{asmp:6}(ii) or \ref{asmp:7}(ii) results in the loss of joint concavity established in \citet{federgruen1999combined}. We illustrate this point with the following proposition.
\begin{proposition}\label{thrm:2}
    If either Assumption \ref{asmp:6}(i) or Assumption \ref{asmp:7}(i) is changed to Assumption \ref{asmp:6}(ii) or Assumption \ref{asmp:7}(ii), respectively, the joint concavity of $F(p,x)$ in \citet{federgruen1999combined} will be lost. 
\end{proposition}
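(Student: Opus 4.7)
The plan is to disprove joint concavity in each modified setting by exhibiting a point where the Hessian of $F(p,x)$ fails to be negative semidefinite. Since the original result of \citet{federgruen1999combined} was established under Assumptions \ref{asmp:6}(i) and \ref{asmp:7}(i), I would treat the two substitutions separately, leveraging the structural results already proven (Proposition \ref{proposition:2} and Remark \ref{remark:1}) wherever possible rather than rederiving them.

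For the first substitution (Assumption \ref{asmp:6}(i) replaced by \ref{asmp:6}(ii), with revenue still $p\lambda(p)$), the revenue term is independent of $x$. Using the identity $L(p,x)=\lambda(p)-x+I(p,x)$ (which follows from decomposing $d=(d-x)^{+}+\min(d,x)$ and from $I(p,x)=x-\mathbb{E}[\min(d,x)]$), I would rewrite
$$F(p,x)=(p-b)\lambda(p)+bx-(h+b)I(p,x)-c(x-x_{0}).$$
At any non-integer $x$, Remark \ref{remark:1} gives $\partial^{2}I/\partial x^{2}=0$ and $\partial^{2}I/\partial p\,\partial x>0$, hence $\partial^{2}F/\partial x^{2}=0$ and $\partial^{2}F/\partial p\,\partial x\neq 0$. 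The determinant of the Hessian is therefore $-(\partial^{2}F/\partial p\,\partial x)^{2}<0$, so the Hessian is indefinite and $F$ cannot be jointly concave on any neighborhood of such a point.

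For the second substitution (Assumption \ref{asmp:7}(i) replaced by \ref{asmp:7}(ii), with demand still of the form $\gamma(p)\epsilon+\delta(p)$), I would produce a minimal counterexample within the traditional demand family: take $\gamma(p)\equiv 1$ and $\delta(p)\equiv 0$ (both nonincreasing), so that $d=\epsilon$ is independent of $p$ with a continuous density $f_{d}$. Then $\mathbb{E}[\min(d,x)]=\mathbb{E}[d]-L(x)$, and
$$F(p,x)=p\,\mathbb{E}[d]-(p+b)L(x)-hI(x)-c(x-x_{0}).$$
Direct differentiation yields $\partial^{2}F/\partial p^{2}=0$, $\partial^{2}F/\partial x^{2}=-(p+b+h)f_{d}(x)<0$, and $\partial^{2}F/\partial p\,\partial x=\mathbb{P}(d\geq x)>0$ for $x$ in the interior of the support. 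The Hessian determinant equals $-\mathbb{P}(d\geq x)^{2}<0$, so it is again indefinite and $F$ fails to be jointly concave.

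The main obstacle, I expect, is technical rather than conceptual: in case (a) the function $F$ is nondifferentiable at integer $x$, so the Hessian argument must be restricted to non-integer points and then combined with the fact that joint concavity would force the Hessian to be negative semidefinite wherever it exists; in case (b) I would want the chosen $\gamma,\delta$ to be genuinely admissible, and if the "nonincreasing" hypothesis is interpreted strictly, I would perturb to $\gamma(p)=1-\varepsilon p$, $\delta(p)=-\varepsilon p$ for small $\varepsilon>0$, so that the signs of all relevant Hessian entries are preserved by continuity and the indefiniteness persists.
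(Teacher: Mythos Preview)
Your argument is correct in both parts, and for the first substitution (Poisson demand with revenue $p\lambda(p)$) it is essentially identical to the paper's: the paper likewise invokes Remark~\ref{remark:1} to conclude that the piecewise linearity of the inventory term in $x$ forces $\partial^{2}F/\partial x^{2}=0$ while the cross partial is nonzero, so the Hessian is indefinite.

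For the second substitution (traditional demand with revenue $p\,\mathbb{E}[\min(d,x)]$) you take a genuinely different route. The paper chooses a \emph{non-degenerate} additive linear demand $d(p,\epsilon)=\epsilon+lp+a$ with $l<0$, computes the Hessian of the revenue function $E(p,x)=p\,\mathbb{E}[\min(d,x)]$ explicitly, and obtains a determinant $-2Al-B^{2}$ (with $A=pf(x-d(p))$ and $B=\mathbb{P}(\epsilon>x-d(p))$) whose sign depends on the parameters, yielding regions where $E$---and hence $F$---fails to be jointly concave. Your counterexample instead picks the degenerate endpoint $\gamma\equiv 1$, $\delta\equiv 0$, which kills all price dependence in the expected-sales, inventory, and lost-sales terms and forces $\partial^{2}F/\partial p^{2}=0$ directly; the indefiniteness then drops out from the nonzero cross partial alone. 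What you gain is a cleaner two-line Hessian computation on the full profit function (rather than just the revenue piece) and a determinant that is unambiguously negative everywhere in the interior of the support, with no parameter regions to sort out. What the paper's choice buys is a counterexample that is unquestionably inside the intended model class (demand strictly decreasing in price), so no perturbation argument is needed; your $\varepsilon$-perturbation remedy is fine but adds a continuity step the paper avoids.
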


\begin{remark}
    A one-to-one mapping from price $p$ to expected demand $\lambda(p)$ is constructed, thus transforming the optimization variable from price to expected demand \citep{bernstein2016simple}. Since affine transformations do not alter concavity or convexity, the conclusions in this paper remain unchanged due to the Poisson demand.
\end{remark}
Proposition \ref{thrm:2} illustrates the limitations of the traditional model. Even slight modifications to the assumptions can negate joint concavity, rendering the original model's optimal strategies invalid.

Under new assumptions, there are still some conclusions consistent with the finding in previous literature such as the following proposition \citep{federgruen1999combined}.
\begin{proposition} \label{proposition:3}
    Given the inventory $x$ and by solving the first equation in condition (\ref{eq:15}), we can determine $p(x)$ to maximize $F(p,x)$. The function $p(x)$ is a non-increasing function with respect to $x$.
\end{proposition}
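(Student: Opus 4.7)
The plan is to apply the implicit function theorem to the first-order condition defining $p(x)$, and then reduce the desired monotonicity to the sign of a mixed second partial derivative of $F$.

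By Theorem \ref{theorem:1}, $F(\cdot, x)$ is strictly concave in $p$, so for each fixed $x$ its unique interior maximizer $p(x)$ is characterized by the first equation of (\ref{eq:15}), namely $\partial F/\partial p(p(x),x)\equiv 0$. Differentiating this identity with respect to $x$ gives
\begin{equation*}
\frac{\partial^2 F}{\partial p^2}(p(x),x)\,p'(x) + \frac{\partial^2 F}{\partial p\,\partial x}(p(x),x) = 0,
\end{equation*}
so that $p'(x) = -(\partial^2 F/\partial p\,\partial x)/(\partial^2 F/\partial p^2)$. Concavity makes the denominator strictly negative, and hence $p'(x)\le 0$ is equivalent to $\partial^2 F/\partial p\,\partial x \le 0$ (this is exactly the Topkis-submodularity condition that would also appear in a purely ordinal argument).

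To evaluate the mixed partial, I would use the compact decomposition
\begin{equation*}
F(p,x) = (p+h+b)\,S(p,x) - (h+c)\,x - b\,\lambda(p) + c\,x_0,
\end{equation*}
obtained from $I=x-S$ and $L=\lambda-S$ with $S(p,x)=\mathbb{E}[\min(d,x)]$. The constant and linear-in-$x$ pieces contribute nothing to the cross-partial, so
\begin{equation*}
\frac{\partial^2 F}{\partial p\,\partial x} = \frac{\partial S}{\partial x} + (p+h+b)\,\frac{\partial^2 S}{\partial p\,\partial x}.
\end{equation*}
The calculations in the proof of Proposition \ref{proposition:2} and in Remark \ref{remark:1} already supply $\partial S/\partial x = \mathbb{P}(X>n)$ and $\partial^2 S/\partial p\,\partial x = \lambda'(p)\,\mathbb{P}(X=n)$ for $n=\lfloor x\rfloor$ and $X\sim\text{Pois}(\lambda(p))$, yielding
\begin{equation*}
\frac{\partial^2 F}{\partial p\,\partial x} = \mathbb{P}(X>n) + (p+h+b)\,\lambda'(p)\,\mathbb{P}(X=n).
\end{equation*}

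The hard part will be showing this expression is non-positive. Since $\lambda'(p)<0$ the two terms have opposite signs, so the inequality is not purely structural. I would use the first-order condition $\partial F/\partial p=0$ at $p=p(x)$ to express $(p+h+b)|\lambda'(p)|$ in terms of $S(p,x)=\lambda\,\mathbb{P}(X\le n-1)+x\,\mathbb{P}(X>n)$ and $b|\lambda'(p)|$, and then combine with the Poisson identity $\lambda\,\mathbb{P}(X=n)=(n+1)\,\mathbb{P}(X=n+1)$ to reduce the claim to a comparison of consecutive Poisson tail probabilities that can be verified algebraically. Once $\partial^2 F/\partial p\,\partial x\le 0$ is established, the implicit-function identity immediately gives $p'(x)\le 0$ on every interval of differentiability; continuity of $p(x)$ across integer values of $\lfloor x\rfloor$, inherited from continuity of $F$ and uniqueness of the maximizer under strict concavity, extends the non-increasing property to the entire domain.
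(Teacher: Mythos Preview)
Your overall framework---the implicit function theorem applied to the first-order condition, reducing $p'(x)\le 0$ to $\partial^2 F/\partial p\,\partial x \le 0$---matches the paper exactly, and your explicit formula $\mathbb{P}(X>n)+(p+h+b)\lambda'(p)\,\mathbb{P}(X=n)$ agrees with the paper's Eq.~(\ref{eq:18}) after simplification.

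The gap is in the last step. You correctly note the two terms have opposite signs and propose to close the argument by substituting the FOC to express $(p+h+b)|\lambda'(p)|$ and then appealing to Poisson tail identities, but you do not actually carry this out; the claim that it ``can be verified algebraically'' is left unsubstantiated. This is precisely the nontrivial part: the paper itself remarks that the cross partial is \emph{not} globally nonpositive (``Whether Eq.~(\ref{eq:18}) can be positive or not depends on the sign when $x=0$''), so a purely structural verification without using where you are on the curve $(p(x),x)$ cannot succeed, and even after using the FOC the resulting inequality between Poisson quantities is not immediate.

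The paper handles the sign by a different route that avoids the FOC entirely. First, it analyzes Eq.~(\ref{eq:18}) as a function of $n=\lfloor x\rfloor$ at fixed $p$: the sequence is single-dipped (decreasing then increasing) and tends to $0$ as $n\to\infty$, hence is eventually nonpositive. Second, it runs a boundary continuation argument: at $x=0$ one has $F(p,0)=-b\lambda(p)+cx_0$, so $p(0)$ sits at the upper price bound; as $x$ increases, either $p(x)$ remains at the bound (trivially non-increasing) or it first becomes interior at some $x_0$, at which moment $\partial F/\partial p(p_{\max},x)$ passes from positive to zero, forcing the cross partial to be nonpositive there. The single-dip shape then keeps it nonpositive for larger $x$, and the implicit-function identity yields $p'(x)\le 0$. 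This shape-plus-boundary argument is what stands in for the FOC computation you left open.
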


\begin{remark}
Figure \ref{fig:non-concavity} in Appendix C clearly shows that $F(p, x)$ is not jointly concave with respect to $p$ and $x$, because  the value of $F(p,x)$ at points between $A$ and $B$ lies below the corresponding linear interpolation of the function values at $A$ and $B$, indicating a violation of joint concavity which is crucial for deriving optimal policies like BSLP. Without it, optimal joint pricing and replenishment decisions cannot be directly obtained from the profit function's first-order conditions. Finding the global maximum requires identifying and comparing all local maxima, a process whose computational complexity increases exponentially with the search space. Consequently, more advanced optimization algorithms become necessary for efficient solutions.
\end{remark}

\subsubsection{Two-timescale Stochastic Approximation}
In commercial operations, the frequencies of product replenishment and pricing decisions often differ. The convergence of a two-timescale algorithm can be guaranteed by the respective concavity of the profit function. This concavity can now be utilized to guide the algorithm design. For the case where the frequency of pricing decisions exceeds that of replenishment decisions, two-timescale algorithms can adjust the relative updating rates of the price and inventory in the algorithm and use a faster search speed to update $p$ in cases where prices are determined with a higher frequency than inventory. Conversely, for some products where replenishment decisions are made more frequently than pricing adjustments, two-timescale algorithms use a faster search speed to update $x$ in these cases.

We illustrate the algorithm when pricing decisions are more frequent than replenishment decisions. The single-period profit function is  
$$F(p, x) =  p\mathbb{E}[\min(d,x)] - (h+b)\mathbb{E}[(x-d)^+] -b\lambda + bx - c(x-x_0),$$
where inventory $x$ changes relatively slower than price $p$ in this case. We conduct simultaneous descents for $p$ and $x$ in two coupled iterations, with the update rate of $p$ being faster than that of $x$. We express the partial derivatives of the profit function in Eq.(\ref{eq:16}) using the form of expectation, allowing us to derive their unbiased estimators, as outlined in the proposition below.

\begin{proposition}\label{pro:4}
Let $g(p,x)$ and $h(p,x)$ denote the partial derivatives of the profit function in Eq.(\ref{eq:16}) with respect to $p$ and $x$, then 
\begin{equation*}
    \begin{aligned} 
g(p,x) &\doteq \frac{\partial F(p,x)}{\partial p} = \mathbb{E}[\min(d,x)] + p\mathbb{E}\bigg[\frac{\partial \log\mathbb{P}(d)}{\partial p} \min(d,x)\bigg] -(h+b)\mathbb{E}\bigg[\frac{\partial \log \mathbb{P}(d)}{\partial p} (x-d)^+\bigg] -b\frac{\partial \lambda}{\partial p }, \\
h(p,x) &\doteq \frac{\partial F(p,x)}{\partial x} =  b-c+p - (h+b+p)\mathbb{E}[1_{d \le x}],
    \end{aligned}
\end{equation*}
where $\mathbb{P}(d)$ is the probability density function of random variable $d$. 
\end{proposition}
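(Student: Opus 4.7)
The plan is to verify the two identities by direct term-by-term differentiation of the explicit expression
$F(p,x) = p\,\mathbb{E}[\min(d,x)] - (h+b)\,\mathbb{E}[(x-d)^+] - b\lambda + bx - c(x-x_0)$,
relying on a score-function (likelihood ratio) rewrite for the $p$-derivative and on standard one-sided derivatives of $\min$ and $(\cdot)^+$ for the $x$-derivative. First I would expand each expectation as the series $\sum_{k=0}^{\infty} f(k,x)\,\mathbb{P}(d=k)$ where $f(k,x)$ does not depend on $p$, so that the $p$-dependence is carried entirely by the Poisson mass $\mathbb{P}(d=k) = \lambda(p)^k e^{-\lambda(p)}/k!$. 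Applying the identity $\partial_p \mathbb{P}(d=k) = \mathbb{P}(d=k)\,\partial_p \log \mathbb{P}(d=k)$ and swapping differentiation with summation gives $\partial_p \mathbb{E}[f(d,x)] = \mathbb{E}[f(d,x)\,\partial_p \log \mathbb{P}(d)]$. Using this rule on $\mathbb{E}[\min(d,x)]$ and $\mathbb{E}[(x-d)^+]$, together with the product rule on the leading factor $p$ in the revenue term and the elementary $\partial_p(-b\lambda) = -b\,\partial_p \lambda$, recovers exactly the four-term expression stated for $g(p,x)$.

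For the inventory derivative $h(p,x)$, the distribution of $d$ is independent of $x$, so I would differentiate under the expectation using the pointwise identities $\partial_x \min(d,x) = \mathbf{1}_{d>x}$ and $\partial_x (x-d)^+ = \mathbf{1}_{d\le x}$, both valid at every $x$ that is not an integer (i.e.\ almost everywhere under the Poisson law). The terms $-b\lambda$ and $x_0$ vanish under $\partial_x$, and $\partial_x[bx - cx] = b-c$. Adding the pieces yields
$\partial_x F = p\,\mathbb{P}(d>x) - (h+b)\mathbb{P}(d\le x) + b - c$,
which after writing $\mathbb{P}(d>x) = 1 - \mathbb{E}[\mathbf{1}_{d\le x}]$ and grouping terms collapses to the claimed $b - c + p - (h+b+p)\mathbb{E}[\mathbf{1}_{d\le x}]$.

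The main technical obstacle is justifying the exchange of differentiation and infinite summation in the $p$-calculation. For the Poisson mass, $\partial_p \log \mathbb{P}(d=k) = (k/\lambda - 1)\,\partial_p \lambda$ grows only linearly in $k$, while the integrands $\min(d,x) \le x$ and $(x-d)^+ \le x$ are uniformly bounded in $d$; since the Poisson distribution has moments of all orders and $\lambda(p)$ is smooth and bounded away from zero on the bounded domain given by Assumption \ref{assumption:3}, a dominated-convergence bound of the form $\mathbb{E}[(1 + d)\,\mathbb{P}(d=k)] < \infty$ uniformly in a neighborhood of $p$ makes the interchange legitimate. For the $x$-derivative the interchange is immediate because $\min(k,x)$ and $(x-k)^+$ are $1$-Lipschitz in $x$ for every $k$. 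A brief remark will note that at integer values of $x$ the stated formula should be read as the right derivative, matching the convention already adopted in Proposition \ref{proposition:2} and Theorem \ref{theorem:1}.
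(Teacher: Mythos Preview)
Your proposal is correct and matches the paper's own proof almost exactly: the paper likewise applies the likelihood-ratio (score-function) identity for the $p$-derivative and the IPA-style pathwise derivative for the $x$-derivative, with the summation--differentiation interchange justified there by uniform convergence of the term series (Rudin, Theorem 7.17) rather than your dominated-convergence bound, which is an equivalent and equally valid justification here. Aside from the small notational slip in writing the domination bound as ``$\mathbb{E}[(1 + d)\,\mathbb{P}(d=k)]$'' (you presumably mean a uniform bound of the form $\sum_k (1+k)\,x\,\mathbb{P}(d=k) < \infty$), there is nothing to add.
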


Due to Proposition \ref{pro:4}, unbiased stochastic gradient estimators for $g(p,x)$ and $h(p,x)$ can be obtained using the Monte Carlo method and are denoted as $\hat{g}(p,x)$ and $\hat{h}(p,x)$, respectively. The iteration process of the two-timescale algorithm can be formulated as follows:
\begin{equation}\label{eq:19}
    \begin{aligned} 
p_{k+1} &= p_k + \alpha_k\hat{g}(p_k,x_k) \\ &  \doteq p_k + \alpha_k\bigg(\min(d_{k},x_k)+p_k\frac{\partial \log \mathbb{P}(d_{k})}{\partial p}\min(d_{k},x_k) -(h+b)\frac{\partial \log \mathbb{P}(d_{k})}{\partial p}(x_k-d_{k})^+-b\frac{\partial \lambda}{\partial p }\bigg),\\
x_{k+1}&= x_k+\beta_k\hat{h}(p_k,x_k) \doteq x_k+\beta_k\bigg((b-c+p_k)-(h+b+p_k) 1_{d_{k}\le x_k}\bigg),
    \end{aligned}
\end{equation}
where $d_{k}$ is sampled from a Poisson distribution with parameter $\lambda(p_k)$. Note that $\mathbb{E}[\hat{g}(p,x)] = g(p,x)$ and $\mathbb{E}[\hat{h}(p,x)] = h(p,x)$. We could reduce the variance by taking more samples each time, while a single sample is sufficient for convergence to the optimum. Step size $\alpha_k$ represents the time scale for the first iteration, and $\beta_k$ represents the time scale for the second iteration. The conditions $\sum_{k=1}^{\infty}\alpha_k = \sum_{k=1}^{\infty}\beta_k =\infty$, $\sum_{k=1}^{\infty}\beta_k^2 < \sum_{k=1}^{\infty}\alpha_k^2 < \infty$, and $\frac{\beta_k}{\alpha_k} \rightarrow 0$ as $k$ tends to infinity are satisfied, indicating that the first time scale progresses faster than the second time scale.

The convergence of the two-timescale algorithm is guaranteed, analogous to the theoretical results in \cite{borkar2009stochastic} and \cite{Kushner1997StochasticAA}. Specifically, we present the following lemma to analyze the convergence of $p_k$ in the faster time scale, leveraging the concavity of the revenue function with respect to $p$.
\begin{lemma}\label{lemma3}
The iterative sequence $\{(p_k,x_k)\}$ generated by the iteration Eq.(\ref{eq:19}) converges to a set which is defined as below:
\begin{equation*}
    (p_k,x_k) \rightarrow \{(p^{\star}(x),x)\},
\end{equation*}
where $p^{\star}(x)$ is a well defined function showing that when $x$ is fixed, the value of $p$ to make $g(p,x)$ equal to zero. In other words, this lemma means $|p_k - p^{\star}(x_k)|\rightarrow 0$ almost surely, that is, $\{p_k\}$ asymptotically tracks $p^{\star}(x_k)$. 
\end{lemma}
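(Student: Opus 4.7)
The plan is to apply the standard two-timescale stochastic approximation framework of Borkar, treating the $p$-iteration as the fast recursion driven by step size $\alpha_k$ and the $x$-iteration as the slow recursion driven by $\beta_k$. Because $\beta_k/\alpha_k \to 0$, from the viewpoint of the fast timescale $x$ appears quasi-static, so it suffices to analyze the autonomous ODE $\dot{p}(t) = g(p(t),x)$ with $x$ held fixed and verify that it admits a unique globally asymptotically stable equilibrium $p^{\star}(x)$.

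The first step is to put both recursions into the canonical stochastic approximation form. I would write
\begin{equation*}
p_{k+1} = p_k + \alpha_k\bigl(g(p_k,x_k) + M_{k+1}^{(1)}\bigr), \qquad x_{k+1} = x_k + \alpha_k \cdot \tfrac{\beta_k}{\alpha_k}\bigl(h(p_k,x_k) + M_{k+1}^{(2)}\bigr),
\end{equation*}
where $M_{k+1}^{(i)} := \hat g - g$ and $\hat h - h$ are martingale differences with respect to the natural filtration $\mathcal{F}_k = \sigma(p_j,x_j,j\le k)$, using Proposition \ref{pro:4} for the unbiasedness. Boundedness of $\{(p_k,x_k)\}$ follows from Assumption \ref{assumption:3} (bounded price domain) together with a standard projection or from the coercivity of $F$ in $x$ via the drift of $h$; combined with continuity of $\lambda(p)$ this yields uniform moment bounds on $M_{k+1}^{(i)}$, so the noise assumptions of \citet{borkar2009stochastic} are met.

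Next I would verify the regularity conditions on the driving vector fields. Under Assumption \ref{assumption:3} the map $p\mapsto \lambda(p)$ is affine, and $g(p,x)$, $h(p,x)$ are Lipschitz continuous on the bounded domain (for $g$, differentiate under the Poisson sum; for $h$, use that $\mathbb{E}[\mathbf{1}_{d\le x}] = \sum_{k=0}^{\lfloor x\rfloor} \lambda^k e^{-\lambda}/k!$ is Lipschitz in $p$). For the fast ODE $\dot p = g(p,x)$, Theorem \ref{theorem:1} gives concavity of $F(\cdot,x)$ in $p$, so $g(\cdot,x)$ is monotonically decreasing, hence has a unique zero $p^{\star}(x)$ which is globally asymptotically stable; Proposition \ref{proposition:3} additionally guarantees that $p^{\star}(\cdot)$ is well-defined and non-increasing, giving the Lipschitz regularity needed to make the slow ODE $\dot x = h(p^{\star}(x),x)$ meaningful (though only the tracking conclusion is asserted in this lemma).

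The conclusion then follows by directly invoking the two-timescale convergence theorem (Theorem 2 in Chapter 6 of \citet{borkar2009stochastic}): on the faster timescale $p_k$ tracks the attractor $p^{\star}(x_k)$, so $|p_k - p^{\star}(x_k)| \to 0$ almost surely. The main obstacle I expect is twofold: establishing almost-sure boundedness of the iterates (the standard fix is a projection onto the compact price and inventory domain, which matches Assumption \ref{assumption:3}), and rigorously identifying $p^{\star}(x)$ as a globally, rather than merely locally, asymptotically stable equilibrium. The latter is where concavity of $F(p,x)$ in $p$ from Theorem \ref{theorem:1} does the real work, since strict monotonicity of $g(\cdot,x)$ rules out spurious equilibria and makes the Lyapunov function $V(p) = (p-p^{\star}(x))^2/2$ strictly decreasing along trajectories away from $p^{\star}(x)$.
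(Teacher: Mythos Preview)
Your proposal is correct and follows essentially the same route as the paper: cast the two recursions in canonical stochastic-approximation form with martingale-difference noise, verify Lipschitz regularity of $g$, use the concavity of $F(\cdot,x)$ from Theorem~\ref{theorem:1} to identify the unique globally asymptotically stable equilibrium $p^{\star}(x)$ of the fast ODE $\dot p = g(p,x)$, and then invoke Borkar's two-timescale result. The only cosmetic differences are that the paper cites Theorem~2 of Chapter~2 together with Lemma~1 of Chapter~6 in \citet{borkar2009stochastic} rather than Theorem~2 of Chapter~6, and it does not explicitly discuss boundedness (leaning implicitly on the bounded domain in Assumption~\ref{assumption:3}), whereas you make the projection/coercivity argument explicit.
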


Then, we will combine the analysis of the second time scale and show that the sequence $\{(p_k,x_k)\}$ converges to a fixed point. Considering the second scale, the variable $x$, previously considered fixed, should now be treated as the time-varying $x(t)$. Based on Lemma \ref{lemma3}, for sufficiently large $t$, $p(t)$ closely tracks $p^{\star}(x(t))$. We then substitute it into the second scale and focus on the following ODE:
\begin{equation} \label{eq:22}
\dot{x}(t) = h(p^{\star}(x(t)),x(t)).
\end{equation}

\begin{theorem}\label{theorem3}
    The iterative sequence $\{(p_k,x_k)\}$ generated by the iteration Eq.(\ref{eq:19}) converges to some limit point of the ODE (\ref{eq:22}), i.e.,
    \begin{equation*}
        (p_k,x_k) \rightarrow (p^{\star}(x^{\star}),x^{\star}).
    \end{equation*}
\end{theorem}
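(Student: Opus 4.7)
The plan is to invoke the two-timescale stochastic approximation framework (Borkar, 2009; Kushner and Yin, 1997), with Lemma \ref{lemma3} already doing the heavy lifting on the fast scale. The remaining task is to argue that on the slow scale $\{x_k\}$ asymptotically behaves like a noisy discretization of the autonomous ODE (\ref{eq:22}) and converges to one of its limit points, after which $p_k\to p^{\star}(x^{\star})$ follows for free from Lemma \ref{lemma3}.

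First, I would rewrite the $x$-iteration in the canonical SA form
\begin{equation*}
x_{k+1} = x_k + \beta_k\bigl[h(p^{\star}(x_k),x_k) + \varepsilon_k + M_{k+1}\bigr],
\end{equation*}
where $\varepsilon_k := h(p_k,x_k)-h(p^{\star}(x_k),x_k)$ and $M_{k+1} := \hat{h}(p_k,x_k)-h(p_k,x_k)$. By the explicit form of $h$ in Proposition \ref{pro:4}, $h$ is Lipschitz in $p$ uniformly in $x$ on the bounded domain given by Assumption \ref{assumption:3}, so Lemma \ref{lemma3} gives $\varepsilon_k\to 0$ almost surely. The sequence $\{M_{k+1}\}$ is a martingale difference with respect to the natural filtration and has uniformly bounded conditional second moment, because $\hat{h}$ depends only on the indicator $\mathbf{1}_{d_k\le x_k}$ on a bounded price/inventory domain. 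Combined with the step-size conditions $\sum\beta_k=\infty$, $\sum\beta_k^2<\infty$ and $\beta_k/\alpha_k\to 0$, this places us exactly in the setting where the slow iterate $\{x_k\}$ is shown to track the ODE (\ref{eq:22}) (see, e.g., Borkar (2009), Chapter 6, Theorem 2, or Kushner and Yin (1997), Chapter 8).

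Next I would verify stability of the ODE. Concavity of $F(p,x)$ in $x$ from Theorem \ref{theorem:1}, together with the envelope-style identity $\frac{d}{dx}F(p^{\star}(x),x) = \partial_x F(p^{\star}(x),x) = h(p^{\star}(x),x)$ (since $\partial_p F(p^{\star}(x),x)=0$ by definition of $p^{\star}(x)$), implies that $F(p^{\star}(\cdot),\cdot)$ is a Lyapunov function for (\ref{eq:22}): it is non-decreasing along trajectories and strictly increasing away from the zero set of $h(p^{\star}(\cdot),\cdot)$. Hence every bounded trajectory converges to the equilibrium set $\mathcal{E}=\{x:h(p^{\star}(x),x)=0\}$, and boundedness of iterates comes from the compactness of the domain in Assumption \ref{assumption:3} (or by a standard projection argument if needed). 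The Kushner–Clark lemma then yields $x_k\to x^{\star}\in\mathcal{E}$ almost surely.

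Finally, combining this with Lemma \ref{lemma3} gives $|p_k-p^{\star}(x_k)|\to 0$ and, by continuity of $p^{\star}(\cdot)$ (which can be read off the implicit function $g(p^{\star}(x),x)=0$ using the concavity in $p$ established in Theorem \ref{theorem:1}), $p^{\star}(x_k)\to p^{\star}(x^{\star})$, so $(p_k,x_k)\to(p^{\star}(x^{\star}),x^{\star})$. The main obstacle I expect is the verification that $h(p^{\star}(\cdot),\cdot)$ is well-defined and Lipschitz: $p^{\star}(x)$ is implicitly defined through a first-order condition that involves the derivative of a Poisson-weighted sum whose number of terms jumps at integer values of $x$, so some care is needed to show the appropriate regularity (most cleanly by working with $x$ restricted to the integer grid on which the decision variable actually lives, and extending by a suitable continuous interpolation).
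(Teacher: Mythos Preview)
Your overall two-timescale architecture is exactly right, and the decomposition of the slow iterate into drift, bias $\varepsilon_k$, and martingale noise $M_{k+1}$ is the natural one. The point where your argument departs from the paper --- and where it would actually break --- is the regularity of the slow vector field. You invoke Borkar's Chapter~6 theorem under a Lipschitz hypothesis on $h(p^{\star}(\cdot),\cdot)$, but $h(p,x)=b-c+p-(h+b+p)\,\mathbb{P}(d\le x)$ is \emph{piecewise constant} in $x$ (the Poisson CDF jumps at every integer), so no Lipschitz bound in $x$ is available and the classical ODE (\ref{eq:22}) is not even well-posed in the usual sense. You correctly flag this obstacle at the end, but your proposed fixes --- restrict to the integer grid, or continuously interpolate --- do not match the iteration: the updates $x_{k+1}=x_k+\beta_k\hat h(p_k,x_k)$ do not stay on the integers, and interpolating $h$ would change the limiting dynamics.

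The paper's resolution is different and cleaner: it replaces the ODE (\ref{eq:22}) by the Filippov differential inclusion $\dot{x}(t)\in H(p^{\star}(x(t)),x(t))$ with $H(p,x):=\cap_{\epsilon>0}\overline{\mathrm{co}}\{h(p,y):|y-x|<\epsilon\}$, observes that $h$ satisfies a linear growth condition, and then appeals to Corollary~4 in Chapter~5 of Borkar (2009), which covers stochastic approximation with a discontinuous, set-valued drift. Your Lyapunov construction via $F(p^{\star}(x),x)$ is a nice supplement that the paper does not carry out explicitly (it simply cites the relevant Borkar theorems), but to make it rigorous here you would have to recast it for the differential inclusion rather than for a smooth ODE.
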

\begin{proof}{Proof}
Since $h(p,x) = b-c+p - \sum_{k=0}^{\lfloor x \rfloor}\frac{\lambda(p)^ke^{-\lambda(p)}}{k!}(h+b+p),$
note that $h(p,x)$ is a piecewise constant function so the ODE (\ref{eq:22}) is equipped with discontinuous right hand side, for which standard solution concepts are introduced in \cite{Filippov1988DifferentialEW}. It is obvious that $h(p,x)$ satisfies a linear growth condition: $|h(p,x)|\le K(1+|x|)$ for some $K>0$. Define $H(p,x)\doteq \cap_{\epsilon>0} \bar{co}(\{h(p,y):|y-x|<\epsilon \}),$ where $\bar{co}$ is the notation of convex closure. Then the ODE limit gets replaced by a differential inclusion limit: $\dot{x}(t) \in H(p^{\star}(x(t)),x(t)).$
     By Corollary 4 in \citet[Chap.5]{borkar2009stochastic}, the sequence $\{x_k\}$ generated by the iteration Eq.(\ref{eq:19}) converges to (possibly path dependent) the limit point of Eq.(\ref{eq:22}) almost surely. Furthermore, the iterative sequences $\{p_k\}$ and $\{x_k\}$ generated by iterations Eq.(\ref{eq:19}) converge to some limit point of the ODE below by Theorem 2 in \citet[Chap.6]{borkar2009stochastic}. 
 \begin{equation*} \small
\dot{p}(t) = g(p(t),x(t)),\quad \dot{x}(t)=h(p(t),x(t)). \Halmos
\end{equation*} 
\end{proof}
The limit point of ODE in Lemma \ref{lemma3} and Theorem \ref{theorem3} satisfy the condition $g(p,x) = 0$ and $h(p,x) =0$, which shows $\frac{\partial F(p,x)}{\partial p} = 
 \frac{\partial F(p,x)}{\partial x} = 0$ by definition of $g(p,x)$ and $h(p,x)$. Therefore, the limit point of ODE is a local optimum in the optimization problem. Figure \ref{fig:two-scale-simple} in Appendix C shows that the two-timescale algorithm approximately converges to $p=55$ and $x=5$ under certain parameters, which fully satisfy the conditions in Theorem \ref{theorem:1}.

 Furthermore, we study the convergence rate of the two-timescale algorithm.
\begin{theorem}
    Assume that the function $p^{\star}(x)$ is a Lipschitz function, and then we have the order of the mean squared error for the sequence $\{x_k\}$ generated by the iteration Eq.(\ref{eq:19}):
    \begin{equation*}
        \mathbb{E}[|p_k-p^{\star}(x_k)|] = O(\frac{\beta_k}{\alpha_k}) 
        + O(\alpha_k^{\frac{1}{2}}).
    \end{equation*}
\end{theorem}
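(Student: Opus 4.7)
The plan is to control the tracking error $e_k := p_k - p^{\star}(x_k)$ in the $L^{2}$ sense and then recover the stated $L^{1}$ bound by Cauchy--Schwarz. The intuition is that the fast recursion for $p_k$ is a stochastic approximation scheme whose target $p^{\star}(x_k)$ drifts at rate $\beta_k$; the error therefore accumulates two independent contributions, a stochastic-approximation noise of order $\alpha_k^{1/2}$ and a tracking bias of order $\beta_k/\alpha_k$.

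First, I would write out the one-step recursion. Since $p_{k+1} = p_k + \alpha_k \hat g(p_k,x_k)$,
\begin{equation*}
e_{k+1} = e_k + \alpha_k \hat g(p_k,x_k) - \bigl(p^{\star}(x_{k+1}) - p^{\star}(x_k)\bigr).
\end{equation*}
Split $\hat g(p_k,x_k) = g(p_k,x_k) + M_{k+1}$, where $M_{k+1}$ is a martingale-difference term with bounded conditional second moment (boundedness follows from Assumption \ref{assumption:1} and the compactness of the action domain). By definition $g(p^{\star}(x_k),x_k)=0$, and Theorem \ref{theorem:1} together with the compact domain delivers a uniform strong-concavity modulus $\mu>0$ such that
\begin{equation*}
g(p_k,x_k) = -\mu(x_k)\,e_k + R_k, \qquad |R_k|\le C e_k^{2},
\end{equation*}
with $\mu(x_k)\ge\mu$. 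For the drift of the moving target, the Lipschitz assumption on $p^{\star}$ and boundedness of $\hat h$ on the compact domain give $|p^{\star}(x_{k+1})-p^{\star}(x_k)|\le L\beta_k|\hat h(p_k,x_k)| = O(\beta_k)$ almost surely.

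Substituting and squaring, I would take conditional expectation to obtain a Lyapunov recursion of the form
\begin{equation*}
\mathbb{E}[e_{k+1}^{2}\mid\mathcal{F}_k] \le (1-\alpha_k\mu)^{2} e_k^{2} + \alpha_k^{2}\,\mathbb{E}[M_{k+1}^{2}\mid\mathcal{F}_k] + O(\beta_k^{2}) + \text{cross terms}.
\end{equation*}
The martingale cross-terms vanish by the tower property; the $e_k\cdot O(\beta_k)$ cross-term is controlled by Young's inequality $2|ab|\le \tfrac{\mu\alpha_k}{2}a^{2} + \tfrac{2}{\mu\alpha_k}b^{2}$, absorbing half of the contraction into $e_k^{2}$ and producing an additional $O(\beta_k^{2}/\alpha_k)$ forcing term. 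The result is
\begin{equation*}
\mathbb{E}[e_{k+1}^{2}\mid\mathcal{F}_k] \le (1-c\alpha_k)\,e_k^{2} + O(\alpha_k^{2}) + O(\beta_k^{2}/\alpha_k).
\end{equation*}
Applying the standard non-autonomous lemma (if $u_{k+1}\le(1-c\alpha_k)u_k+v_k$ with $\sum\alpha_k=\infty$, then $u_k = O(v_k/\alpha_k)$) separately to each forcing term yields $\mathbb{E}[e_k^{2}] = O(\alpha_k) + O((\beta_k/\alpha_k)^{2})$, and Cauchy--Schwarz finally gives $\mathbb{E}[|e_k|] = O(\alpha_k^{1/2}) + O(\beta_k/\alpha_k)$.

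The main obstacle is justifying the uniform strong-monotonicity constant $\mu$: strict concavity from Theorem \ref{theorem:1} alone is not enough, and one needs smoothness of $\nabla_p g$ together with compactness of the domain to upgrade pointwise strict concavity to a uniform modulus. A secondary, but routine, technical point is a priori boundedness of $\{p_k\}$ in the working domain so that all constants $(C,L,\mu,\text{noise variance})$ are indeed uniform; this is typically enforced by a projection in the update or handled by the stability argument of Chapter 3 of \citet{borkar2009stochastic}. Once these two ingredients are in place, the rest of the argument is a direct Lyapunov/Young's-inequality calculation.
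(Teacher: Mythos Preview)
Your proposal is correct and follows the same Lyapunov strategy as the paper: write the recursion for the tracking error $\zeta_k=p_k-p^\star(x_k)$, square, exploit the uniform negativity of $\partial_p g$ on the compact domain to get a contraction $(1-c\alpha_k)$, and read off the rate.

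The only substantive tactical difference is how the cross term $2\beta_k L_p C_h\,|\zeta_k|$ is handled. You use Young's inequality to trade it for an $O(\beta_k^{2}/\alpha_k)$ forcing term and then invoke the standard comparison lemma directly on $\mathbb{E}[e_k^{2}]$. The paper instead applies Cauchy--Schwarz to replace $\mathbb{E}[|\zeta_k|]$ by $\sqrt{\mathbb{E}[\zeta_k^{2}]}$, completes the square, and bounds the resulting root-recursion $\sqrt{\mathbb{E}[\zeta_{k+1}^{2}]}\le T_k\bigl(\sqrt{\mathbb{E}[\zeta_k^{2}]}\bigr)$ via a one-step contraction mapping $T_k$ whose fixed point is shown to be $O(\beta_k/\alpha_k)+O(\alpha_k^{1/2})$; the final passage then cites \citet{hu2023quantile} and \citet{jiang2023quantile}. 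Your route is shorter; the paper's contraction-map formulation makes the constant dependence more explicit. One minor simplification worth adopting from the paper: the exact mean-value identity $g(p_k,x_k)=\partial_p g(\tilde p_k,x_k)\,e_k$ removes the need for your quadratic remainder $R_k$ altogether, so the strong-monotonicity step reduces to bounding $\partial_p g\le -\epsilon$ uniformly, which is immediate from continuity on the compact price interval.
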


\subsection{Multi-period Problem Property}
When it comes to multi-period decision problems, the problem is formulated as MDP through Bellman equation (\ref{eq:10}), with the goal of determining the optimal policy for the current state in each period. \par 
At each step excluding the final period $T$, the optimization of the Bellman equation (\ref{eq:10}) includes a future discounted expectation term with respect to the next state. To account for this, we need to find the optimal policy $a_{t+1}(s_{t+1})$ under every state $s_{t+1}$ in the $t+1$ period, calculate the $V^{\pi_\star}(s_{t+1})$, and then take the expectation $\mathbb{E}_{s_{t+1}}[V^{\pi_\star}(s_{t+1})]$. This transforms the problem into a functional optimization task, as we need to find a function $a_{t+1}(s_{t+1})$ that maximizes $\mathbb{E}_{s_{t+1}}[V^{\pi_\star}(s_{t+1})]$. The computational complexity involved in solving Eq.(\ref{eq:10}) by backward induction is detailed in Proposition \ref{pro:1}.  
\begin{proposition}\label{pro:1}
Suppose the price of each product is discretized into P values and the replenishment quantity is discretized into $Q$ values; there are $K$ regressors in Eq.(\ref{eq:1}), the maximum dimension and the minimum dimension of the regressors are $R_{max}$ and $R_{min}$. The lower  and upper bounds of the total computational complexity needed to recursively solve the optimality equation (\ref{eq:10}) via DP are $O(P^T Q^T (R_{min})^{KT})$ and $O(P^T Q^T (R_{max})^{KT})$, respectively.
\end{proposition}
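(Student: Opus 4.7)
The plan is to count the operations required by backward induction on the Bellman equation~(\ref{eq:10}) and to show that the nested ``max--expectation'' structure of the recursion forces a multiplicative compounding across the $T$ periods, with the per-period factor determined by the discretized action space and the discretized regressor configurations. Since the claim gives matching lower and upper bounds that differ only through $R_{min}$ versus $R_{max}$, the argument is essentially a product-rule counting applied level by level down the $T$-period decision tree.

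First, I would fix the per-period enumeration cost. At any period $t$, the maximization in~(\ref{eq:10}) ranges over discretized actions $a_t=(p_t,q_t)$, giving $P\cdot Q$ candidates. The transition kernel $\mathcal{P}(\cdot\mid s_t,a_t)$ is driven by the demand~(\ref{eq:2}), whose Poisson parameter is a function of the $K$ regressors $\bold{\kappa}(p,o,j)$; discretizing each regressor on a grid of cardinality lying in $[R_{min},R_{max}]$ then produces between $R_{min}^K$ and $R_{max}^K$ distinct regressor realizations that index the reachable next states $s_{t+1}$ which must be enumerated to evaluate the inner expectation $\sum_{s_{t+1}}\mathcal{P}(s_{t+1}\mid s_t,a_t)V^{\pi_*}(s_{t+1})$.

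Second, I would compose these costs across the horizon. As the paragraph preceding the proposition emphasizes, evaluating $\mathbb{E}_{s_{t+1}}[V^{\pi_*}(s_{t+1})]$ requires the optimal function $a_{t+1}(s_{t+1})$, which itself requires the analogous computation at period $t+2$, and so on down to the terminal period $T$. Unrolling this functional recursion produces a $T$-level tree whose branching factor at each level equals the product of $PQ$ (actions) and the number of regressor realizations. Taking the product across the $T$ levels yields the claimed lower bound $O(P^T Q^T R_{min}^{KT})$ and upper bound $O(P^T Q^T R_{max}^{KT})$.

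The main obstacle is justifying that the recursion genuinely compounds multiplicatively rather than collapsing into a polynomial-in-$T$ table lookup over a stationary state space. This will rely on two observations emphasized earlier in the paper: (i) the lost-sales assumption combined with nonzero lead time in~(\ref{eq:3}) prevents inventory and outstanding orders from being summarized by a single scalar inventory position, so the effective state dimension remains large and scales with $z$; and (ii) the regressors $\bold{\kappa}(p,o,j)$ depend on the competitor price $o$ and the reference price $j$, both of which evolve path-dependently, so a fresh enumeration of regressor realizations is unavoidable at every period of the horizon. Together these imply that neither the action factor $PQ$ nor the regressor factor in $[R_{min}^K, R_{max}^K]$ can be amortized across periods, closing the argument.
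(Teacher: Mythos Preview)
Your proposal is correct and follows essentially the same approach as the paper: both arguments identify the per-period enumeration factor as $PQ$ actions times $D$ regressor/demand realizations with $D\in[R_{min}^K,R_{max}^K]$, and then compound this factor across the $T$ periods of the backward recursion to obtain $O((PQD)^T)$. The only cosmetic difference is that the paper writes the recursion explicitly as $C_t = PQ\,[E_t + D\,C_{t+1}]$ with $E_t=D$ and unrolls it algebraically to $C_1=\sum_{t=1}^T (PQD)^t$, whereas you phrase the same computation as a $T$-level tree with branching factor $PQD$; your closing paragraph justifying why the recursion cannot be amortized into a polynomial-in-$T$ table lookup is a useful clarification that the paper's proof simply assumes without comment.
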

It can be observed that the  traditional dynamic programming technique suffers from the curse-of-dimensionality, rendering it computationally intractable for sizable problems. The curse-of-dimensionality arises from the high dimensions of the state space which grows in the number of products, the action space which includes all possible pricing and replenishment combinations and the transition space dependent on the number of possible demand realizations.
To address such complex scenarios, particularly in high-dimensional, continuous spaces typical of functional optimization tasks, DRL can be applied. By interacting with the environment, DRL learns to make decisions based on observed feedback while leveraging the power of deep neural networks.

\subsection{Two-timescale DRL Approach}
We use two agents to make pricing and replenishment decisions,  respectively, and integrate the two-timescale algorithm with HAPPO. To achieve optimal pricing and replenishment  decisions, each agent should consider the influence of the action on not only its own reward but also the reward of other agents. Thus, the MDP constructed in Section 3.1 should be transformed into an MG.

\subsubsection{Markov Game Formulation}
We formulate the dynamic pricing and replenishment problem with $T$ periods as an MG, which can be defined as a tuple $<N,\mathcal{S},\bold{\mathcal{A}},\mathcal{P},\mathcal{R},\gamma>$. Here $N \in \mathbb{N}^+ $ is the number of agents; $\mathcal{S}$ is the finite state space; $\bold{\mathcal{A}}=\prod_{i=1}^n \mathcal{A}^i$ is the product
of finite action spaces of all agents, known as the joint action space, where $\mathcal{A}^i$ is the set of actions for agent $i$; $\mathcal{P}: \mathcal{S} \times \bold{\mathcal{A}} \times \mathcal{S} \rightarrow [0,1]$ is the transition probability function that maps the current global state, global action (including all agents’ actions) and next possible state to a transition probability; $\mathcal{R}: \mathcal{S} \times \bold{\mathcal{A}} \rightarrow \mathbb{R}$ is the reward function that maps current global state and global action to a payoff; and $\gamma \in (0,1]$ is the discount factor. The agents interact with the environment according to the following protocol: in each period $t$, the agents are at state $s_t \in \mathcal{S}$; every agent $i$ takes an action $a^i_t \in \mathcal{A}^i$, drawn from its policy $\pi^i(\cdot |s_t)$, which together with other agents’ actions gives a joint action $\bold{a}_t \in \bold{\mathcal{A}}$, drawn from the joint policy $\bold{\pi}(\cdot |s_t)=\prod_{i=1}^n \pi^i(\cdot |s_t)$. After taking actions $\bold{a}_t$, the agents receive a joint reward $r_t = \mathcal{R}(s_t|\bold{a}_t)$, which is set as the sales profit in period $t$; see Eq.(\ref{eq:6}).
Afterwards, the system transfers to the next state $s_{t+1}$ with probability $\mathcal{P}(s_{t+1}|s_t,\bold{a}_t)$, where the state transition function $\mathcal{P}$ is determined by the dynamics of the pricing and replenishment model, as detailed in Section 3.2. A trajectory is generated by repeating the interaction until period $T$ is reached. All agents update their policies cooperatively, aiming to maximize the expected total reward:
\begin{equation}
    J(\pi) = \mathbb{E}_{a_{1:T}^1 \sim \pi^1,...,a_{1:T}^n \sim \pi^n,s_{1:T} \sim \rho^{1:T}_\pi} \Bigg[ \sum_{t=1}^T \gamma^t r_t\Bigg],
\end{equation}
where $a_{1:T}^i \sim \pi^i$ represents that each action in action sequence $(a_1^i,...,a_T^i)$ of actor $i$ follows $\pi^i$, and $\rho^t_{\pi}$ is the marginal state distribution at time $t$ with the joint policy $\pi$.

\subsubsection{Solution}
Now we show how to use our fast-slow dual-agent DRL algorithm (FSDA) to construct pricing and replenishment policies.
FSDA applies an on-policy PPO for each agent and follows a CTDE structure. We use the subscripts $\theta$ and $\phi$ to represent the parameters of policy and value functions, respectively. Each agent $i$ has a policy function $\pi_{\theta}^i$. The value function $v_{\phi}$ maps all agents’ state to state value $v_{\phi}(s_t)$. Both the policy function and the value function are constructed using neural networks, known as the actor network and the critic network, respectively. To capture useful historical information such as demand fluctuations and price changes during the lead time, we choose a Recurrent Neural Network (RNN) with two stacked Gate Recurrent Units (GRUs) to construct our actor and critic networks \citep{liu2022multi}. Thus, the agent can fully take historical information into account and make better decisions.\par

\begin{figure}[h!]
  \centering
  \caption{Structure of actor network}
  \label{fig:Network}
  \includegraphics[width=0.7\textwidth]{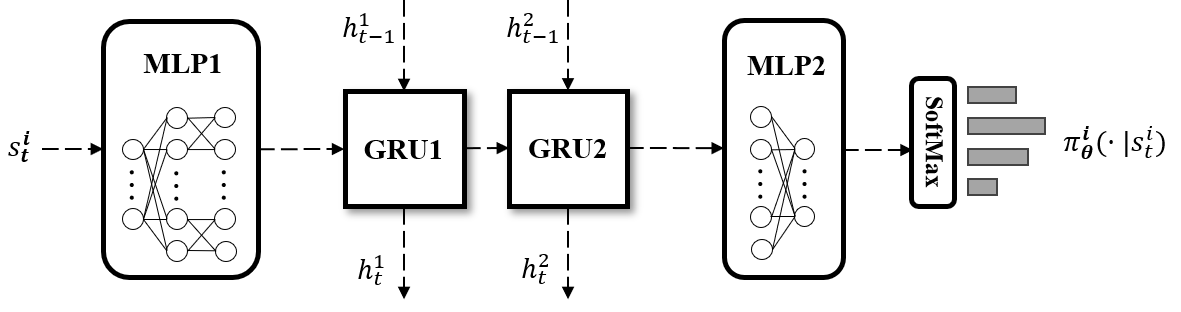} % 插入图片
  
\end{figure}

We illustrate the diagram for the actor network in Figure \ref{fig:Network}. For the actor network, we use a multi-layer perceptron (MLP) with two hidden layers, namely MLP1, to encode the actor $i$'s state $s_t^i$ into high-dimensional features. The first GRU, namely GRU1, receives these features and combines a hidden state $h_{t-1}^1$ as the input and outputs a hidden state $h_t^1$ and some high-dimensional features which are served as the input of the second GRU (GRU2). GRU2 outputs an updated hidden state $h_t^2$ and new high-dimensional features. These features are then fed into the second multi-layer perceptron (MLP2) to match the size of the action space and mapped into the probability distribution $\pi_{\theta}^i(\cdot|s_t^i)$ of actions by a softmax layer. For the critic network, its only difference from the actor network lies in that features produced by GRU2 are directly mapped into the state value.\par
For the actual training process, we use $\theta_m$ and $\phi_m$ to represent parameters for actor and critic
networks after $m$ updates. For each episode, we simulate the MG corresponding to the dynamic pricing and replenishment problem for $T$ periods with actors’ current policies ($\pi_{\theta_m}^1$,...,$\pi_{\theta_m}^n$) and collect all actors’
states, actions and rewards as the training data.  Then, if the current episode is divisible by the ratio $k$ of the two time scales, each actor’s parameters denoted by $\theta_m$ are updated to $\theta_{m+1}$ using the collected data, respectively. Otherwise, only the first actor’s parameters $\theta_m$ are updated to $\theta_{m+1}$. Note that the first actor can be either a pricing or a replenishment agent, depending on which decision needs to be updated more quickly. After that, the critic's parameters $\phi_m$ are updated to $\phi_{m+1}$. The ratio $k$ of the two time scales tends to infinity as the algorithm iterates. That is to say, $k$ is a function of $m$, for example, $k=m/2$. A pseudocode for the detailed simulation and parameter update process is presented
in Appendix E. \par

In the application of the algorithm, we also incorporate several performance enhancement techniques to optimize its effectiveness, such as batch advantage normalization, reward scaling, and orthogonal initialization.\par

\subsection{Theoretical Analysis}
For the FSDA in this section, we show its convergence below. Notice that $(\theta_m^2,\phi_m)$ is updated at every step, while $\theta_m^1$ is updated once every $k$ steps. When using the Adam optimizer, with a step size that satisfies the descent condition, we can consider $(\theta_m^2,\phi_m)$ to be on a faster time scale and $\theta_m^1$ on a slower time scale. The iterative formulas extracted from the algorithm are as follows: 

\begin{equation}\small \label{eq:23}
    \begin{aligned} 
    \theta_{m+1}^1 &= \theta_m^1 + \beta_m \frac{\partial}{\partial \theta_m^1}  (L_p^1+L_e^1) = \theta_m^1 + \beta_m( h_1(\theta_m^1,\phi_m) + N_m^{(1)}) ,\\
\theta_{m+1}^2 &= \theta_m^2 + \alpha_m \frac{\partial}{\partial \theta_m^2}  (L_p^2+L_e^2) = \theta_m^2 + \alpha_m (h_2(\theta_m^1,\theta_m^2,\phi_m)+N_m^{(2)})  ,\\
\phi_{m+1}& = \phi_m +\alpha_m \frac{\partial}{\partial \phi_m} L_c = \phi_m + \alpha_m(h_3(\theta_m^1,\theta_m^2,\phi_m)+N_m^{(3)}),\\
    \end{aligned}
\end{equation}
where $k_m=\frac{\alpha_m}{\beta_m} \rightarrow \infty$ due to our setting and the deterministic functions $h_1(\theta^1,\phi)$, $h_2(\theta^1,\theta^2,\phi)$ and $h_3(\theta^1,\theta^2,\phi)$ are as follows:
\begin{equation} \small \label{eq:23-1}
    \begin{aligned} 
h_1(\theta^1,\phi) &= \mathbb{E}_{s \sim \rho_{\pi_{\theta_m}}, a \sim {\pi_{\theta_m}}} \Bigg[ \min \Big( \frac{\pi_{\theta_{m+1}}^{1}(a_t^{1}|s_t^{1})}{\pi_{\theta_{m}}^{1}(a_t^{1}|s_t^{1})} A_t^{1}, clip\Big( \frac{\pi_{\theta_m}^{1}(a_t^{1}|s_t^{1})}{\pi_{\theta_{m-1}}^{1}(a_t^{1}|s_t^{1})} , 1-\epsilon, 1+\epsilon \Big)  A_t^{1} \Big) \Bigg],\\
 h_2(\theta^1,\theta^2,\phi) &= \mathbb{E}_{s \sim \rho_{\pi_{\theta_m}}, a \sim {\pi_{\theta_m}}} \Bigg[ \min \Big( \frac{\pi_{\theta_{m+1}}^{2}(a_t^{2}|s_t^{2})}{\pi_{\theta_{m}}^{2}(a_t^{2}|s_t^{2})} F_t^{2} A_t^{2}, clip\Big( \frac{\pi_{\theta_m}^{2}(a_t^{2}|s_t^{2})}{\pi_{\theta_{m-1}}^{2}(a_t^{2}|s_t^{2})} , 1-\epsilon, 1+\epsilon \Big) F_t^{2} A_t^{2} \Big) \Bigg],\\
h_3(\theta^1,\theta^2,\phi) & = \mathbb{E}_{s \sim \rho_{\pi_{\theta_m}}} [ (v_{\phi}(s_t)-r_t)^2 ].\\
    \end{aligned}
\end{equation}
where $clip(x, \text{min}, \text{max})$ confines $x$ to the interval $[\text{min}, \text{max}]$, returning $\text{min}$, $\text{max}$, or $x$ accordingly as $x$ falls below $\text{min}$, exceeds $\text{max}$, or lies between them.
\begin{lemma}  \label{lemma4}
Suppose that the parameter space for $(\theta_m^1,\theta_m^2,\phi_m)$ is a convex bounded set. Also suppose that $h_1$, $h_2$ and $h_3$ are Lipschitz functions and $N_m^{(1)}$, $N_m^{(2)}$ and $N_m^{(3)}$ are martingales with finite conditional second order moment. The step sizes $\alpha_m$ and $\beta_m$ satisfy the conditions $\sum_{k=1}^{\infty}\alpha_k = \sum_{k=1}^{\infty}\beta_k =\infty$, $\sum_{k=1}^{\infty}\beta_k^2 < \sum_{k=1}^{\infty}\alpha_k^2 < \infty$, and $\frac{\beta_k}{\alpha_k} \rightarrow 0$ as $k$ tends to infinity. The iterative sequence $\{(\theta_m^1,\theta_m^2,\phi_m)\}$ generated by the iteration Eq.(\ref{eq:23}) converges to a set which is defined as below:
\begin{equation*} \small 
    (\theta_m^1,\theta_m^2,\phi_m) \rightarrow \{(\theta_m^1,\theta_m^{2\star}(\theta_m^1),\phi_m^{\star}(\theta_m^1))\},
\end{equation*}
where $\theta_m^{2\star}(\theta_m^1)$ and $\phi_m^{\star}(\theta_m^1)$ are well-defined functions, similar to $p^*(x)$ defined in Lemma \ref{lemma3}. In other words, this lemma means $|\theta_m^2 - \theta_m^{2\star}(\theta_m^1)|\rightarrow 0$ and $|\phi_m - \phi_m^{\star}(\theta_m^1)|\rightarrow 0$ almost surely, that is, $\{\theta_m^2\}$ and $\{\phi_m\}$ asymptotically track $\theta_m^{2\star}(\theta_m^1)$ and $\phi_m^{\star}(\theta_m^1)$, respectively. 
\end{lemma}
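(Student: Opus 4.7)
The plan is to follow the two-timescale stochastic approximation framework of \citet{borkar2009stochastic}, mirroring the argument used for Lemma \ref{lemma3} but lifted to the higher-dimensional parameter space of the FSDA updates. The first step is to recognize that Eq.(\ref{eq:23}) already fits the canonical two-timescale template: $(\theta^2,\phi)$ live on the fast scale with step size $\alpha_m$, while $\theta^1$ lives on the slow scale with step size $\beta_m$, and the hypotheses $\beta_m/\alpha_m\to 0$ together with the summability conditions on $\alpha_m,\beta_m$ place the iteration in the regime covered by Chapter 6 of \citet{borkar2009stochastic}. Under the stated Lipschitz continuity of $h_1,h_2,h_3$ and the martingale-difference property with bounded conditional second moments for the noise terms $N_m^{(i)}$, the standard preliminaries (boundedness of iterates via the assumed convex bounded parameter space, an ODE approximation of the stochastic recursion, and vanishing of the noise contribution via a martingale convergence argument) carry over verbatim.

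The second step is to analyze the fast-scale ODE with $\theta^1$ held fixed. Viewing $\theta^1$ as quasi-static from the point of view of the $(\theta^2,\phi)$ iteration, the fast-scale limit is the coupled ODE
\begin{equation*}
\dot{\theta^2}(t)=h_2(\theta^1,\theta^2(t),\phi(t)),\qquad \dot{\phi}(t)=h_3(\theta^1,\theta^2(t),\phi(t)).
\end{equation*}
I will argue that, for each fixed $\theta^1$, this system admits an asymptotically stable equilibrium $(\theta^{2\star}(\theta^1),\phi^{\star}(\theta^1))$. Existence follows from continuity plus the convex-bounded-set assumption via a Brouwer-type fixed-point argument applied to the (projected) flow; uniqueness and local stability are obtained by exploiting the strict convexity of the critic loss $L_c$ in $\phi$ and the local concavity of the clipped PPO surrogate $L_p^2+L_e^2$ in $\theta^2$ near stationary points, with a joint Lyapunov function built from the sum of these losses. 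The definitions of $\theta^{2\star}(\theta^1)$ and $\phi^{\star}(\theta^1)$ as the equilibrium map are then well-posed, mirroring the definition of $p^\star(x)$ in Lemma \ref{lemma3}.

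Once the fast-scale tracking has been established, the third step invokes Theorem 2 of \citet[Chap.~6]{borkar2009stochastic} to conclude that $|\theta_m^2-\theta_m^{2\star}(\theta_m^1)|\to 0$ and $|\phi_m-\phi_m^{\star}(\theta_m^1)|\to 0$ almost surely, which is exactly the claimed asymptotic tracking. The main obstacle will be the second step: unlike Lemma \ref{lemma3}, where concavity of $F(p,x)$ in $p$ for fixed $x$ gave immediate global stability of the fast-scale ODE, here the clipped PPO surrogate and the neural-network parameterization are only locally well-behaved, so global asymptotic stability cannot be taken for granted. The natural resolution, consistent with the treatment in Theorem \ref{theorem3}, is to state the conclusion as convergence to the set of equilibria of the fast-scale ODE and to read $\theta^{2\star}(\theta^1),\phi^{\star}(\theta^1)$ as selections from that set; Lipschitz dependence on $\theta^1$ then follows from an implicit-function argument on the equilibrium condition $h_2=h_3=0$ under the local non-degeneracy of the Jacobian.
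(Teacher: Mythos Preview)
Your approach is essentially the paper's: the paper does not give a separate proof of Lemma~\ref{lemma4} at all, but simply states that ``the proofs of Lemma~\ref{lemma4} and Theorem~\ref{theorem5} are similar to Lemma~\ref{lemma3} and Theorem~\ref{theorem3} and we omit them.'' Your proposal of casting Eq.~(\ref{eq:23}) in Borkar's two-timescale template, freezing $\theta^1$ on the fast scale, and invoking the Chapter~6 tracking lemma is exactly that analogy spelled out.

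One point worth flagging: your second step, where you try to establish global asymptotic stability of the fast-scale ODE via strict convexity of $L_c$ and local concavity of the clipped surrogate, goes well beyond anything the paper attempts. In Lemma~\ref{lemma3} the global stability of $\dot p=g(p,x)$ came for free from the concavity of $F(p,x)$ in $p$; here there is no such structural guarantee for neural-network parameters under a clipped PPO objective, and the paper does not supply one---it simply folds the needed regularity into the hypothesis list (Lipschitz $h_i$, bounded convex parameter set, martingale noise) and treats the existence of well-defined $\theta^{2\star}(\theta^1),\phi^\star(\theta^1)$ as part of the analogy with $p^\star(x)$. Your instinct that this is the genuine obstacle is correct, and your fallback of reading the conclusion as convergence to the equilibrium \emph{set} of the fast ODE is the honest formulation; just be aware that the paper sidesteps rather than resolves this point.
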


\begin{theorem}\label{theorem5}
    In the condition of Lemma \ref{lemma4}, the iterative sequence $\{(\theta_m^1,\theta_m^2,\phi_m)\}$ generated by the iteration Eq.(\ref{eq:23}) converges to some limit point of the ODE $\dot{\theta}^1(t) = h_1(\theta^1(t),\phi^{\star}(\theta^{1}(t))),$ i.e., $(\theta_m^1,\theta_m^2,\phi_m) \rightarrow (\theta^{1\star}, \theta^{2\star}(\theta^{1\star}),\phi^{\star}(\theta^{1\star}))$.
\end{theorem}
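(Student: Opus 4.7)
The plan is to mirror the argument used for Theorem \ref{theorem3}, exploiting Lemma \ref{lemma4} to decouple the three-variable recursion (\ref{eq:23}) into a single slow-scale iteration in $\theta^1$ and then invoking a standard single-timescale stochastic approximation theorem. Concretely, Lemma \ref{lemma4} delivers $|\theta_m^2 - \theta_m^{2\star}(\theta_m^1)| \to 0$ and $|\phi_m - \phi_m^{\star}(\theta_m^1)| \to 0$ almost surely, so asymptotically the fast variables may be replaced by their tracking targets when analyzing the slow iterate $\theta_m^1$.

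The first step is to rewrite the $\theta^1$-update as
\[
\theta_{m+1}^1 \;=\; \theta_m^1 + \beta_m\bigl(h_1(\theta_m^1,\phi^{\star}(\theta_m^1)) + \varepsilon_m + N_m^{(1)}\bigr),
\]
where $\varepsilon_m \doteq h_1(\theta_m^1,\phi_m) - h_1(\theta_m^1,\phi^{\star}(\theta_m^1))$ is a bias term that vanishes almost surely by the Lipschitz assumption on $h_1$ combined with Lemma \ref{lemma4}. Setting $\tilde h_1(\theta^1) \doteq h_1(\theta^1,\phi^{\star}(\theta^1))$, this is the canonical form of a Robbins–Monro scheme with step size $\beta_m$, driving function $\tilde h_1$, and an asymptotically vanishing additive bias folded into the noise. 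The second step is to verify the hypotheses of Theorem 2 in Chapter 6 of \citet{borkar2009stochastic}: the iterate $\theta_m^1$ stays in a convex bounded set by hypothesis; $\tilde h_1$ inherits Lipschitz continuity from $h_1$ once $\phi^{\star}(\cdot)$ is Lipschitz; and $N_m^{(1)}$ is a square-integrable martingale difference. The conclusion is that $\theta_m^1$ converges almost surely to a (possibly sample-path dependent) limit point $\theta^{1\star}$ of $\dot{\theta}^1(t) = h_1(\theta^1(t), \phi^{\star}(\theta^1(t)))$; composing with Lemma \ref{lemma4} then gives the claimed triplet limit $(\theta^{1\star},\theta^{2\star}(\theta^{1\star}),\phi^{\star}(\theta^{1\star}))$.

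The main obstacle will be establishing enough regularity of $\phi^{\star}(\cdot)$ and $\theta^{2\star}(\cdot)$ to make $\tilde h_1$ admissible in the ODE method. Because the PPO surrogates underlying $h_2$ and $h_3$ in (\ref{eq:23-1}) involve both a $\min$ and a clipping operator, the implicit equations defining $\phi^{\star}$ and $\theta^{2\star}$ are only piecewise smooth, so a direct implicit-function argument for $C^1$-regularity is unavailable and the tracking targets may in fact be set-valued at kinks. I expect to handle this exactly as in the proof of Theorem \ref{theorem3}: pass to the Filippov differential inclusion $\dot{\theta}^1(t) \in H_1(\theta^1(t))$ with
\[
H_1(\theta^1) \;\doteq\; \bigcap_{\epsilon>0} \bar{co}\bigl\{h_1(\theta^1,\phi^{\star}(\eta)) : |\eta-\theta^1|<\epsilon\bigr\},
\]
verify a linear growth bound $|h_1(\theta^1,\phi^{\star}(\theta^1))| \le K(1+|\theta^1|)$ on the bounded parameter domain, and apply Corollary 4 in Chapter 5 of \citet{borkar2009stochastic} in place of the smooth ODE theorem. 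A secondary check is that the tracking bias $\varepsilon_m$ enters as $o(1)$ noise compatible with the stochastic approximation framework, which follows from Lipschitz continuity of $h_1$ in its second argument together with the almost sure tracking established in Lemma \ref{lemma4}.
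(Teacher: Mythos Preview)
Your proposal is correct and follows essentially the same approach as the paper. The paper in fact omits the proof entirely, stating only that it is similar to Lemma \ref{lemma3} and Theorem \ref{theorem3}; your plan mirrors exactly that template---using Lemma \ref{lemma4} to reduce to a slow-scale iteration in $\theta^1$, passing to a Filippov differential inclusion to handle the piecewise-smooth right-hand side, and invoking Corollary 4 in Chapter 5 and Theorem 2 in Chapter 6 of \citet{borkar2009stochastic}---so you have reconstructed precisely what the authors intended.
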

The proofs of Lemma \ref{lemma4} and Theorem \ref{theorem5} are similar to Lemma \ref{lemma3} and Theorem \ref{theorem3} and we omit them.
\subsection{Algorithmic Complexity Analysis}
In this section, we analyze the computation complexity of the FSDA algorithm shown in Algorithm \ref{Algo:1}. The computational complexity is associated with the scale of the pricing and replenishment problem under consideration, including the length of the decision period and the dimensions of the state and action. Additionally, the computational complexity is also influenced by the specific design of the neural networks in the FSDA, such as the input/output width and the number of hidden layers of the MLP, the number of GRUs and the number of episodes for training. We provide the computation complexity of the model training and the model inference in Proposition \ref{pro:2}.

\begin{proposition}\label{pro:2}
Suppose that the dimension of state $s_t^i$ for each agent $i \in \{1, . . .,N\}$ in each period $t \in \{1, . . .,T\}$ is $D_I$, the dimension of the action $a_t^i$ is $D_A$, the width of MLP1's hidden layers are both $W_1$, and the input width of MLP2 is $W_2$. Then the computational complexity of model training with $M$ episodes is
\begin{equation}
    O\Big(\big((D_I+W_1) W_1+(W_1+3W_2)W_2+ W_2 D_A \big)NMT \Big).
\end{equation}
When M equals 1, it represents the computational complexity of model inference.
\end{proposition}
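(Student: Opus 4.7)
The plan is to tally the per-time-step cost of one forward (and backward) pass through a single actor network, then multiply by the obvious factors $N$, $T$, and $M$, and finally verify that the critic network and the two-timescale asynchrony do not inflate the big-$O$ expression. Throughout, I will use the standard fact that backpropagation through a feed-forward or GRU layer has the same asymptotic cost as the corresponding forward pass, so only the forward cost needs to be counted carefully.

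First I would break the forward pass into three blocks, matching the three summands in the claimed bound. The MLP1 encoder maps $s_t^i \in \mathbb{R}^{D_I}$ through two hidden layers of width $W_1$, costing $O(D_I W_1 + W_1^2) = O((D_I+W_1)W_1)$. The two stacked GRU cells each have hidden width $W_2$; GRU1 ingests the MLP1 output of width $W_1$, while GRU2 ingests GRU1's output of width $W_2$. A GRU cell with input width $W_{\text{in}}$ and hidden width $W_h$ performs three gate computations, each an affine map of a $W_{\text{in}}+W_h$ vector to a $W_h$ vector, for a cost of $3(W_{\text{in}}+W_h)W_h$. Summing GRU1 and GRU2 and absorbing constants gives $O(W_1 W_2 + W_2^2) = O((W_1+3W_2)W_2)$. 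Finally, MLP2 together with the softmax maps the $W_2$-dimensional feature to the $D_A$-dimensional action distribution at cost $O(W_2 D_A)$.

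Adding these three blocks gives a per-agent, per-period cost of
\begin{equation*}
C := O\!\left((D_I+W_1)W_1 + (W_1+3W_2)W_2 + W_2 D_A\right).
\end{equation*}
Each episode simulates $T$ periods and queries all $N$ actors at every period, so the rollout plus the backpropagation on the collected trajectory costs $O(NTC)$ per episode (backward pass is same order as forward pass). Summing over $M$ episodes yields the stated bound $O(NMTC)$. Setting $M=1$ restricts to a single rollout with no gradient update, i.e.\ model inference, giving the corresponding inference complexity.

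The only subtleties, which I would handle briefly at the end, are (i) the critic network shares the same MLP–GRU–GRU architecture but maps to a scalar value rather than to $D_A$, so its per-step cost is dominated by $(D_I+W_1)W_1 + (W_1+3W_2)W_2$ and is absorbed into $C$; and (ii) under the two-timescale schedule the slow actor is updated only once every $k$ episodes, while the fast actor and critic update every episode, so the true training work is upper bounded by (rather than equal to) $O(NMTC)$, which is consistent with the big-$O$ claim. I expect the main bookkeeping obstacle to be making the GRU gate count transparent so that the factor of $3$ in front of $W_2^2$ in the stated form is visibly correct; this is where I would be most explicit about the three gates and the input/hidden dimensions of each cell.
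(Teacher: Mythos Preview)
Your approach is essentially the paper's: count the per-step forward cost of the actor network module by module, invoke the standard fact that backpropagation is the same order as the forward pass, and multiply by $N$, $T$, $M$. Your result and the paper's agree.

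One small divergence worth flagging, since you explicitly anticipate it as the ``main bookkeeping obstacle'': the coefficient $3$ in the term $(W_1+3W_2)W_2$ does \emph{not} arise in the paper from the three GRU gates. The paper counts only the dominant gate (the candidate hidden state) in each GRU cell, giving $(W_1+W_2)W_2$ for GRU1 and $2W_2\cdot W_2$ for GRU2; summing yields $W_1W_2+3W_2^2=(W_1+3W_2)W_2$ exactly. Your three-gate count produces $3(W_1+3W_2)W_2$, which is the same in big-$O$ after absorbing the constant, so your argument is still correct---but the explanation you plan to give for the visible ``$3$'' would not match the paper's derivation. Your additional remarks on the critic network and the two-timescale asynchrony are sound and go slightly beyond what the paper spells out.
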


\subsection{Baseline Policies}
We implement three classic heuristic policies as baselines to evaluate the performance of our algorithm. The first is the BSLP policy, which operates on two principles: if the beginning inventory $x_t$ at time $t$ falls below the base stock level $y^*_t$, the inventory is increased to that level while charging a specific list price $p^*_t$; second, if $x_t$ exceeds the base stock level $y^*_t$, no replenishment occurs, and the product is sold at price $p^*_t(x_t)$, which decreases as the inventory level $x$ increases. 
\par
The second baseline policy, known as the (s,S,p) policy, follows these principles: whenever the on-hand inventory $x_t$ at the start of period $t$ goes below $s$, the seller replenishes up to level $S$; if $x_t > s$, the seller orders nothing and charges price $p(x_t)$ that decreases with inventory level $x_t$.  \par
The third baseline policy, proposed by \citet{bernstein2016simple}, is referred to as ‘Myopic’ in this paper. This heuristic involves a myopic pricing policy that generates each period’s price $p_t$ as a function of the beginning inventory level $x_t$ at time $t$ and a base-stock policy for inventory replenishment. The inventory position weights the on-hand and pipeline inventory according to a factor that reflects the sensitivity of price to the net inventory level.  \par
\citet{tunc2011cost} propose two approaches  to extend these policies for use under non-stationary demand. The first approach is to find a stationary demand pattern which best fits the original demand for the given problem, which requires a search through various demand functions. The second approach is to find a stationary policy that provides the minimum cost for the actual non-stationary demand, which requires a search on
various policy parameters. We opt for the first method and simulate 10,000 demand-price pairs to identify the most suitable demand pattern. Subsequently, the pricing and replenishment decisions are jointly optimized using these heuristic methods. \par

\subsection{Experimental Results}
Our demand model described in Eq.(\ref{eq:2}) originates from the work of \citet{schlosser2018dynamic}, who analyze stochastic dynamic pricing models in competitive markets with multiple offer dimensions such as product qualities and seller ratings. In the subsequent study, detailed in \citet{schlosser2019dynamic}, they incorporate market reference prices, which reflect historical market prices, as well as competitor strategies and price rankings. This enhancement significantly extends the previous demand model, capturing more effectively the dynamics of competition on demand. The authors also calibrate this model using a real dataset from Amazon. Therefore, we utilize their refined model to simulate demand generation in this section. \par 
\citet{schlosser2019dynamic} consider the following regressors related to sales:\par
 \begin{itemize}[align=left]
    \item[(i)] constant / intercept: $\kappa_1(p,o,j) = 1 $,
    \item[(ii)] rank of price $p$ compared to price $o$: $\kappa_2(p,o,j) = 1+(1_{\{o < p\}}+1_{\{o = p\}}) / 2$, 
    \item[(iii)] price gap between price $p$ and price $o$: $\kappa_3(p,o,j) = o-p$,
    \item[(iv)] total number of competitors: $\kappa_4(p,o,j) = 1$,
    \item[(v)] average price level: $\kappa_5(p,o,j) = (p+o)/2$,
    \item[(vi)] reference price effect: $\kappa_6(p,o,j) = p-j$,
\end{itemize}
where $1_{\{o < p\}}$ is an indicator function that equals 1 when $o < p$, and 0 otherwise. For specific values of $\Delta$ and the update rules for the reference price $j$, we refer to \citet{schlosser2019dynamic} who point  out that overall customer behavior reflects that lower price levels increase sales and price ranking is the most noticeable feature. In our experimental setting, the competitor will slightly lower its price each time based on our pricing until it reaches the lowest price in the price range, after which it will raise it to the highest price. We also test different competitor pricing strategies, such as randomly selecting prices from a uniform distribution within a specified range, and find that the competitor's pricing strategy has a negligible effect on our experimental results.\par

Considering that the (s,S,p) policy is designed for pricing and replenishment problem with fixed ordering cost, we conduct experiments on pricing and replenishment with fixed cost and modeling it into product’s profit as
\begin{equation}
    r_t = p_t S_t - h I_t - b L_t - c q_t - f 1_{\{q_t>0\}},
\end{equation}
where $f$ is the unit fixed cost and $1_{\{x > 0\}}$ is an indicator function which takes 1 when $x > 0$ and takes 0 otherwise.
\par 
Based on the demand Eq.(\ref{eq:2}), we design the following experiments to evaluate the performance of our algorithm when dealing with the dynamic pricing and replenishment under competition problem: (a) demand is backlogged when inventory is insufficient, without considering fixed ordering costs; (b) demand is lost when inventory is insufficient, without considering fixed ordering costs; (c) demand is backlogged when inventory is insufficient, with consideration of fixed ordering costs; (d) demand is lost when inventory is insufficient, with consideration of fixed ordering costs. 
In all experiments, the lead time was set to 3 periods. The experimental results show that our algorithm performs well across various scenarios, regardless of whether unmet demand is lost or whether fixed ordering costs are considered. \par
 \begin{figure}
  \centering
  \caption{Comparison of reward under different scenarios}
  \includegraphics[width=0.7\textwidth]{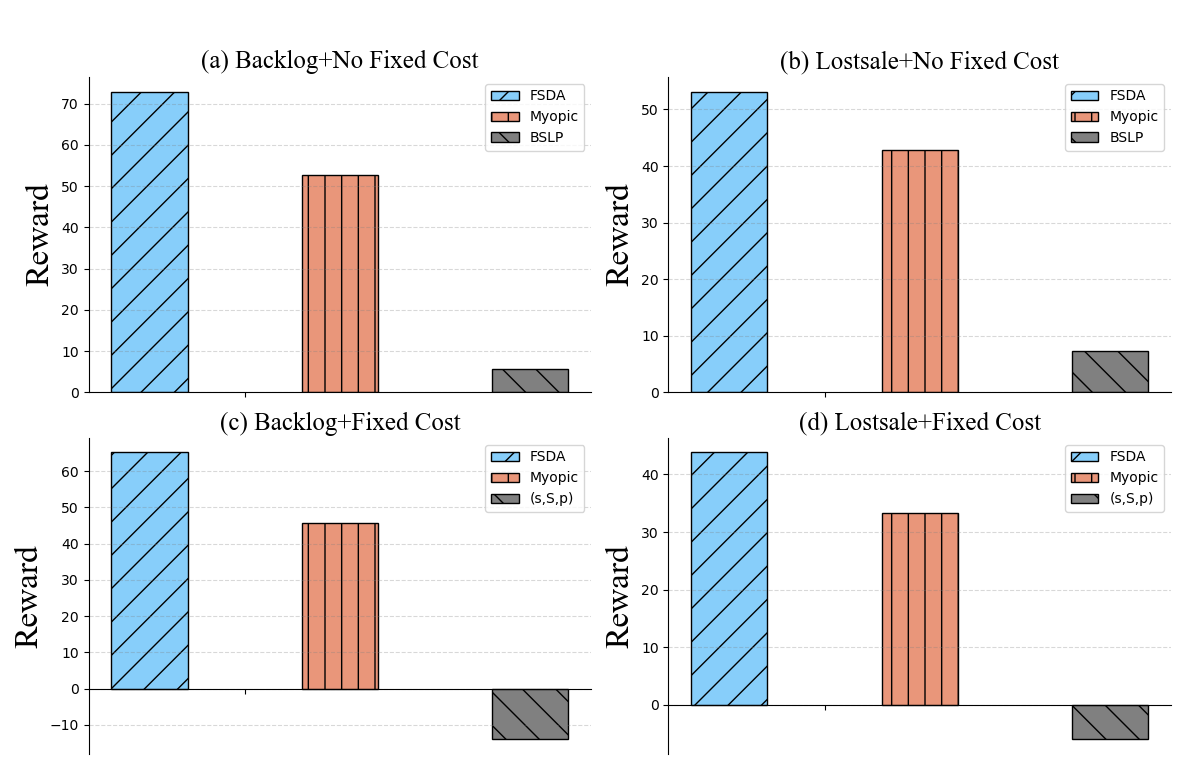} % 插入图片
  
  \label{fig:Comparison}
\end{figure}

Comparison of reward in Figure \ref{fig:Comparison} shows that FSDA achieves the highest sales profits (rewards) in all cases. The numerical results are analyzed from two aspects:
\subsubsection{Unmet Demand: Backlogged or Lost-sales} \par
    The FSDA consistently outperforms heuristic algorithms in scenarios with backlogged demand and lost demand. Specifically, in the case of unmet demand being backlogged, FSDA achieves an average profit improvement of 40.37\% compared to the Myopic policy and 1167.92\% compared to BSLP. In the case of unmet demand being lost, FSDA achieves an average profit improvement of 27.78\% compared to the Myopic policy and 631.18\% compared to BSLP. Since the (s,S,p) policy shows negative profits, it is not included in the comparison here. Among heuristic methods, the Myopic policy performs better than others; this may be due to its consideration of the impact of purchase costs in the next period on the current period’s pricing decision \citep{bernstein2016simple}. Because the FSDA is capable of considering longer-term future returns, it more effectively manages situations where unmet demand can be fulfilled in subsequent periods. Meanwhile, the BSLP and (s, S, p) policies are clearly at a disadvantage in comparison. The two policies react only when inventory falls below a threshold, potentially resulting in missed opportunities during peak demand periods or overstock during low demand periods, unlike the FSDA, which continuously learns and predicts demand patterns. In lost-sales scenarios, where unmet demand leads to a permanent loss, the higher rewards obtained by FSDA highlight its proficiency in mitigating the negative impacts of lost-sales, emphasizing its ability to make more informed and adaptive decisions in such challenging conditions.  
  
\subsubsection{Cost Consideration: No Fixed Cost or Fixed Cost}
        The FSDA provides higher sales profits than the heuristic methods in scenarios with and without fixed costs. Specifically, without considering the fixed ordering cost, FSDA achieves an average profit improvement of 30.90\% compared to the Myopic policy and an average improvement of 899.55\% compared to BSLP. When considering the fixed ordering cost, FSDA achieves an average profit improvement of 37.25\% compared to the Myopic policy, while the (s,S,p) policy yields negative profits and is therefore not compared. BSLP maintains a base stock level and adjusts prices based on inventory, which does not effectively handle unexpected demand surges or drops. The (s,S,p) policy, specifically designed to handle scenarios with fixed ordering costs, does not show the expected advantage when such costs are introduced. This could be attributed to the fact that the (s,S,p) policy being structured to reduce ordering frequency may fail to capture the benefits of more dynamic pricing and inventory adjustments that could potentially capitalize on market fluctuations. The Myopic policy, on the other hand, adjusts prices based on current inventory and purchase costs, showing slightly better flexibility in reacting to market changes. Although fixed costs significantly impair all strategies' overall profitability, the impact on the Myopic and FSDA is less pronounced.
        
\begin{figure}[h!]
\centering

\caption{Pricing and beginning inventory under different policies}
% 第一个子图
\begin{subfigure}[b]{0.5\textwidth}
   \caption{\small Myopic policy}
   \includegraphics[width=\textwidth]{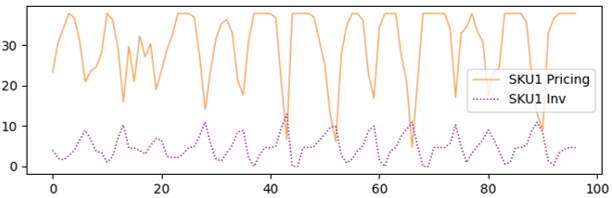}
   
   \label{fig:sub1}
\end{subfigure}

% 第二个子图
\begin{subfigure}[b]{0.5\textwidth}
   \caption{\small Our policy}
   \includegraphics[width=\textwidth]{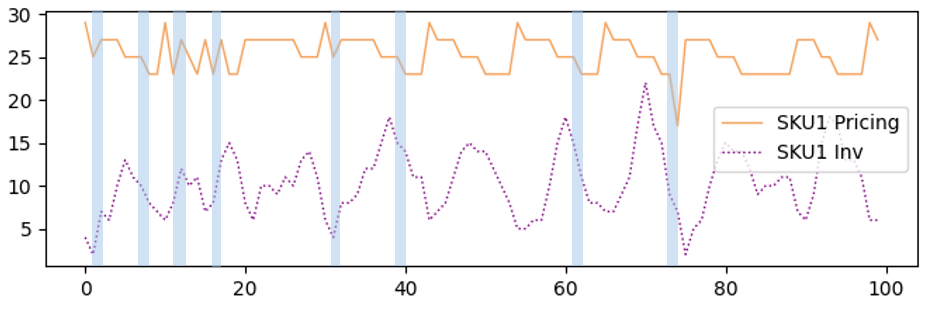}
   
   \label{fig:sub2}
\end{subfigure}

\label{fig:pricing and beginning inv}
\end{figure}

 Myopic generally performs better than the other heuristics, while it falls short of FSDA's performance. 
We illustrate the pricing and the beginning inventory at each period over 100 periods for the same product under different policies (Myopic and FSDA) in Figure \ref{fig:pricing and beginning inv}. Under the Myopic policy, pricing and beginning inventory levels at each period consistently follow a predefined relationship. Specifically, pricing is a non-increasing function with respect to the beginning inventory at each period. In contrast, under FSDA, the relationship between pricing and inventory is more complex and flexible. The bars in Figure \ref{fig:sub2} illustrate that pricing is not necessarily a non-increasing function of the beginning inventory at each period. When making pricing decisions in a real market environment, it is essential to consider not only inventory but also various other potential market factors.

\section{Data-driven Demand Modeling}
In this section, we start by training a decision tree-based ML model using real market datasets to forecast market demand. The ML model learns patterns and relationships directly from the data, enabling more accurate demand forecasts and supporting more scientific decision-making. 
 Then, we apply FSDA to optimize the pricing and replenishment strategies based on the trained ML demand model. %With little modification, MADRL developed for multi-product dynamic competitive environment can be easily adapted to Correlated Multi-Product Dynamic Pricing and Replenishment System.
\subsection{Problem Introduction}
\subsubsection{Data and Preprocessing}
The data is sourced from the 2018 CCF Big Data \& Computational Intelligence Contest (CCF BDCI), which provides demand data for the Saudi Arabian market from March 1, 2017, to March 16, 2018, as well as product attribute data (price, brand, and category information), history demand data, and context features (such as season and holidays). \par
We choose LightGBM (Light Gradient Boosting Machine) as our demand forecasting model because of its outstanding performance and efficiency in handling complex data patterns \citep{ke2017lightgbm}. The detailed training process can be found in Appendix D. In order to better capture the trend of demand changes, we extract statistical features related to historical demand. These include aggregate features of daily/monthly product sales, such as maximum, average, and minimum daily/monthly sales, as well as the variance in daily/monthly sales. Considering that temporal factors can influence demand \citep{ehrenthal2014demand}, we incorporate time-related attributes, including seasons, weekends, weekdays, holidays, and their interactions on product demand by creating cross-statistics for different contextual features. Since customers consider product attributes and prices when purchasing products, we also extract features on product attributes and prices. The former includes: the product's first, second, and third-level categories (with lower levels indicating broader scopes), and product brands. The latter includes statistical features of the price, such as the current price, historical average price, and highest price. Additionally, to more accurately predict user purchasing behavior (implying demand occurs), we also collect features related to user behavior, including the number of times a product is added to a shopping cart, the number of times it is favorited or clicked. The feature importance ranking is shown in Figure \ref{fig:feature} in Appendix D, where we find that the ‘Date of Sale’ and ‘Avg. of daily sales of SKU ID stand out as highly influential features. Additionally, the features ‘The ratio of current price to the average price’ and ‘User Behavior’ are also emphasized as having a significant impact, highlighting the importance of analyzing relative prices and user behavior in demand forecasting. Although ‘Product Attributes’ are slightly more influential, they still rank lower on the importance scale, indicating that their impact on market demand predictions is less critical than temporal and sales-related features.  \par

Subsequently, we integrate the trained LightGBM model into the RL training environment. That is, DRL agents make pricing decisions for products, the LightGBM model receives these prices and gives predicted market demand in the current time period. Simultaneously, the current profits are calculated by checking the inventory, following the similar dynamic process referenced in Equations (\ref{eq:3})-(\ref{eq:6}). 

\subsection{Baseline Policy and Experimental Results}

\begin{figure}[h!]
  \centering
  \caption{Comparison of reward under data-driven system}
  \includegraphics[width=0.8\textwidth]{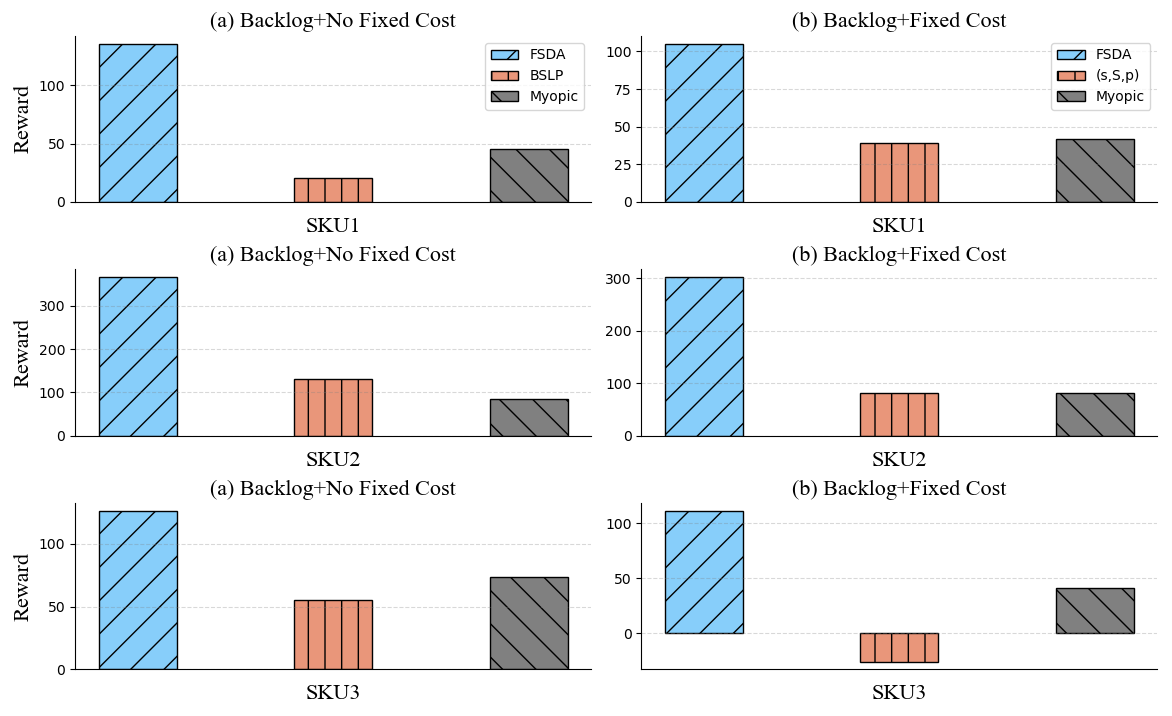} % 插入图片
  
  \label{fig:HAPPO+Myopic}
\end{figure}
We assume that a ‘period’ in our model corresponds to one day. We select three products from the dataset to optimize their pricing and replenishment decisions, respectively, and compare the performance of the FSDA with three heuristic methods (BSLP, (s,S,p), and Myopic) in the context of data-driven problems. The experimental comparison results, shown in Figure \ref{fig:HAPPO+Myopic}, reveal that FSDA consistently outperforms the other policies in all scenarios, maintaining high profits regardless of the presence of fixed costs. Specifically, without considering the fixed ordering cost, the average sales profit achieved by FSDA increases by 293.83\% compared to the BSLP method and by 197.28\% compared to the Myopic method. In the setting with a fixed ordering cost, FSDA increases average sales profit by 218.47\% compared to the (s,S,p) policy (excluding SKU3, which has negative sales profit under the (s,S,p) policy) and by 199.50\% compared to the Myopic method. Unlike heuristic methods that rely on predetermined assumptions, the FSDA  dynamically adapts to real-time information and demonstrates its superior ability in addressing data-driven challenges. The (s,S,p) policy results in a negative reward for SKU3, indicating it is ill-suited for dynamic market conditions that require rapid adjustments in inventory and pricing. \par

\subsection{Correlated Multi-Product Dynamic Pricing and Replenishment}
In this section, we explore the correlated multi-product dynamic pricing and replenishment problem, where the demand for one product is influenced by the pricing and inventory levels of others. Given these interdependencies, it is crucial to consider all products' pricing and inventory simultaneously when making decisions for any single product.

\subsubsection{Markov Game Formulation}
MG formulation in this section is generally consistent with Section 3.4.1, with the following differences: (i) The number of agents $N \in \mathbb{N}^+ $ equals the product of the number of products and decisions, with the latter being two, specifically pricing and replenishment decisions. (ii) The joint reward $r_t = \mathcal{R}(s_t|\bold{a}_t)$ is defined as the total sales profit of all products in period $t$, given by
\begin{equation}\label{eq:7}
    r_t = \sum_{i \in N} (p^i_t U^i_t - h^i I^i_t - b^i L^i_t - c^i q^i_t), 
\end{equation}
where the superscript $i$ represents  product $i$. For example, $p^i_t$ represents the price offered by the agent for product $i$ in period $t$.

\subsubsection{Experimental Results}
Based on cosine similarity calculations, we identify 3 products with correlated demand from the original data and proceed to apply FSDA to optimize both their pricing and replenishment strategies simultaneously. We compare the performance of FSDA and four heuristic methods (BSLP, (s,S,p), Myopic and ‘Subs’) in the context of correlated MPDPR problems. BSLP, (s,S,p) and Myopic are the same as Section 3.7. Since these strategies only consider a single product, during the experiments, we can only optimize multiple products separately using these three strategies. The ‘Subs’ heuristic method, developed by \citet{song2023dynamic}, optimally handles dynamic pricing and inventory decisions for substitutable products whose demands depend on their prices. In each period, ‘Subs’ policy first
pinpoints products that are overstocked and thus require no replenishment. It then specifies
the optimal base-stock levels for the understocked products and the optimal market shares (and
prices) for all products, contingent on the overstock levels. Since the correlated multi-product dynamic pricing and replenishment  problem considered in this section involves not only substitutable products (negative demand correlation) like SKU1 and SKU2, but also complementary products (positive demand correlation) like SKU2 and SKU3, the ‘Subs’ is only applied to the joint decisions of SKU1 and SKU2 when compared with our method.
 The experimental comparison results are shown in Figure \ref{fig:FSDA_4baselines}. \par

\begin{figure}[h!]
  \centering
  \caption{Comparison of reward under correlated multi-product system}
  \includegraphics[width=0.7\textwidth]{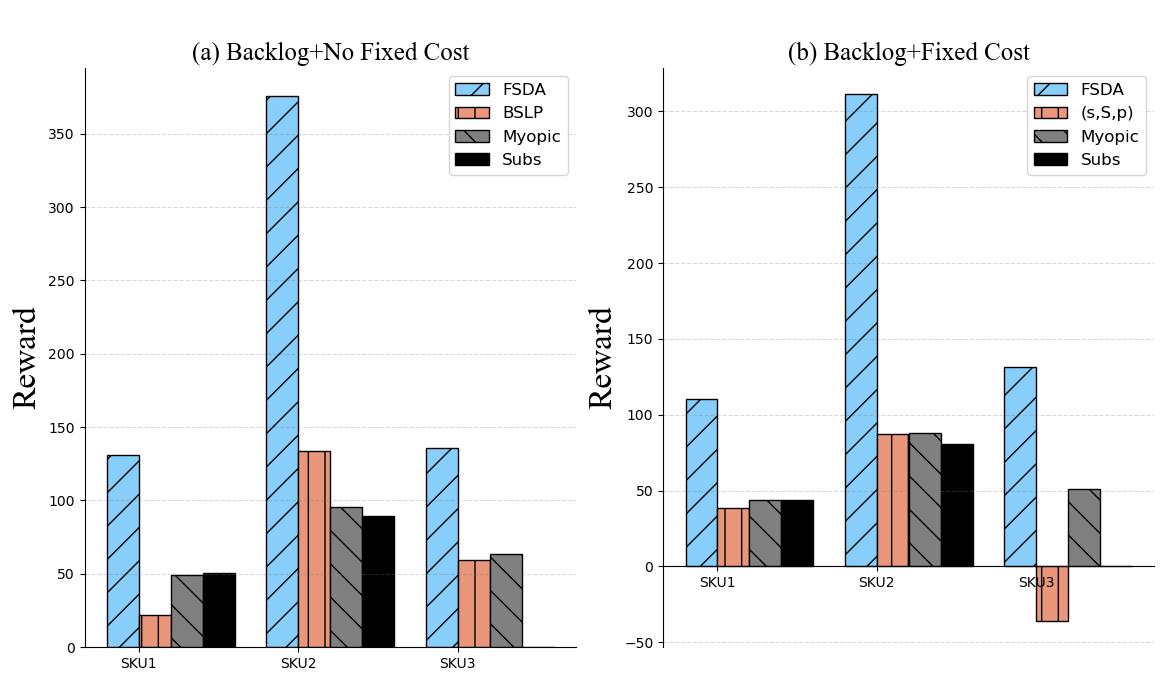} % 插入图片
  
  \label{fig:FSDA_4baselines}
\end{figure}
Figure \ref{fig:FSDA_4baselines} shows that FSDA achieves the highest sales profits in scenarios with and without fixed costs, significantly outperforming other strategies. Specifically, without considering fixed ordering costs, FSDA increases total profit of all products by 198.35\% compared to BSLP, 208.17\% compared to Myopic, and 261.59\% compared to Subs. With fixed ordering costs included, FSDA outperforms (s,S,p) by 514.32\%, Myopic by 201.94\%, and Subs by 238.98\%. The advantage of the Myopic policy considering the impact of next period's purchase costs is also no longer apparent. The (s,S,p) policy, which focuses on reducing ordering frequency, fails to adapt to the demand fluctuations caused by product interdependencies and other factors in the market, resulting in poor performance in SKU3. Additionally, although the ‘Subs’ policy is optimal for substitutable products whose demands depend on their prices, it nearly fails in this multi-product system where demand is also influenced by many other factors. 

\section{Conclusion}
This paper studies the dynamic pricing and replenishment problems. A two-timescale stochastic approximation scheme is proposed to address the problem of joint pricing and replenishment decisions. Specifically, we first investigate a single-product dynamic pricing and replenishment problem, taking into account competitor strategies and market conditions. Subsequently, we enhance our model by incorporating a decision tree-based ML model for demand forecasting. Numerical results for both single product and multiple correlated products demonstrate the effectiveness of our approach. We find that in joint pricing and replenishment decisions, focusing solely on inventory levels and the impact of price on demand may not necessarily lead to profit improvement. While these factors are crucial, they offer a limited perspective when applied in practice. Our findings suggest that profit optimization requires a more holistic approach, one that integrates a broader range of market dynamics. Factors such as customer behavior, competitor strategies, and sales timing all play critical roles in optimizing pricing and replenishment strategies. By accounting for these multiple market influences, companies can not only improve the accuracy of demand forecasting but also enhance their ability to align pricing with supply chain operations, ultimately driving greater profitability and operational efficiency. \par
In principle, our method can be applied to any scenario where multiple decisions interact but occur at different frequencies. There are several possible extensions of our current work. For example, the present research involves a relatively small scale of multi-product problems; in future studies, expanding the scale would be crucial for e-commerce retailers who currently need to make decisions for many SKUs simultaneously.

\bibliography{main}

\begin{thebibliography}{62}
\expandafter\ifx\csname natexlab\endcsname\relax\def\natexlab#1{#1}\fi
\providecommand{\url}[1]{\texttt{#1}}
\providecommand{\href}[2]{#2}
\providecommand{\path}[1]{#1}
\providecommand{\DOIprefix}{doi:}
\providecommand{\ArXivprefix}{arXiv:}
\providecommand{\URLprefix}{URL: }
\providecommand{\Pubmedprefix}{pmid:}
\providecommand{\doi}[1]{\href{http://dx.doi.org/#1}{\path{#1}}}
\providecommand{\Pubmed}[1]{\href{pmid:#1}{\path{#1}}}
\providecommand{\bibinfo}[2]{#2}
\ifx\xfnm\relax \def\xfnm[#1]{\unskip,\space#1}\fi
%Type = Book
\bibitem[{Arrow et~al.(1958)Arrow, Karlin, Scarf et~al.}]{arrow1958studies}
\bibinfo{author}{Arrow, K.J.}, \bibinfo{author}{Karlin, S.}, \bibinfo{author}{Scarf, H.E.}, et~al., \bibinfo{year}{1958}.
\newblock \bibinfo{title}{Studies in the mathematical theory of inventory and production}.
\newblock \bibinfo{publisher}{Stanford University Press, Stanford, CA}.
%Type = Article
\bibitem[{Arulkumaran et~al.(2017)Arulkumaran, Deisenroth, Brundage and Bharath}]{arulkumaran2017deep}
\bibinfo{author}{Arulkumaran, K.}, \bibinfo{author}{Deisenroth, M.P.}, \bibinfo{author}{Brundage, M.}, \bibinfo{author}{Bharath, A.A.}, \bibinfo{year}{2017}.
\newblock \bibinfo{title}{Deep reinforcement learning: A brief survey}.
\newblock \bibinfo{journal}{IEEE Signal Processing Magazine} \bibinfo{volume}{34}, \bibinfo{pages}{26--38}.
%Type = Article
\bibitem[{Aviv and Pazgal(2005)}]{aviv2005partially}
\bibinfo{author}{Aviv, Y.}, \bibinfo{author}{Pazgal, A.}, \bibinfo{year}{2005}.
\newblock \bibinfo{title}{A partially observed markov decision process for dynamic pricing}.
\newblock \bibinfo{journal}{Management Science} \bibinfo{volume}{51}, \bibinfo{pages}{1400--1416}.
%Type = Article
\bibitem[{Babai et~al.(2011)Babai, Jemai and Dallery}]{babai2011analysis}
\bibinfo{author}{Babai, M.Z.}, \bibinfo{author}{Jemai, Z.}, \bibinfo{author}{Dallery, Y.}, \bibinfo{year}{2011}.
\newblock \bibinfo{title}{Analysis of order-up-to-level inventory systems with compound poisson demand}.
\newblock \bibinfo{journal}{European Journal of Operational Research} \bibinfo{volume}{210}, \bibinfo{pages}{552--558}.
%Type = Article
\bibitem[{Bernstein et~al.(2016)Bernstein, Li and Shang}]{bernstein2016simple}
\bibinfo{author}{Bernstein, F.}, \bibinfo{author}{Li, Y.}, \bibinfo{author}{Shang, K.}, \bibinfo{year}{2016}.
\newblock \bibinfo{title}{A simple heuristic for joint inventory and pricing models with lead time and backorders}.
\newblock \bibinfo{journal}{Management Science} \bibinfo{volume}{62}, \bibinfo{pages}{2358--2373}.
%Type = Article
\bibitem[{Bhatnagar and Borkar(1998)}]{bhatnagar1998two}
\bibinfo{author}{Bhatnagar, S.}, \bibinfo{author}{Borkar, V.S.}, \bibinfo{year}{1998}.
\newblock \bibinfo{title}{A two timescale stochastic approximation scheme for simulation-based parametric optimization}.
\newblock \bibinfo{journal}{Probability in the Engineering and Informational Sciences} \bibinfo{volume}{12}, \bibinfo{pages}{519--531}.
%Type = Incollection
\bibitem[{Bijvank et~al.(2023)Bijvank, Huh and Janakiraman}]{bijvank2023lost}
\bibinfo{author}{Bijvank, M.}, \bibinfo{author}{Huh, W.T.}, \bibinfo{author}{Janakiraman, G.}, \bibinfo{year}{2023}.
\newblock \bibinfo{title}{Lost-sales inventory systems}, in: \bibinfo{booktitle}{Research Handbook on Inventory Management}. \bibinfo{publisher}{Edward Elgar Publishing}, pp. \bibinfo{pages}{2--26}.
%Type = Article
\bibitem[{Bisi and Dada(2007)}]{bisi2007dynamic}
\bibinfo{author}{Bisi, A.}, \bibinfo{author}{Dada, M.}, \bibinfo{year}{2007}.
\newblock \bibinfo{title}{Dynamic learning, pricing, and ordering by a censored newsvendor}.
\newblock \bibinfo{journal}{Naval Research Logistics} \bibinfo{volume}{54}, \bibinfo{pages}{448--461}.
%Type = Book
\bibitem[{Borkar(2009)}]{borkar2009stochastic}
\bibinfo{author}{Borkar, V.S.}, \bibinfo{year}{2009}.
\newblock \bibinfo{title}{Stochastic approximation: a dynamical systems viewpoint}. volume~\bibinfo{volume}{48}.
\newblock \bibinfo{publisher}{Springer}.
%Type = Article
\bibitem[{Brynjolfsson and Smith(2000)}]{brynjolfsson2000frictionless}
\bibinfo{author}{Brynjolfsson, E.}, \bibinfo{author}{Smith, M.D.}, \bibinfo{year}{2000}.
\newblock \bibinfo{title}{Frictionless commerce? a comparison of internet and conventional retailers}.
\newblock \bibinfo{journal}{Management Science} \bibinfo{volume}{46}, \bibinfo{pages}{563--585}.
%Type = Article
\bibitem[{Chan et~al.(2006)Chan, Simchi-Levi and Swann}]{chan2006pricing}
\bibinfo{author}{Chan, L.M.A.}, \bibinfo{author}{Simchi-Levi, D.}, \bibinfo{author}{Swann, J.}, \bibinfo{year}{2006}.
\newblock \bibinfo{title}{Pricing, production, and inventory policies for manufacturing with stochastic demand and discretionary sales}.
\newblock \bibinfo{journal}{Manufacturing \& Service Operations Management} \bibinfo{volume}{8}, \bibinfo{pages}{149--168}.
%Type = Article
\bibitem[{Chen and Simchi-Levi(2004)}]{chen2004coordinating}
\bibinfo{author}{Chen, X.}, \bibinfo{author}{Simchi-Levi, D.}, \bibinfo{year}{2004}.
\newblock \bibinfo{title}{Coordinating inventory control and pricing strategies with random demand and fixed ordering cost: The finite horizon case}.
\newblock \bibinfo{journal}{Operations Research} \bibinfo{volume}{52}, \bibinfo{pages}{887--896}.
%Type = Incollection
\bibitem[{Chen and Simchi-Levi(2012)}]{chen2012pricing}
\bibinfo{author}{Chen, X.}, \bibinfo{author}{Simchi-Levi, D.}, \bibinfo{year}{2012}.
\newblock \bibinfo{title}{Pricing and inventory management}, in: \bibinfo{booktitle}{The Oxford Handbook of Pricing Management}. \bibinfo{publisher}{Oxford University Press}, \bibinfo{address}{United Kingdom}, pp. \bibinfo{pages}{784--824}.
%Type = Article
\bibitem[{Cohen et~al.(2018)Cohen, Lobel and Perakis}]{cohen2018dynamic}
\bibinfo{author}{Cohen, M.C.}, \bibinfo{author}{Lobel, R.}, \bibinfo{author}{Perakis, G.}, \bibinfo{year}{2018}.
\newblock \bibinfo{title}{Dynamic pricing through data sampling}.
\newblock \bibinfo{journal}{Production and Operations Management} \bibinfo{volume}{27}, \bibinfo{pages}{1074--1088}.
%Type = Inproceedings
\bibitem[{Dey and Salem(2017)}]{dey2017gate}
\bibinfo{author}{Dey, R.}, \bibinfo{author}{Salem, F.M.}, \bibinfo{year}{2017}.
\newblock \bibinfo{title}{Gate-variants of gated recurrent unit (gru) neural networks}, in: \bibinfo{booktitle}{2017 IEEE 60th International Midwest Symposium on Circuits and Systems}, \bibinfo{organization}{IEEE}. pp. \bibinfo{pages}{1597--1600}.
%Type = Article
\bibitem[{Ehrenthal et~al.(2014)Ehrenthal, Honhon and Van~Woensel}]{ehrenthal2014demand}
\bibinfo{author}{Ehrenthal, J.}, \bibinfo{author}{Honhon, D.}, \bibinfo{author}{Van~Woensel, T.}, \bibinfo{year}{2014}.
\newblock \bibinfo{title}{Demand seasonality in retail inventory management}.
\newblock \bibinfo{journal}{European Journal of Operational Research} \bibinfo{volume}{238}, \bibinfo{pages}{527--539}.
%Type = Article
\bibitem[{Elmaghraby and Keskinocak(2003)}]{elmaghraby2003dynamic}
\bibinfo{author}{Elmaghraby, W.}, \bibinfo{author}{Keskinocak, P.}, \bibinfo{year}{2003}.
\newblock \bibinfo{title}{Dynamic pricing in the presence of inventory considerations: Research overview, current practices, and future directions}.
\newblock \bibinfo{journal}{Management Science} \bibinfo{volume}{49}, \bibinfo{pages}{1287--1309}.
%Type = Article
\bibitem[{Federgruen and Heching(1999)}]{federgruen1999combined}
\bibinfo{author}{Federgruen, A.}, \bibinfo{author}{Heching, A.}, \bibinfo{year}{1999}.
\newblock \bibinfo{title}{Combined pricing and inventory control under uncertainty}.
\newblock \bibinfo{journal}{Operations Research} \bibinfo{volume}{47}, \bibinfo{pages}{454--475}.
%Type = Article
\bibitem[{Feng et~al.(2020)Feng, Luo and Shanthikumar}]{feng2020integrating}
\bibinfo{author}{Feng, Q.}, \bibinfo{author}{Luo, S.}, \bibinfo{author}{Shanthikumar, J.G.}, \bibinfo{year}{2020}.
\newblock \bibinfo{title}{Integrating dynamic pricing with inventory decisions under lost sales}.
\newblock \bibinfo{journal}{Management Science} \bibinfo{volume}{66}, \bibinfo{pages}{2232--2247}.
%Type = Article
\bibitem[{Feng et~al.(2014)Feng, Luo and Zhang}]{feng2014dynamic}
\bibinfo{author}{Feng, Q.}, \bibinfo{author}{Luo, S.}, \bibinfo{author}{Zhang, D.}, \bibinfo{year}{2014}.
\newblock \bibinfo{title}{Dynamic inventory--pricing control under backorder: Demand estimation and policy optimization}.
\newblock \bibinfo{journal}{Manufacturing \& Service Operations Management} \bibinfo{volume}{16}, \bibinfo{pages}{149--160}.
%Type = Inproceedings
\bibitem[{Filippov(1988)}]{Filippov1988DifferentialEW}
\bibinfo{author}{Filippov, A.F.}, \bibinfo{year}{1988}.
\newblock \bibinfo{title}{Differential equations with discontinuous righthand sides}, in: \bibinfo{booktitle}{Mathematics and Its Applications}.
\newblock \URLprefix \url{https://api.semanticscholar.org/CorpusID:118063268}.
%Type = Article
\bibitem[{Foerster et~al.(2016)Foerster, Assael, De~Freitas and Whiteson}]{foerster2016learning}
\bibinfo{author}{Foerster, J.}, \bibinfo{author}{Assael, I.A.}, \bibinfo{author}{De~Freitas, N.}, \bibinfo{author}{Whiteson, S.}, \bibinfo{year}{2016}.
\newblock \bibinfo{title}{Learning to communicate with deep multi-agent reinforcement learning}.
\newblock \bibinfo{journal}{Advances in Neural Information Processing Systems} \bibinfo{volume}{29}.
%Type = Incollection
\bibitem[{Fu(2006)}]{FU2006575}
\bibinfo{author}{Fu, M.C.}, \bibinfo{year}{2006}.
\newblock \bibinfo{title}{Chapter 19 gradient estimation}, in: \bibinfo{editor}{Henderson, S.G.}, \bibinfo{editor}{Nelson, B.L.} (Eds.), \bibinfo{booktitle}{Simulation}. \bibinfo{publisher}{Elsevier}. volume~\bibinfo{volume}{13} of \textit{\bibinfo{series}{Handbooks in Operations Research and Management Science}}, pp. \bibinfo{pages}{575--616}.
%Type = Article
\bibitem[{Gallego and Van~Ryzin(1994)}]{gallego1994optimal}
\bibinfo{author}{Gallego, G.}, \bibinfo{author}{Van~Ryzin, G.}, \bibinfo{year}{1994}.
\newblock \bibinfo{title}{Optimal dynamic pricing of inventories with stochastic demand over finite horizons}.
\newblock \bibinfo{journal}{Management Science} \bibinfo{volume}{40}, \bibinfo{pages}{999--1020}.
%Type = Article
\bibitem[{Gallego and Van~Ryzin(1997)}]{gallego1997multiproduct}
\bibinfo{author}{Gallego, G.}, \bibinfo{author}{Van~Ryzin, G.}, \bibinfo{year}{1997}.
\newblock \bibinfo{title}{A multiproduct dynamic pricing problem and its applications to network yield management}.
\newblock \bibinfo{journal}{Operations Research} \bibinfo{volume}{45}, \bibinfo{pages}{24--41}.
%Type = Article
\bibitem[{Gijsbrechts et~al.(2021)Gijsbrechts, Boute, Van~Mieghem and Zhang}]{gijsbrechts2021can}
\bibinfo{author}{Gijsbrechts, J.}, \bibinfo{author}{Boute, R.N.}, \bibinfo{author}{Van~Mieghem, J.A.}, \bibinfo{author}{Zhang, D.}, \bibinfo{year}{2021}.
\newblock \bibinfo{title}{Can deep reinforcement learning improve inventory management? performance on dual sourcing, lost sales and multi-echelon problems}.
\newblock \bibinfo{journal}{Manufacturing \& Service Operations Management} .
%Type = Article
\bibitem[{Harold et~al.(1997)Harold, Kushner and Yin}]{Kushner1997StochasticAA}
\bibinfo{author}{Harold, J.}, \bibinfo{author}{Kushner, G.}, \bibinfo{author}{Yin, G.}, \bibinfo{year}{1997}.
\newblock \bibinfo{title}{Stochastic approximation and recursive algorithm and applications}.
\newblock \bibinfo{journal}{Application of Mathematics} \bibinfo{volume}{35}.
%Type = Article
\bibitem[{Hu et~al.(2023)Hu, Song and Fu}]{hu2023quantile}
\bibinfo{author}{Hu, J.}, \bibinfo{author}{Song, M.}, \bibinfo{author}{Fu, M.C.}, \bibinfo{year}{2023}.
\newblock \bibinfo{title}{Quantile optimization via multiple timescale local search for black-box functions}.
\newblock \bibinfo{journal}{accepted by Operation Research} .
%Type = Article
\bibitem[{Jia and Hu(2011)}]{jia2011dynamic}
\bibinfo{author}{Jia, J.}, \bibinfo{author}{Hu, Q.}, \bibinfo{year}{2011}.
\newblock \bibinfo{title}{Dynamic ordering and pricing for a perishable goods supply chain}.
\newblock \bibinfo{journal}{Computers \& Industrial Engineering} \bibinfo{volume}{60}, \bibinfo{pages}{302--309}.
%Type = Article
\bibitem[{Jiang et~al.(2023)Jiang, Hu and Peng}]{jiang2023quantile}
\bibinfo{author}{Jiang, J.}, \bibinfo{author}{Hu, J.}, \bibinfo{author}{Peng, Y.}, \bibinfo{year}{2023}.
\newblock \bibinfo{title}{Quantile-based deep reinforcement learning using two-timescale policy gradient algorithms}.
\newblock \bibinfo{journal}{arXiv preprint arXiv:2305.07248} .
%Type = Article
\bibitem[{Kalman(1960)}]{kalman1960new}
\bibinfo{author}{Kalman, R.E.}, \bibinfo{year}{1960}.
\newblock \bibinfo{title}{A new approach to linear filtering and prediction problems}.
\newblock \bibinfo{journal}{Journal of Basic Engineering} \bibinfo{volume}{3}.
%Type = Article
\bibitem[{Kastius and Schlosser(2021)}]{kastius2021dynamic}
\bibinfo{author}{Kastius, A.}, \bibinfo{author}{Schlosser, R.}, \bibinfo{year}{2021}.
\newblock \bibinfo{title}{Dynamic pricing under competition using reinforcement learning}.
\newblock \bibinfo{journal}{Journal of Revenue and Pricing Management} , \bibinfo{pages}{1--14}.
%Type = Article
\bibitem[{Ke et~al.(2017)Ke, Meng, Finley, Wang, Chen, Ma, Ye and Liu}]{ke2017lightgbm}
\bibinfo{author}{Ke, G.}, \bibinfo{author}{Meng, Q.}, \bibinfo{author}{Finley, T.}, \bibinfo{author}{Wang, T.}, \bibinfo{author}{Chen, W.}, \bibinfo{author}{Ma, W.}, \bibinfo{author}{Ye, Q.}, \bibinfo{author}{Liu, T.Y.}, \bibinfo{year}{2017}.
\newblock \bibinfo{title}{Lightgbm: A highly efficient gradient boosting decision tree}.
\newblock \bibinfo{journal}{Advances in Neural Information Processing Systems} \bibinfo{volume}{30}.
%Type = Article
\bibitem[{Kingma and Ba(2014)}]{kingma2014adam}
\bibinfo{author}{Kingma, D.P.}, \bibinfo{author}{Ba, J.}, \bibinfo{year}{2014}.
\newblock \bibinfo{title}{Adam: A method for stochastic optimization}.
\newblock \bibinfo{journal}{arXiv preprint arXiv:1412.6980} .
%Type = Article
\bibitem[{Kuba et~al.(2021)Kuba, Chen, Wen, Wen, Sun, Wang and Yang}]{kuba2021trust}
\bibinfo{author}{Kuba, J.G.}, \bibinfo{author}{Chen, R.}, \bibinfo{author}{Wen, M.}, \bibinfo{author}{Wen, Y.}, \bibinfo{author}{Sun, F.}, \bibinfo{author}{Wang, J.}, \bibinfo{author}{Yang, Y.}, \bibinfo{year}{2021}.
\newblock \bibinfo{title}{Trust region policy optimisation in multi-agent reinforcement learning}.
\newblock \bibinfo{journal}{arXiv preprint arXiv:2109.11251} .
%Type = Article
\bibitem[{Lei et~al.(2018)Lei, Jasin and Sinha}]{lei2018joint}
\bibinfo{author}{Lei, Y.}, \bibinfo{author}{Jasin, S.}, \bibinfo{author}{Sinha, A.}, \bibinfo{year}{2018}.
\newblock \bibinfo{title}{Joint dynamic pricing and order fulfillment for e-commerce retailers}.
\newblock \bibinfo{journal}{Manufacturing \& Service Operations Management} \bibinfo{volume}{20}, \bibinfo{pages}{269--284}.
%Type = Article
\bibitem[{Liu et~al.(2022)Liu, Hu, Peng and Yang}]{liu2022multi}
\bibinfo{author}{Liu, X.}, \bibinfo{author}{Hu, M.}, \bibinfo{author}{Peng, Y.}, \bibinfo{author}{Yang, Y.}, \bibinfo{year}{2022}.
\newblock \bibinfo{title}{Multi-agent deep reinforcement learning for multi-echelon inventory management}.
\newblock \bibinfo{journal}{accepted by Production and Operations Management} .
%Type = Article
\bibitem[{Lowe et~al.(2017)Lowe, Wu, Tamar, Harb, Pieter~Abbeel and Mordatch}]{lowe2017multi}
\bibinfo{author}{Lowe, R.}, \bibinfo{author}{Wu, Y.I.}, \bibinfo{author}{Tamar, A.}, \bibinfo{author}{Harb, J.}, \bibinfo{author}{Pieter~Abbeel, O.}, \bibinfo{author}{Mordatch, I.}, \bibinfo{year}{2017}.
\newblock \bibinfo{title}{Multi-agent actor-critic for mixed cooperative-competitive environments}.
\newblock \bibinfo{journal}{Advances in Neural Information Processing Systems} \bibinfo{volume}{30}.
%Type = Article
\bibitem[{Mahmoodi(2019)}]{mahmoodi2019joint}
\bibinfo{author}{Mahmoodi, A.}, \bibinfo{year}{2019}.
\newblock \bibinfo{title}{Joint pricing and inventory control of duopoly retailers with deteriorating items and linear demand}.
\newblock \bibinfo{journal}{Computers \& Industrial Engineering} \bibinfo{volume}{132}, \bibinfo{pages}{36--46}.
%Type = Article
\bibitem[{Mantin et~al.(2011)Mantin, Granot and Granot}]{mantin2011dynamic}
\bibinfo{author}{Mantin, B.}, \bibinfo{author}{Granot, D.}, \bibinfo{author}{Granot, F.}, \bibinfo{year}{2011}.
\newblock \bibinfo{title}{Dynamic pricing under first order markovian competition}.
\newblock \bibinfo{journal}{Naval Research Logistics} \bibinfo{volume}{58}, \bibinfo{pages}{608--617}.
%Type = Inproceedings
\bibitem[{Mnih et~al.(2016)Mnih, Badia, Mirza, Graves, Lillicrap, Harley, Silver and Kavukcuoglu}]{mnih2016asynchronous}
\bibinfo{author}{Mnih, V.}, \bibinfo{author}{Badia, A.P.}, \bibinfo{author}{Mirza, M.}, \bibinfo{author}{Graves, A.}, \bibinfo{author}{Lillicrap, T.}, \bibinfo{author}{Harley, T.}, \bibinfo{author}{Silver, D.}, \bibinfo{author}{Kavukcuoglu, K.}, \bibinfo{year}{2016}.
\newblock \bibinfo{title}{Asynchronous methods for deep reinforcement learning}, in: \bibinfo{booktitle}{International Conference on Machine Learning}, \bibinfo{organization}{PMLR}. pp. \bibinfo{pages}{1928--1937}.
%Type = Article
\bibitem[{Mookherjee and Friesz(2008)}]{mookherjee2008pricing}
\bibinfo{author}{Mookherjee, R.}, \bibinfo{author}{Friesz, T.L.}, \bibinfo{year}{2008}.
\newblock \bibinfo{title}{Pricing, allocation, and overbooking in dynamic service network competition when demand is uncertain}.
\newblock \bibinfo{journal}{Production and Operations Management} \bibinfo{volume}{17}, \bibinfo{pages}{455--474}.
%Type = Article
\bibitem[{Morton(1971)}]{morton1971near}
\bibinfo{author}{Morton, T.E.}, \bibinfo{year}{1971}.
\newblock \bibinfo{title}{The near-myopic nature of the lagged-proportional-cost inventory problem with lost sales}.
\newblock \bibinfo{journal}{Operations Research} \bibinfo{volume}{19}, \bibinfo{pages}{1708--1716}.
%Type = Article
\bibitem[{Oroojlooyjadid et~al.(2022)Oroojlooyjadid, Nazari, Snyder and Tak{\'a}{\v{c}}}]{oroojlooyjadid2022deep}
\bibinfo{author}{Oroojlooyjadid, A.}, \bibinfo{author}{Nazari, M.}, \bibinfo{author}{Snyder, L.V.}, \bibinfo{author}{Tak{\'a}{\v{c}}, M.}, \bibinfo{year}{2022}.
\newblock \bibinfo{title}{A deep q-network for the beer game: Deep reinforcement learning for inventory optimization}.
\newblock \bibinfo{journal}{Manufacturing \& Service Operations Management} \bibinfo{volume}{24}, \bibinfo{pages}{285--304}.
%Type = Article
\bibitem[{Rana and Oliveira(2014)}]{rana2014real}
\bibinfo{author}{Rana, R.}, \bibinfo{author}{Oliveira, F.S.}, \bibinfo{year}{2014}.
\newblock \bibinfo{title}{Real-time dynamic pricing in a non-stationary environment using model-free reinforcement learning}.
\newblock \bibinfo{journal}{Omega} \bibinfo{volume}{47}, \bibinfo{pages}{116--126}.
%Type = Article
\bibitem[{Raz and Porteus(2006)}]{raz2006fractiles}
\bibinfo{author}{Raz, G.}, \bibinfo{author}{Porteus, E.L.}, \bibinfo{year}{2006}.
\newblock \bibinfo{title}{A fractiles perspective to the joint price/quantity newsvendor model}.
\newblock \bibinfo{journal}{Management Science} \bibinfo{volume}{52}, \bibinfo{pages}{1764--1777}.
%Type = Book
\bibitem[{Rudin et~al.(1964)}]{rudin1964principles}
\bibinfo{author}{Rudin, W.}, et~al., \bibinfo{year}{1964}.
\newblock \bibinfo{title}{Principles of mathematical analysis}. volume~\bibinfo{volume}{3}.
\newblock \bibinfo{publisher}{McGraw-hill New York}.
%Type = Inproceedings
\bibitem[{Schlosser and Boissier(2018)}]{schlosser2018dynamic}
\bibinfo{author}{Schlosser, R.}, \bibinfo{author}{Boissier, M.}, \bibinfo{year}{2018}.
\newblock \bibinfo{title}{Dynamic pricing under competition on online marketplaces: A data-driven approach}, in: \bibinfo{booktitle}{Proceedings of the 24th ACM SIGKDD International Conference on Knowledge Discovery \& Data Mining}, pp. \bibinfo{pages}{705--714}.
%Type = Article
\bibitem[{Schlosser and Richly(2019)}]{schlosser2019dynamic}
\bibinfo{author}{Schlosser, R.}, \bibinfo{author}{Richly, K.}, \bibinfo{year}{2019}.
\newblock \bibinfo{title}{Dynamic pricing under competition with data-driven price anticipations and endogenous reference price effects}.
\newblock \bibinfo{journal}{Journal of Revenue and Pricing Management} \bibinfo{volume}{18}, \bibinfo{pages}{451--464}.
%Type = Inproceedings
\bibitem[{Schulman et~al.(2015)Schulman, Levine, Abbeel, Jordan and Moritz}]{schulman2015trust}
\bibinfo{author}{Schulman, J.}, \bibinfo{author}{Levine, S.}, \bibinfo{author}{Abbeel, P.}, \bibinfo{author}{Jordan, M.}, \bibinfo{author}{Moritz, P.}, \bibinfo{year}{2015}.
\newblock \bibinfo{title}{Trust region policy optimization}, in: \bibinfo{booktitle}{International Conference on Machine Learning}, \bibinfo{organization}{PMLR}. pp. \bibinfo{pages}{1889--1897}.
%Type = Article
\bibitem[{Schulman et~al.(2017)Schulman, Wolski, Dhariwal, Radford and Klimov}]{schulman2017proximal}
\bibinfo{author}{Schulman, J.}, \bibinfo{author}{Wolski, F.}, \bibinfo{author}{Dhariwal, P.}, \bibinfo{author}{Radford, A.}, \bibinfo{author}{Klimov, O.}, \bibinfo{year}{2017}.
\newblock \bibinfo{title}{Proximal policy optimization algorithms}.
\newblock \bibinfo{journal}{arXiv preprint arXiv:1707.06347} .
%Type = Article
\bibitem[{Schulte and Sachs(2020)}]{schulte2020price}
\bibinfo{author}{Schulte, B.}, \bibinfo{author}{Sachs, A.L.}, \bibinfo{year}{2020}.
\newblock \bibinfo{title}{The price-setting newsvendor with poisson demand}.
\newblock \bibinfo{journal}{European Journal of Operational Research} \bibinfo{volume}{283}, \bibinfo{pages}{125--137}.
%Type = Article
\bibitem[{Simchi-Levi et~al.(2005)Simchi-Levi, Chen, Bramel et~al.}]{simchi2005logic}
\bibinfo{author}{Simchi-Levi, D.}, \bibinfo{author}{Chen, X.}, \bibinfo{author}{Bramel, J.}, et~al., \bibinfo{year}{2005}.
\newblock \bibinfo{title}{The logic of logistics}.
\newblock \bibinfo{journal}{Theory, Algorithms, and Applications for Logistics and Supply Chain Management} .
%Type = Article
\bibitem[{Song et~al.(2023)Song, Song and Shen}]{song2023dynamic}
\bibinfo{author}{Song, J.S.J.}, \bibinfo{author}{Song, Z.X.}, \bibinfo{author}{Shen, X.}, \bibinfo{year}{2023}.
\newblock \bibinfo{title}{Dynamic pricing and inventory control of substitutable products in a nonstationary environment}.
\newblock \bibinfo{journal}{Available at SSRN 4572577} .
%Type = Article
\bibitem[{Sutton and Barto(1999)}]{sutton1999reinforcement}
\bibinfo{author}{Sutton, R.S.}, \bibinfo{author}{Barto, A.G.}, \bibinfo{year}{1999}.
\newblock \bibinfo{title}{Reinforcement learning: An introduction}.
\newblock \bibinfo{journal}{Robotica} \bibinfo{volume}{17}, \bibinfo{pages}{229--235}.
%Type = Article
\bibitem[{Swaminathan and Shanthikumar(1999)}]{swaminathan1999supplier}
\bibinfo{author}{Swaminathan, J.M.}, \bibinfo{author}{Shanthikumar, J.G.}, \bibinfo{year}{1999}.
\newblock \bibinfo{title}{Supplier diversification: Effect of discrete demand}.
\newblock \bibinfo{journal}{Operations Research Letters} \bibinfo{volume}{24}, \bibinfo{pages}{213--221}.
%Type = Article
\bibitem[{Thowsen(1975)}]{thowsen1975dynamic}
\bibinfo{author}{Thowsen, G.T.}, \bibinfo{year}{1975}.
\newblock \bibinfo{title}{A dynamic, nonstationary inventory problem for a price/quantity setting firm}.
\newblock \bibinfo{journal}{Naval Research Logistics Quarterly} \bibinfo{volume}{22}, \bibinfo{pages}{461--476}.
%Type = Article
\bibitem[{Tunc et~al.(2011)Tunc, Kilic, Tarim and Eksioglu}]{tunc2011cost}
\bibinfo{author}{Tunc, H.}, \bibinfo{author}{Kilic, O.A.}, \bibinfo{author}{Tarim, S.A.}, \bibinfo{author}{Eksioglu, B.}, \bibinfo{year}{2011}.
\newblock \bibinfo{title}{The cost of using stationary inventory policies when demand is non-stationary}.
\newblock \bibinfo{journal}{Omega} \bibinfo{volume}{39}, \bibinfo{pages}{410--415}.
%Type = Article
\bibitem[{Vanvuchelen et~al.(2020)Vanvuchelen, Gijsbrechts and Boute}]{vanvuchelen2020use}
\bibinfo{author}{Vanvuchelen, N.}, \bibinfo{author}{Gijsbrechts, J.}, \bibinfo{author}{Boute, R.}, \bibinfo{year}{2020}.
\newblock \bibinfo{title}{Use of proximal policy optimization for the joint replenishment problem}.
\newblock \bibinfo{journal}{Computers in Industry} \bibinfo{volume}{119}, \bibinfo{pages}{103239}.
%Type = Article
\bibitem[{Wernz(2013)}]{wernz2013multi}
\bibinfo{author}{Wernz, C.}, \bibinfo{year}{2013}.
\newblock \bibinfo{title}{Multi-time-scale markov decision processes for organizational decision-making}.
\newblock \bibinfo{journal}{EURO Journal on Decision Processes} \bibinfo{volume}{1}, \bibinfo{pages}{299--324}.
%Type = Article
\bibitem[{Xu and Hopp(2006)}]{xu2006monopolistic}
\bibinfo{author}{Xu, X.}, \bibinfo{author}{Hopp, W.J.}, \bibinfo{year}{2006}.
\newblock \bibinfo{title}{A monopolistic and oligopolistic stochastic flow revenue management model}.
\newblock \bibinfo{journal}{Operations Research} \bibinfo{volume}{54}, \bibinfo{pages}{1098--1109}.
%Type = Article
\bibitem[{Yu et~al.(2022)Yu, Velu, Vinitsky, Gao, Wang, Bayen and Wu}]{yu2022surprising}
\bibinfo{author}{Yu, C.}, \bibinfo{author}{Velu, A.}, \bibinfo{author}{Vinitsky, E.}, \bibinfo{author}{Gao, J.}, \bibinfo{author}{Wang, Y.}, \bibinfo{author}{Bayen, A.}, \bibinfo{author}{Wu, Y.}, \bibinfo{year}{2022}.
\newblock \bibinfo{title}{The surprising effectiveness of ppo in cooperative multi-agent games}.
\newblock \bibinfo{journal}{Advances in Neural Information Processing Systems} \bibinfo{volume}{35}, \bibinfo{pages}{24611--24624}.

\end{thebibliography}

% % Acknowledgments here
% \ACKNOWLEDGMENT{%
% % Enter the text of acknowledgments here
% }% Leave this (end of acknowledgment)

% Appendix here
% Options are (1) APPENDIX (with or without general title) or 
%             (2) APPENDICES (if it has more than one unrelated sections)
% Outcomment the appropriate case if necessary
%
% \begin{APPENDIX}{<Title of the Appendix>}
% \end{APPENDIX}
%
%   or 
%
% \begin{APPENDICES}
% \section{<Title of Section A>}
% \section{<Title of Section B>}
% etc
% \end{APPENDICES}

% References here (outcomment the appropriate case) 

% CASE 1: BiBTeX used to constantly update the references 
%   (while the paper is being written).
%\bibliographystyle{informs2014} % outcomment this and next line in Case 1
%\bibliography{<your bib file(s)>} % if more than one, comma separated

% CASE 2: BiBTeX used to generate mypaper.bbl (to be further fine tuned)
%\input{mypaper.bbl} % outcomment this line in Case 2
\newpage
\section*{\centering Appendix}
\setcounter{equation}{0}
\renewcommand\theequation{A.\arabic{equation}}

\section*{A. Proof of Theoretical Results}  \label{appendix:A} 
\subsection*{Proof of Proposition 1}
\begin{proof}{Proof}
    We need to prove that given a function $\lambda(p)$ and Poisson distribution $D(p) \sim Pois(\lambda(p))$, there does not exist any function $\gamma(p)$, $\delta(p)$ and random variable $\epsilon$ which is independent with $p$ satisfying
    \begin{equation} \small \label{pro7}
        D(p) \overset{d}{=} \gamma(p)\epsilon + \delta(p),
    \end{equation}
    where $\overset{d}{=}$ means two random variables are equal in distribution. If there exists $\gamma(p)$, $\delta(p)$ and random variable $\epsilon$ satisfying the conditions, we can take expectation and variance of the random variable on both sides of Eq.(\ref{pro7}):
    $\lambda(p) = \gamma(p)\mathbb{E}[\epsilon] + \delta(p)$, $\lambda(p) = \gamma(p)^2 Var(\epsilon).$
    So we can get the solution
    $\gamma(p) = \sqrt{\frac{\lambda(p)}{Var(\epsilon)}}, \delta(p) = \lambda(p) - \sqrt{\frac{\lambda(p)}{Var(\epsilon)}} \mathbb{E}[\epsilon].$
    Furthermore, substituting the results into Eq.(\ref{pro7}), we can get  $D(p) \overset{d}{=} \sqrt{\frac{\lambda(p)}{Var(\epsilon)}}(\epsilon-\mathbb{E}[\epsilon]) + \lambda(p).$
    Then, taking the third-order central moments: 
    $\lambda(p) = \mathbb{E}[D(p)-\lambda(p)]^3 =  \bigg(\frac{\lambda(p)}{Var(\epsilon)}\bigg)^{\frac{3}{2}}\mathbb{E}[(\epsilon-\mathbb{E}[\epsilon])^3], $
    we can conclude from this equation that $\lambda(p)=Var(\epsilon)^3/\mathbb{E}[(\epsilon-\mathbb{E}[\epsilon])^3]^2$ is a constant independent of $p$, which is a contradiction since $\lambda(p)$ should be a function of $p$. Therefore, this completes the proof of the proposition. \Halmos
    
\end{proof}
\subsection*{Proof of Lemma 1}
\begin{proof}{Proof}
  We have
    $\frac{\partial \lambda}{\partial p} = \frac{\eta \Delta le^{lp+a}}{(1+e^{lp+a})^2} <0$,
    $\frac{\partial^2 \lambda}{\partial p^2} = \frac{\eta \Delta l^2e^{lp+a}(1-e^{lp+a})}{(1+e^{lp+a})^3} < 0$.
    The latter inequality uses the assumption that $lp+a > 0$, which ends the proof. \Halmos
\end{proof}

\subsection*{Proof of Proposition 3}
\begin{proof}{Proof}
We rewrite the revenue function by the definition of expectation and calculate its derivative as follows:
  \begin{equation*}\small
        \begin{aligned}
        E(p,x) = p\mathbb{P}(d>x)x + p\mathbb{E}[d\mathbb{I}_{\{d\le x\}}] = (1-\sum_{k=0}^{\lfloor x \rfloor}\frac{\lambda^ke^{-\lambda}}{k!})xp + \sum_{k=0}^{\lfloor x \rfloor}\frac{\lambda^ke^{-\lambda}}{k!}kp,
        \end{aligned}
    \end{equation*}
\begin{equation*}\small
        \begin{aligned}
            \frac{\partial E(p,x)}{\partial p} = (1-\sum_{k=0}^{\lfloor x \rfloor}\frac{\lambda^ke^{-\lambda}}{k!})x + xp(-\sum_{k=0}^{\lfloor x \rfloor}\frac{\lambda^{k-1}e^{-\lambda}}{k!}(k-\lambda)\frac{\partial \lambda}{\partial p}) + \sum_{k=0}^{\lfloor x \rfloor}\frac{\lambda^ke^{-\lambda}}{k!}k + p\sum_{k=0}^{\lfloor x \rfloor}\frac{\lambda^{k-1}e^{-\lambda}}{k!}(k-\lambda)k\frac{\partial \lambda}{\partial p}.
        \end{aligned}
    \end{equation*}
Calculate the second order derivative and we can get
\begin{equation*}\small
        \begin{aligned}
            \frac{\partial^2 E(p,x)}{\partial p^2} =& 2 \frac{\partial \lambda}{\partial p}\sum_{k=0}^{\lfloor x \rfloor}\frac{\lambda^{k-1} e^{-\lambda}}{k!}(k-x)(k-\lambda)  + p(\frac{\partial \lambda}{\partial p })^2\sum_{k=0}^{\lfloor x \rfloor}\frac{\lambda^{k-2} e^{-\lambda}}{k!}(k-x)((k-\lambda)^2-k)\\ +& p\frac{\partial^2 \lambda}{\partial p^2}\sum_{k=0}^{\lfloor x \rfloor}\frac{\lambda^{k-1} e^{-\lambda}}{k!}(k-x)(k-\lambda).
        \end{aligned}
    \end{equation*}
By the same analysis as Proposition \ref{proposition:2}, we can reach the conclusion that the sum of the three terms is negative, so the revenue function is concave with respect to $p$.  \Halmos
\end{proof}

\subsection*{Proof of Lemma 2}
\begin{proof}{Proof}
Note that $L(p,x) = \mathbb{E}[(x-d)^+] + \mathbb{E}[d-x] = I(p,x) + \hat{\lambda} - x$.
By Proposition \ref{proposition:2} and Assumption \ref{assumption:3}, $I(p,x)$ is convex and $\hat{\lambda}(p)$ is linear, so 
       $ \frac{\partial^2 L(p,x)}{\partial p^2} = \frac{\partial^2 I(p,x)}{\partial p^2} > 0.$
So $L(p,x)$ is convex with respect to $p$. \Halmos
\end{proof}

\subsection*{Proof of Theorem 1}
\begin{proof}{Proof}
    Since $h$, $b$, $c$ are positive, the concavity of $E(p,x)$, $-hI(p,x)$, and $-bL(p,x)$ with respect to $p$ is guaranteed given $x$. The theorem holds because the sum of concave functions remains concave. The proof of non-joint concavity is the same as Remark \ref{remark:1}. Furthermore, plug in $E(p,x)$, $I(p,x)$, $L(p,x)$ and we can rewrite $F(p,x)$ as 
\begin{equation} \small
    \begin{aligned} \label{eq:16}
 F(p, x) =&  p\mathbb{E}[\min(d,x)] - (h+b)\sum_{k=0}^{\lfloor x \rfloor}\frac{\lambda^ke^{-\lambda}}{k!}(x-k) -b\lambda + bx - c(x-x_0) \\
 =& \bigg(b-c+p - \sum_{k=0}^{\lfloor x \rfloor}\frac{\lambda^ke^{-\lambda}}{k!}(h+b+p) \bigg)x + \sum_{k=0}^{\lfloor x \rfloor}\frac{\lambda^ke^{-\lambda}k}{k!}(p+h+b)-b\lambda+cx_0,
    \end{aligned}
\end{equation}
where $\lambda(p) = \eta\Delta e^a(1+lp)$. The first equation of conditions (\ref{eq:15}) is naturally derived by the first order condition. For the other two equations, it is obvious that $F(p,x)$  is continuous and piecewise linear with respect to $x$, with integer points serving as nodes. This implies that $F(p,x)$ reaches its maximum at certain integer nodes, so we only need to check the values at integer nodes. According to Eq.(\ref{eq:16}), assuming $n$ is an integer and $x \in [n-1,n)$, the slope of $F(p,x)$ with respect to $x$ is given by $b-c+p-(h+b+p)\sum_{k=0}^{n-1}\frac{\lambda^k(p)e^{-\lambda(p)}}{k!}. $
When $x \in [n,n+1) $, the slope is $b-c+p-(h+b+p)\sum_{k=0}^{n}\frac{\lambda^k(p)e^{-\lambda(p)}}{k!},$ with a negative increment $-(h+b+p)\frac{\lambda^n(p)e^{-\lambda(p)}}{n!}$. Therefore, the slope is decreasing, which means $F(p,x)$ is concave with respect to $x$. Consequently, the last two equations of condition (\ref{eq:15}) hold naturally if $F(p, x)$ does not take its maximum at the boundary. \Halmos
\end{proof}

\subsection*{Proof of Proposition 4}
\begin{proof}{Proof}
    Firstly, we show that under Assumptions \ref{asmp:6}(i) and \ref{asmp:7}(ii), the conclusion will be violated. We use the additive model, where expected demand is a linear function of price, as a counterexample. Similar calculations apply to other cases as well. Set $d(p,\epsilon) = \epsilon + d(p)$,  where $d(p) = lp + a$ and the probability density function of $\epsilon$ is $f$. Then the revenue function is
    \begin{equation*}\small
        E(p,x) = p\mathbb{E}[\min(\epsilon + d(p), x)] = p\bigg(\int_{-\infty}^{x-d(p)}(u+d(p))f(u)du + x\int_{x-d(p)}^{\infty}f(u)du\bigg).
    \end{equation*}
    Set $A=pf(x-d(p))>0$, $B=\int_{x-d(p)}^{\infty}f(u)du>0$, and  $l=\frac{\partial d(p)}{\partial p}<0$. The second derivative of this function can be obtained by calculation:
        $\frac{\partial^2 E(p,x)}{\partial p^2} = 2\int_{-\infty}^{x-d(p)}\frac{\partial d(p)}{\partial p}f(u)du - pf(x-d(p))\bigg(\frac{\partial d(p)}{\partial p}\bigg)^2=2l(1-B)-l^2A$, $
        \frac{\partial^2 E(p,x)}{\partial p\partial x} = \int_{x-d(p)}^{\infty}f(u)du + pf(x-d(p))\frac{\partial d(p)}{\partial p}=Al+B$, 
        $\frac{\partial^2 E(p,x)}{\partial x^2} = -pf(x-d(p)) = -A$.
    The determinant of the Hessian matrix is $-2Al-B^2$ and the sign is uncertain. In the region where $-2Al-B^2<0$ the revenue function is not jointly concave. Therefore, the joint concavity of $F(p,x)$ can be lost. Under Assumptions \ref{asmp:6}(ii) and Assumptions \ref{asmp:7}(i), the proof of non-joint concavity is the same as Remark \ref{remark:1}. This is also because the inventory function as well as profit function is piecewise linear with respect to x when the demand follows a Poisson distribution. \Halmos
\end{proof}

\subsection*{Proof of Proposition 5}
\begin{proof}{Proof}
    We can calculate the derivative of $F(p,x)$ with respect to $p$
    \begin{equation} \small
    \begin{aligned} \label{eq:17}
 \frac{\partial F(p, x)}{\partial p} =&  \frac{\partial \lambda}{\partial p} \bigg(\sum_{k=0}^{\lfloor x \rfloor} \frac{\lambda^{k-1}e^{-\lambda}}{k!}(k-\lambda)(k-x)(p+h+b) \bigg) -b\frac{\partial \lambda}{\partial p} +x + \sum_{k=0}^{\lfloor x \rfloor} \frac{\lambda^{k}e^{-\lambda}}{k!}(k-x).
    \end{aligned}
\end{equation}
It is also a piecewise linear function with respect to $x$ and the slope can be considered as $\frac{\partial^2 F(p,x)}{\partial x \partial p}$, which is
\vspace{-0.5cm}
\begin{equation}\small \label{eq:18}
    \begin{aligned} 
 1 + \sum_{k=0}^{\lfloor x \rfloor} \frac{\lambda^{k-1}e^{-\lambda}}{k!}(-\lambda - \frac{\partial \lambda}{\partial p}(k-\lambda)(p+h+b)).
    \end{aligned}
\end{equation}
Note that when $x$ tends to infinity, the value tends to 0:
\begin{equation*} \small
    \begin{aligned} 
  1 + \sum_{k=0}^{\infty} \frac{\lambda^{k-1}e^{-\lambda}}{k!}(-\lambda - \frac{\partial \lambda}{\partial p}(k-\lambda)(p+h+b))= 1 - \sum_{k=0}^{\infty} \frac{\lambda^{k}e^{-\lambda}}{k!} - (p+h+b)\frac{\partial \lambda}{\partial p}\sum_{k=0}^{\infty} \frac{\lambda^{k-1}e^{-\lambda}}{k!} (k-\lambda) = 1 -1 -0=0.
    \end{aligned}
\end{equation*}

Besides, when $x$ is sufficiently large, $-\lambda - \frac{\partial \lambda}{\partial p}(k-\lambda)(p+h+b)$ turns positive due to $\frac{\partial \lambda}{\partial p}$ being negative. Consequently, the function in Eq.(\ref{eq:18}) decreases when $x$ is small and increases when $x$ is large. Since its limit is zero, the sign of Eq.(\ref{eq:18}) is negative for large $x$. Whether Eq.(\ref{eq:18}) can be positive or not depends on the sign when $x=0$.
\par
Furthermore, we can analyze the monotonicity of $p(x)$. When $x=0$, $F(p,x)=-b\lambda(p)+cx_0$, which takes its maximum when $p$ reaches its upper bound. Then, as $x$ increases, the proposition holds if $p$ always reaches its upper bound. If $p$ begins to decrease, this indicates that the root of Eq.(\ref{eq:17}) has been taken, causing Eq.(\ref{eq:18}) to become negative. Furthermore, $\frac{\partial F}{\partial p}$ decreases as $x$ increases. Since $F(p,x)$ is concave with respect to $p$, $\frac{\partial F}{\partial p}$ decreases with respect to $p$. To ensure that $\frac{\partial F}{\partial p} = 0$ still holds, $p$ should decrease as $x$ increases. The proof above is clear if we take the perspective of the implicit function theorem: $\frac{\partial p}{\partial x} = - \frac{\partial^2 F}{\partial x \partial p}\bigg/\frac{\partial^2 F}{\partial p^2}<0,$ since both the numerator and the denominator are negative. The reason for expressing it as above is that $x$ is non-differentiable at integer values. \Halmos
\end{proof}
\subsection*{Proof of Proposition 6}
\begin{proof}{Proof}
To prove this proposition, we employ two classical stochastic gradient estimation methods: the infinitesimal perturbation analysis (IPA) method and the likelihood ratio (LR) method, as outlined in \cite{FU2006575}. For discrete distributions, LR works if the parameter occurs in the probability density function, whereas IPA works if the parameter serves as a structural parameter. In our case, $p$ is a distributional parameter that only occurs in the distribution of $d$. Therefore, LR method can be implemented when estimating the gradient with respect to $p$, as calculated below: 
\begin{equation*} \small
    \begin{aligned} 
 g(p,x) &= \frac{\partial}{\partial p}\bigg(p\mathbb{E}[\min(d,x)] - (h+b)\mathbb{E}[(x-d)^+] -b\lambda + bx - c(x-x_0)\bigg)\\ 
 &= \mathbb{E}[\min(d,x)] + p\frac{\partial}{\partial p}\sum_{d}\mathbb{P}(d)\min(d,x) - (h+b)\frac{\partial}{\partial p}\sum_{d}\mathbb{P}(d)(x-d)^+-b\frac{\partial \lambda}{\partial p }\\
 &= \mathbb{E}[\min(d,x)] + p\sum_{d}\frac{\partial}{\partial p} \mathbb{P}(d)\min(d,x) - (h+b)\sum_{d}\frac{\partial}{\partial p} \mathbb{P}(d)(x-d)^+-b\frac{\partial \lambda}{\partial p }\\
 &= \mathbb{E}[\min(d,x)] + p\sum_{d}\mathbb{P}(d) \frac{\partial \log\mathbb{P}(d)}{\partial p}\min(d,x)- (h+b)\sum_{d}\mathbb{P}(d) \frac{\partial \log\mathbb{P}(d)}{\partial p}(x-d)^+-b\frac{\partial \lambda}{\partial p }\\
 &= \mathbb{E}[\min(d,x)] + p\mathbb{E}\bigg[\frac{\partial \log\mathbb{P}(d)}{\partial p} \min(d,x)\bigg] -(h+b)\mathbb{E}\bigg[\frac{\partial \log \mathbb{P}(d)}{\partial p} (x-d)^+\bigg] -b\frac{\partial \lambda}{\partial p }.
      \end{aligned}
\end{equation*}
   The interchange of taking summations and taking derivatives in the third equality comes from the uniform convergence of the function term series $\sum_{d}\frac{\partial }{\partial p}\mathbb{P}(d)\min(d,x)$ and $\sum_{d}\frac{\partial}{\partial p} \mathbb{P}(d)(x-d)^+$ \citep[Theorem 7.17]{rudin1964principles}. Meanwhile, $x$ serves as a structural parameter in the following expectations; thus, we can apply the IPA method for estimating the gradient with respect to $x$, as calculated below: 
 \begin{equation*} \small
    \begin{aligned}
h(p,x) &= \frac{\partial}{\partial x}\bigg(p\mathbb{E}[\min(d,x)] - (h+b)\mathbb{E}[(x-d)^+] -b\lambda + bx - c(x-x_0)\bigg)\\
&= p\frac{\partial}{\partial x}\sum_{d}\mathbb{P}(d)\min(d,x) - (h+b)\frac{\partial}{\partial x}\sum_{d}\mathbb{P}(d)(x-d)^+ + b-c\\
&= p\sum_{d}\mathbb{P}(d)\frac{\partial}{\partial x}\min(d,x) - (h+b)\sum_{d}\mathbb{P}(d)\frac{\partial}{\partial x}(x-d)^+ + b-c\\
&= b-c+p - (h+b+p)\mathbb{E}[1_{d \le x}].
    \end{aligned}
\end{equation*}
The interchange of taking summations and taking derivatives in the third equality comes from the uniform convergence of the function term series $\sum_{d}\mathbb{P}(d)\frac{\partial}{\partial x}\min(d,x)$ and $\sum_{d}\mathbb{P}(d)\frac{\partial}{\partial x}(x-d)^+$. \Halmos
\end{proof}
\subsection*{Proof of Lemma 3}
\begin{proof}{Proof}
    Define $M_{k}^{(1)} = \hat{g}(p_k,x_k)- \mathbb{E}[\hat{g}(p_k,x_k)|\mathcal{F}_k]$, $M_{k}^{(2)} = \hat{h}(p_k,x_k)- \mathbb{E}[\hat{h}(p_k,x_k)|\mathcal{F}_k]$, then $M_{k}^{(1)}$ and $M_{k}^{(2)}$ are martingale difference sequences with respect to the increasing $\sigma$-fields, which means $\mathbb{E}[M_{k}^{(1)}|\mathcal{F}_{k-1}] = \mathbb{E}[M_{k}^{(2)}|\mathcal{F}_{k-1}] = 0$ for every $k$. Due to the finite second-order moments of the Poisson distribution, it is easy to check that 
        $\mathbb{E}[| M_{k}^{(1)}|^2|\mathcal{F}_{k-1}]\le K(1+\vert p_k\vert^2 +|x_k|^2)$, $\mathbb{E}[|M_{k}^{(2)}|^2|\mathcal{F}_{k-1}]\le K(1+|p_k|^2 +|x_k|^2).$
    Rewrite the iteration process as
    \begin{equation*}\small
    \begin{aligned} 
p_{k+1} &= p_k + \alpha_k\hat{g}(p_k,x_k) = p_k + \alpha_k (\mathbb{E}[\hat{g}(p_k,x_k)|\mathcal{F}_k] + M_{k}^{(1)}) = p_k + \alpha_k (g(p_k,x_k) + M_{k}^{(1)}), \\
x_{k+1}&= x_k+ \beta_k\hat{h}(p_k,x_k) = x_k + \beta_k(\mathbb{E}[\hat{h}(p_k,x_k)|\mathcal{F}_k] + M_{k}^{(2)}) = x_k + \alpha_k(\frac{\beta_k}{\alpha_k}h(p_k,x_k) + \frac{\beta_k}{\alpha_k}M_{k}^{(2)}).
    \end{aligned}
\end{equation*}
Since $g(p,x) =  \frac{\partial \lambda}{\partial p} \bigg(\sum_{k=0}^{\lfloor x \rfloor} \frac{\lambda(p)^{k-1}e^{-\lambda(p)}}{k!}(k-\lambda(p))(k-x)(p+h+b) \bigg) -b\frac{\partial \lambda}{\partial p} +x + \sum_{k=0}^{\lfloor x\rfloor} \frac{\lambda(p)^{k}e^{-\lambda(p)}}{k!}(k-x)$,
the partial derivative $\frac{\partial g(p,x)}{\partial p}$ is continuous in a bounded interval. So $\frac{\partial g(p,x)}{\partial p}$ is bounded, which implies $g(p,x)$ is Lipschitz continuous with respect to $p$. $g(p,x)$ is piecewise linear with respect to $x$ so it is also Lipschitz continuous with respect to $x$. Under the faster time scale, $x_k$ can be seen as static when $k$ is large enough due to $\frac{\beta_k}{\alpha_k} \rightarrow 0$. It is enlightening to compare the coupled iterations Eq.(\ref{eq:19}) to the ordinary differential equations (ODEs):
\begin{equation} \small \label{eq:20}
\dot{p}(t) =  g(p(t),x(t)),
\end{equation}
\vspace{-26pt} % 调整此值以缩短间隙
\begin{equation} \small \label{eq:21}
\dot{x}(t)=\gamma h(p(t),x(t)),
\end{equation}
in the limit $\gamma \rightarrow 0$ given $\frac{\beta_k}{\alpha_k} \rightarrow 0$, which means $p(t)$ is a fast transient compared to $x(t)$. It makes sense to consider $x(t)$ as quasi-static while analyzing the behavior of $p(t)$. This suggests focusing on ODE Eq.(\ref{eq:20}), $\dot{p}(t) = g(p(t),x),$ where $x$ is held fixed as a constant parameter. By Theorem \ref{theorem:1}, $F(p,x)$ is concave with respect to $p$, so $g(p,x)$ has a unique root if $x$ is given. Therefore, $p^{\star}(x)$ is well-defined and ODE Eq.(\ref{eq:20}) has a globally asymptotically stable equilibrium $p^{\star}(x)$. By Theorem 2 in \citet[Chap.2]{borkar2009stochastic} and Lemma 1 in \citet[Chap.6]{borkar2009stochastic}, the sequence $\{(p_k,x_k)\}$ satisfies all the conditions required by the theorems and converges to the limit point of the ODE $\dot{p}(t) = g(p(t), x(t))$, $\dot{x}(t) = 0$. The limit point of the ODE satisfies $g(p^{\star}(x),x)=0$, so the sequence $\{(p_k,x_k)\}$ generated by the iteration Eq.(\ref{eq:19}) will converge to the set $\{(p^{\star}(x),x)\}$, which ends the proof.
\Halmos
\end{proof}
\subsection*{Proof of Theorem 3}
\begin{proof}{Proof}
    Define $\zeta_k = p_k-p^{\star}(x_k)$, then we can write the iteration for $p_k$ in terms of $\zeta_k$ as
     \begin{equation*} \small
     \zeta_{k+1} = \zeta_k +\alpha_k\hat{g}(p_k,x_k) + p^{\star}(x_k) - p^{\star}(x_{k+1}).
     \end{equation*}
    Denote the Lipschitz constant of $p^{\star}(x)$ by $L_p$, then we have
     \begin{equation*} \small
     |p^{\star}(x_k) - p^{\star}(x_{k+1})| \le L_p|x_k-x_{k+1}|\le L_p\beta_k|\hat{h}(p_k,x_k)| \le L_p\beta_k(b-c+p_k) \le \beta_kL_pC_h,
     \end{equation*}
    where $C_h$ is the bound of the $\hat{h}(p_k,x_k)$. Also, denote the bound of the $\hat{g}(p_k,x_k)$ by $C_g$, then we can square both sides of the equation for $\zeta$ using the above inequality:
    \begin{equation*}\small
        \begin{aligned}
            |\zeta_{k+1}|^2 \le |\zeta_{k}|^2  + \alpha_k^2C_g^2 + \beta_k^2L_p^2C_h^2 + 2\alpha_k\beta_kL_pC_hC_g + 2\alpha_k\zeta_k\hat{g}(p_k,x_k) + 2\beta_kL_pC_h|\zeta_k|.
        \end{aligned}
    \end{equation*}
    By Taylor's expansion of $g(p,x)$:
     $g(p_k,x_k) = g(p_k,x_k) - g(p^{\star}(x_k),x_k) = \frac{\partial g}{\partial p}(\Tilde{p}_k,x_k)(p_k- p^{\star}(x_k)) = \frac{\partial g}{\partial p}(\Tilde{p}_k,x_k)\zeta_k$,
    where $\Tilde{p}_k$ lies between $p_k$ and $p^{\star}(x_k)$.
    According to Theorem \ref{theorem:1}, $\frac{\partial g}{\partial p}$ is negative. Due to the continuity of $\frac{\partial g}{\partial p}$, we can assume that $\frac{\partial g}{\partial p} \le -\epsilon \le 0$ in the feasible region of $p$. And note that $\mathbb{E}[\hat{g}(p_k,x_k)|\mathcal{F}_k] = g(p_k,x_k)$, we can take conditional expectation on both sides of the above inequality for $\zeta$:
    \begin{equation*}\small
        \begin{aligned}
            \mathbb{E}[|\zeta_{k+1}|^2|\mathcal{F}_k] \le& |\zeta_{k}|^2  + \alpha_k^2C_g^2 + \beta_k^2L_p^2C_h^2 + 2\alpha_k\beta_kL_pC_hC_g + 2\alpha_k\zeta_k\mathbb{E}[\hat{g}(p_k,x_k)|\mathcal{F}_k] + 2\beta_kL_pC_h|\zeta_k|\\
            =& |\zeta_{k}|^2  + \alpha_k^2C_g^2 + \beta_k^2L_p^2C_h^2 + 2\alpha_k\beta_kL_pC_hC_g + 2\alpha_k\zeta_k g(p_k,x_k) + 2\beta_kL_pC_h|\zeta_k| \\
            \le& (1-2\epsilon\alpha_k)|\zeta_{k}|^2  + \alpha_k^2C_g^2 + \beta_k^2L_p^2C_h^2 + 2\alpha_k\beta_kL_pC_hC_g + 2\beta_kL_pC_h|\zeta_k|.
        \end{aligned}
    \end{equation*}
    We assume, without loss of generality, that $\epsilon$ is sufficiently small such that $1-2\epsilon\zeta_k > 0$ for all $k$. Next, by taking the expectation again and applying the Cauchy Schwarz inequality, we further obtain
    \begin{equation*}\small
    \begin{aligned}
        \mathbb{E}[|\zeta_{k+1}|^2] \le& (1-2\epsilon\alpha_k) \mathbb{E}[|\zeta_{k}|^2] + \alpha_k^2C_g^2 + \beta_k^2L_p^2C_h^2 + 2\alpha_k\beta_kL_pC_hC_g + 2\beta_kL_pC_h\mathbb{E}[|\zeta_{k}|] \\
        \le& (1-2\epsilon\alpha_k) \mathbb{E}[|\zeta_{k}|^2] + 2\beta_kL_pC_h\sqrt{\mathbb{E}[|\zeta_{k}|^2]} + \alpha_k^2C_g^2 + \beta_k^2L_p^2C_h^2 + 2\alpha_k\beta_kL_pC_hC_g \\
        \le & \bigg(\sqrt{1-2\epsilon\alpha_k}\sqrt{\mathbb{E}[|\zeta_{k}|^2]} + \frac{\beta_kL_pC_h}{\sqrt{1-2\epsilon\alpha_k}}\bigg)^2+\alpha_k^2C_g^2 + 2\alpha_k\beta_kL_pC_hC_g \\
        \le & \bigg((1-\alpha_k\epsilon)\sqrt{\mathbb{E}[|\zeta_{k}|^2]} + \frac{\beta_kL_pC_h}{\sqrt{1-2\epsilon\alpha_0}}\bigg)^2+\alpha_k^2C_g^2 + 2\alpha_k\beta_kL_pC_hC_g,
    \end{aligned}
    \end{equation*}
    where the last inequality is due to $\sqrt{1-x}\le 1-x/2$ for $x\in [0,1]$.
    
    Define the mapping
    \begin{equation*}\small
    T_k(x) \doteq \sqrt{\bigg((1-\alpha_k\epsilon)x + \frac{\beta_kL_pC_h}{\sqrt{1-2\epsilon\alpha_k}}\bigg)^2+\alpha_k^2C_g^2 + 2\alpha_k\beta_kL_pC_hC_g}
    \end{equation*}
    and consider the sequence of real numbers $\{x_k\}$ generated by $x_{k+1} = T_k(x_k)$ for all $k$ with $x_0 \doteq \sqrt{\mathbb{E}[\zeta_0^2]}.$ A simple inductive argument shows that $\sqrt{\mathbb{E}[\zeta_k^2]\le x_k}$ for all $k$. In addition, notice that $T_k(x)$ is a function of the form $h(x) = \sqrt{x^2+b^2}, x>0$ for some constant $b \neq 0$. Since $\frac{dh(x)}{dx} = \frac{x}{\sqrt{x^2+b^2}} < 1$, we have $|h(x)-h(y)|\le|x-y|$ for all $x,y>0$. This implies that $T_k(x)$ is a contraction mapping satisfying
    $|T_k(x)-T_k(y)|\le (1-\alpha_k\epsilon)|x-y|.$

    The unique fixed point $x_k^{\star}$ of $T_k$ can be obtained by solving the quadratic equation $T_k(x) = x$, and written in the form:
    \begin{equation*}\small
        x_k^{\star} = O(\frac{\beta_k}{\alpha_k}) + O(\alpha_k^{\frac{1}{2}}) + \text{higher-order} \  \text{terms}.
    \end{equation*}
    By applying the same technique as in \cite{hu2023quantile} and \cite{jiang2023quantile}, we can reach the same conclusion of $\zeta_k$. \Halmos
\end{proof}
\subsection*{Proof of Proposition 7}
\begin{proof}{Proof}
Let the expected future discounted profits generated in period $t$ be $V_t(s_t)$:
\begin{equation}\small
    V_t(s_t) = \max \limits_{p_t, q_t} \Big\{p_t S_t - h I_t - b L_t - c q_t + \gamma \sum_{s_{t+1}} \mathcal{P}(s_{t+1}|s_t,a_t) V_{t+1}(s_{t+1}) \Big \}.
\end{equation}
 
For each period $t$, suppose the stochastic demand $d_t$ has $D$ possible values, the computation effort needed to obtain the current profit term $\max \limits_{p_t, q_t}  (p_t S_t - h I_t - b L_t - c q_t)$ is denoted by $E_t$, and the computational time $C_t$ needed to get the overall profit $V_t(s_t)$ is 
\begin{equation} \small \label{eq:A1.1}
    C_t = PQ[E_t + D C_{t+1}].
\end{equation}

The amount of computation needed to obtain $E_t$ is $D$ since $S_t$, $I_t$ and $L_t$ are derived from demand $d_t$. For each period $t$, the value of $D$ depends on the number of regressors $K$ and the dimension of each regressor $R_k \in [R_{min},R_{max}]$. Hence, we have
$E_t = D = \prod_{i=1}^{K} R_k$.
Since the terminal profit is 0, i.e., $V_{T+1}(\cdot)$ = 0, we have $C_{T+1}$ = 0, which gives $C_T = PQ E_{T}$. With Eq. (\ref{eq:A1.1}), we have 
\begin{align}\small
    \begin{split}
     C_T &= PQ E_{T}, \\
    C_{T-1} &= PQ E_{T-1} + (PQ)^2 (D E_T),\\
    & \vdots \\
    C_1 &= PQ E_1 + (PQ)^2 (D E_{2}) + (PQ)^3 (D)^2 E_{3} \\
    &+ ... + (PQ)^T (D)^{T-1} E_T \\
    &=\sum_{t=1}^T \Big[(PQ)^t (D)^{t-1} E_{t} \Big] \\
    &=\sum_{t=1}^T \Big[(PQ D)^{t}\Big]. \\
    \end{split}
\end{align}

With $R_{min} \leq R_k \leq R_{max}$, we have $(R_{min})^K \leq D \leq (R_{max})^K$ and 
\begin{equation*}\small
    \begin{aligned}
        C_1 &\geq  \sum_{t=1}^T \Big[(PQ)^t (R_{min})^{Kt}\Big] = O(P^T Q^T (R_{min})^{KT}), \\
        C_1 &\leq \sum_{t=1}^T \Big[(PQ)^t (R_{max})^{Kt}\Big] = O(P^T Q^T (R_{max})^{KT}).
    \end{aligned}
\end{equation*}

Hence,  the lower bound of the total computational effort is $O(P^T Q^T (R_{min})^{KT})$ and the upper bound is $O(P^T Q^T (R_{max})^{KT})$. \Halmos
\end{proof}

\subsection*{Proof of Proposition 8}
\begin{proof}{Proof}
With the back-propagation (BP) algorithm, the computational complexity of the parameter update phase is of the same order as that of model inference during the simulation phase. Both of these are dominated by the computational cost of passing the product state $s_t^i$ through the neural network to obtain the product action $a_t^i$. We assume that the computational cost of inputting the product state $s_t^i$ to yield the product action $a_t^i$ is $C$, which is determined by the computational cost of passing one state $s_t^i$ through each module of the neural network shown in Figure \ref{fig:Network}. For MLP1, the input dimension is equal to the state space dimension, which is $D_I$. There are two hidden layers with a width of $W_1$ each. Therefore, the computation of this part can be expressed as $(D_I + W_1) * W_1$. In the case of the GRU, the computation with the highest order occurs in the calculation of the candidate hidden state, as mentioned in \citet{dey2017gate}. For GRU1, the computation is $(W_1 + W_2)W_2$, while for GRU2, it is $2W_2*W_2$. For MLP2, with the dimension of the input as $W_2$ and the dimension of its output as $D_A$, the computation that happens here is $W_2 D_A$. Due to the linear relationship between the computational complexity of the Softmax operation and the input dimension, it can be safely disregarded. By adding the computation cost of each module in the neural network together, we have
\begin{equation*}\small 
    C = O\big((D_I + W_1)W_1 + (W_1 + 3 W_2)W_2 + W_2 D_A \big).
\end{equation*}
Considering the number of products $N$, the number of training episodes $M$, and the number of periods $T$ during each episode, we can derive the computation complexity of the training procedure as
\begin{equation} \label{eqn:A.2}
    O\Big(\big((D_I+W_1) W_1+(W_1+3W_2)W_2+ W_2 D_A \big)NMT \Big).
\end{equation}
The computational cost of model inference is equivalent to running one episode of simulation during the model training process. Therefore, the computational complexity of model inference is (\ref{eqn:A.2}) with $M$ = 1. \Halmos
\end{proof}

\section*{B. Illustrating Example for Continuous Model Fitness}

\begin{example} \label{example:1}
    We extract the demand-price data for a product from the Saudi Arabian market, covering the period from March 1, 2017, to March 16, 2018 (details on the data source are provided in Section 4.1.1). We select several common continuous demand models, including the linear demand, exponential demand, iso-elasticity demand, and the logit demand \citep{simchi2005logic}. Applying Ordinary/Nonlinear Least Squares (OLS/NLS) method, we obtain the optimal estimates of their model parameters, which refers to minimizing the sum of squared residuals. Subsequently, we fit the demand data using optimized demand models. The fitting results are shown in Figure \ref{fig:demand_price}.\par

From Figure \ref{fig:demand_price}, it can be seen that the fitness of these models with real data is rather poor. For the linear model, OLS regression analysis can be directly applied, and the details are shown in Table \ref{table:ols_linear}. For the exponential and iso-elasticity demand models, it is common to first take the natural logarithm and convert them into linear forms before applying OLS. The results are presented in Tables \ref{table:ols_exponential} and \ref{table:ols_iso}, respectively. Specifically, the $R^2$ of the linear model is only 0.115, indicating that the model can explain about 11.5\% of the variability in demand. The $R^2$ values for the exponential and iso-elasticity models are 0.106 and 0.116, respectively, which are similar to that of the linear model. The structure of the logit demand model prevents it from being estimated using OLS regression; however, we can still use NLS to fit the model and obtain an optimal $R^2$ value of 0.146. This is slightly higher but still indicates limited explanatory power, suggesting that the model is also not very ideal. \citet{schulte2020price} observe some instances in which a significant share of profits would be lost if the discrete nature of demand were not modeled explicitly. Thus, such continuous assumptions are not sufficient to apply in practice.  Apart from the reason of model assumption, this inadequacy also likely arises from the exclusion of significant influencing factors beyond the product's own price alone, such as competitor prices and market reference prices, which are taken into account in Eq.(\ref{eq:2}) of our study.
    \begin{figure} \label{demand_fit}
    \centering
    \caption{Demand-price relationship fitting under additive and multiplicative models}\includegraphics[width=0.45\textwidth]{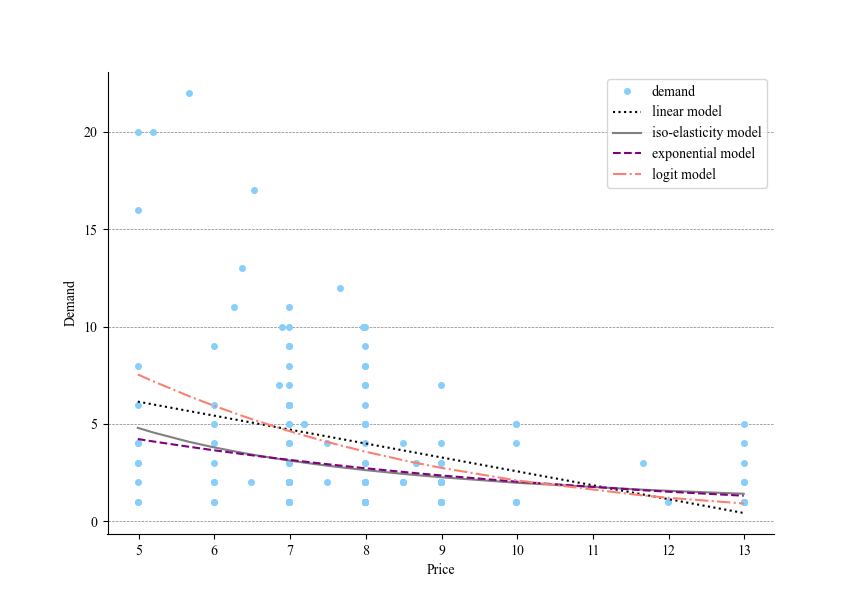}
    
    \label{fig:demand_price}
    \end{figure}

\begin{table}[h]
\centering
\caption{OLS Regression Results - Linear Model}
\label{table:ols_linear}
\small
\begin{tabular}{lcccccc}
\hline
                     & Coef.  & Std Err & $t$     & $P>|t|$ & [0.025 & 0.975] \\ \hline
\textbf{Constant}    & 9.7165 & 1.362   & 7.134 & 0.000            & 7.024  & 12.409 \\
\textbf{Price}       & -0.7151& 0.167   & -4.275& 0.000            & -1.046 & -0.384 \\ \hline
\multicolumn{7}{l}{\textit{Note:} Observations: 142, R-squared = 0.115, Adjusted R-squared = 0.109} \\
\multicolumn{7}{l}{F-statistic = 18.28, Prob(F-statistic): 3.51e-05,  AIC: 785.0, BIC: 790.9} \\

\end{tabular}
\end{table}

\begin{table}
\centering
\caption{OLS Regression Results - Exponential Model}
\label{table:ols_exponential}
\small
\begin{tabular}{lcccccc}
\hline
                     & Coef.  & Std Err & $t$     & $P>|t|$ & [0.025 & 0.975] \\ \hline
\textbf{Constant}    & 2.1693 & 0.292   & 7.421 & 0.000            & 1.591  & 2.747 \\
\textbf{Price}       &-0.1459 & 0.036 & -4.065 & 0.000 & -0.217 &     -0.075  \\ \hline
\multicolumn{7}{l}{\textit{Note:} Observations: 142, R-squared = 0.106, Adjusted R-squared = 0.099} \\
\multicolumn{7}{l}{F-statistic = 16.52, Prob(F-statistic): 7.98e-05,  AIC: 347.9, BIC: 353.8} \\

\end{tabular}
\end{table}

\begin{table}
\centering
\caption{OLS Regression Results - Iso-elasticity Model}
\label{table:ols_iso}
\small
\begin{tabular}{lcccccc}
\hline
                     & Coef.  & Std Err & $t$     & $P>|t|$ & [0.025 & 0.975] \\ \hline
\textbf{Constant}    & 3.6206 & 0.613   & 5.906 & 0.000            & 2.409  & 4.833 \\
\textbf{Price}       & -1.2764& 0.298   & -4.278& 0.000            & -1.866 & -0.686 \\ \hline
\multicolumn{7}{l}{\textit{Note:} Observations: 142, R-squared = 0.116, Adjusted R-squared = 0.109} \\
\multicolumn{7}{l}{F-statistic = 18.30, Prob(F-statistic): 3.48e-05, AIC: 346.3, BIC: 352.2} \\
\end{tabular}
\end{table}

\begin{figure}
\centering
\caption{Moments of demand}
% 第一个子图
\begin{subfigure}[b]{0.4\textwidth}
      \caption{\small The mean of the demand}\includegraphics[width=\textwidth]{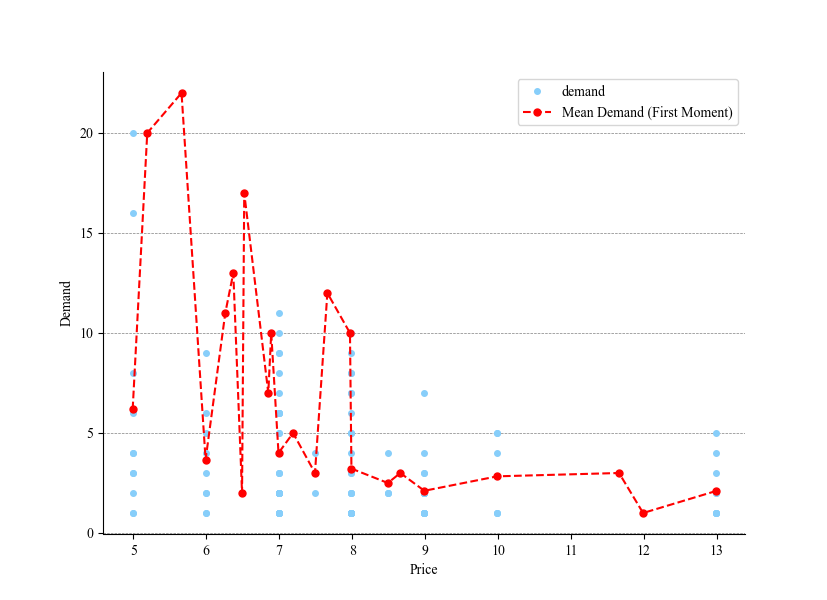}

   \label{fig:first_moment}
\end{subfigure}
% 第二个子图
\begin{subfigure}[b]{0.4\textwidth}
      \caption{\small The variance of the demand}\includegraphics[width=\textwidth]{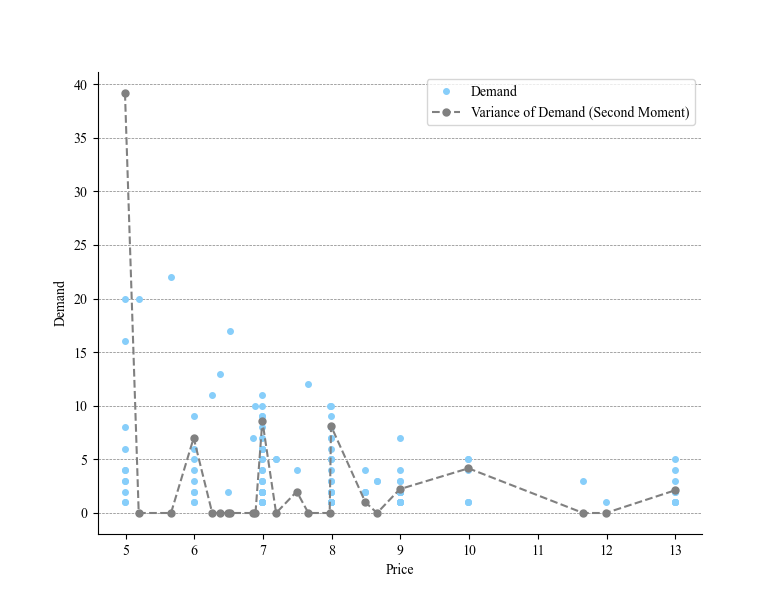}

   \label{fig:second_moment}
\end{subfigure}

\label{fig:moments}
\end{figure}

Figure \ref{fig:moments} describes the mean and variance of demand data. From Figure \ref{fig:first_moment}, we can observe that not all products' average demand decreases with price. From Figure \ref{fig:second_moment}, it is evident that the second moment of demand is exceptionally high at certain price points, indicating that market demand is volatile at some price levels. This variability is possibly influenced by multiple factors such as market competition, seasonal variations, and promotional activities, and is not solely dependent on price.
\end{example}

\section*{C. Single period Problem Results}

We consider a single period problem characterized by zero lead time, no startup inventory and no startup backlog, with parameters detailed in Table \ref{Parameters single}. The three-dimensional image of $F(p,x)$ is shown in Figure \ref{fig:non-concavity}. 

\begin{table}[H]
\centering 
\renewcommand{\arraystretch}{1.2}  
\caption{Parameters in single period problem}
\label{Parameters single}
\small
\begin{tabular}{c|cccccccccc}
\hline
Parameter & $\eta$ & $\Delta$ & $a$ & $l$ & $h$ & $b$ & $c$ & $x_0$ & $p_t$ & $x_t$ \\
\hline
Value & 800 & 0.5 & -4 & -0.01 & 4 & 10 & 5 & 0 & [0,80] & [0,20] \\
\hline
\end{tabular}
\end{table}

\begin{figure}[H]
  \centering
  \caption{Three-dimensional image of $F(p,x)$}
  \includegraphics[width=0.5\textwidth]{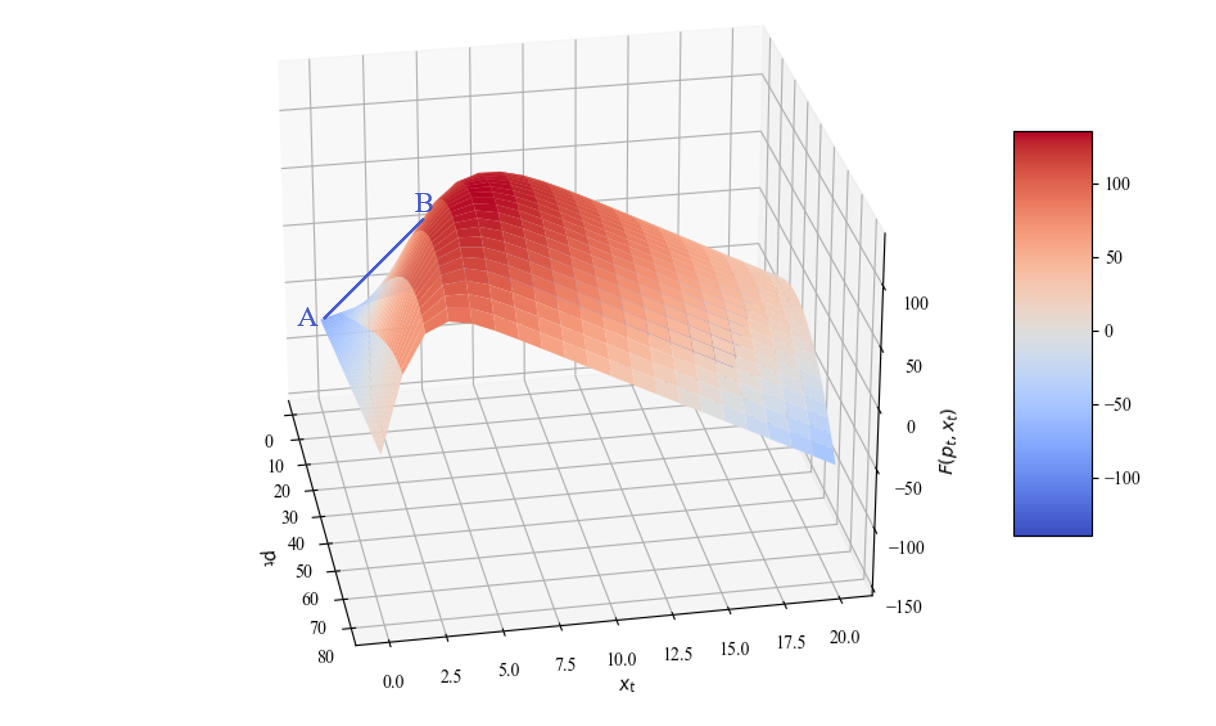} % 插入图片

  \label{fig:non-concavity}
\end{figure}

\vspace{-0.5cm}
\begin{figure}[H]
  \centering
  \caption{Optimality of two-time-scale algorithm in single period problem}
  \includegraphics[width=0.55\textwidth]{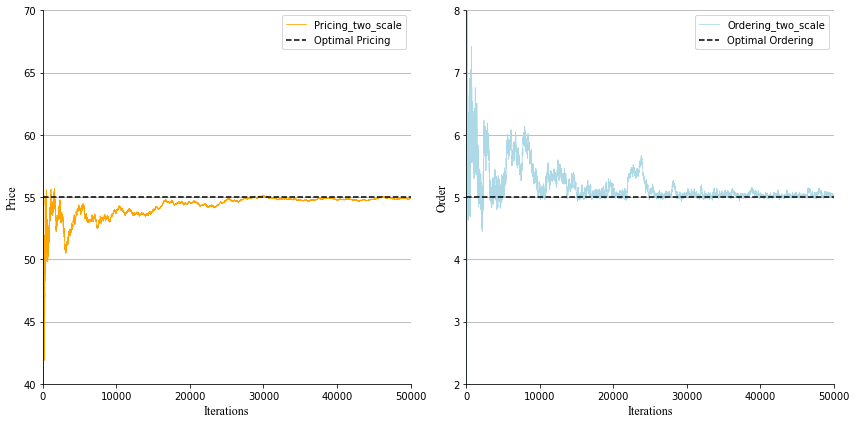} % 插入图片
  
  \label{fig:two-scale-simple}
\end{figure}

\section*{D. LightGBM Training Process}
\begin{itemize}
    \item[1.] Data Preprocessing: (i) Address missing values, anomalies, and outliers.(ii) Employ Kalman filtering \citep{kalman1960new} for data smoothing. 
    \item[2.] Feature Engineering: (i)  Extract statistical features related to historical demand. (ii) Create cross-statistics for different contextual features such as seasons, weekends, weekdays, holidays, and their interactions on product demand. 
    \item[3.] Data Splitting: Divide the data into training and validation sets in chronological order, with the first 80\% of the data being the training set and the last 20\% being the validation set.
    \item[4.] Model Training and Parameter Optimization: First, train the LightGBM model on the training set and evaluate its performance on the validation set to find the optimal number of iterations. Next, retrain the model on the complete dataset with the optimal iterations. During training, LightGBM builds an ensemble of decision trees to minimize the loss function. Finally, optimize model parameters using grid search.
    \item[5.] Model performance is evaluated using the validation set with metrics such as Mean Square Error (MSE) and Mean Absolute Error (MAE).
\end{itemize}

\begin{figure}[H]
  \centering
  \caption{Feature importance ranking}
  \includegraphics[width=0.95\textwidth]{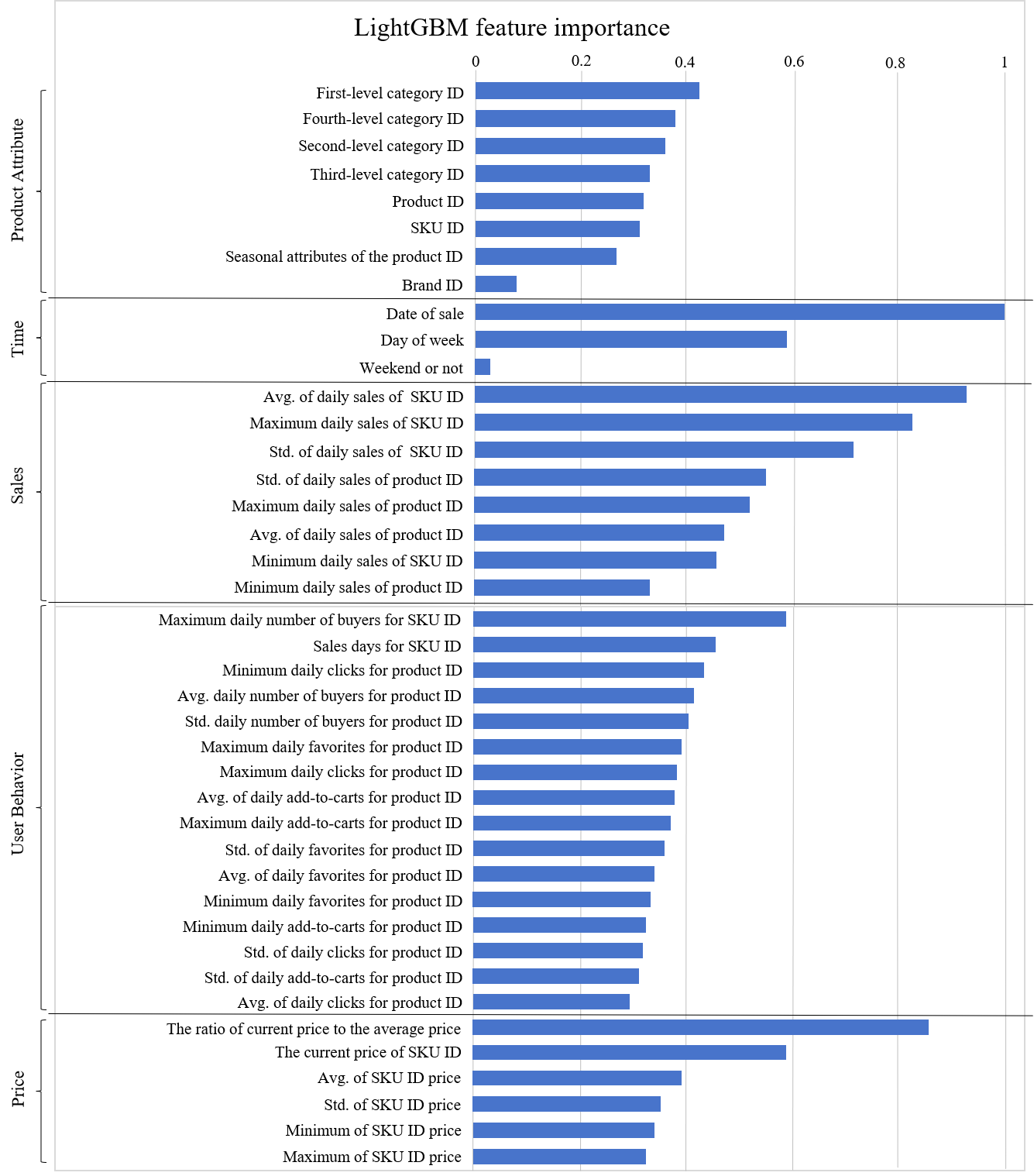} % 插入图片

  \label{fig:feature}
\end{figure} 

\section*{E. Pseudo-code for FSDA}

\begin{algorithm}
\setlength{\columnwidth}{\linewidth}
\caption{Fast-slow Dual-agent Deep Reinforcement Learning  Algorithm} \label{Algo:1}
\begin{algorithmic}[1] \footnotesize
\Statex \textbf{Input:} Number of episodes $M$, periods per episode $T$,  discount factor $\gamma$, clip parameter $\epsilon$, entropy coefficient $\beta_{E}$, the ratio $k$ of two time scales $\alpha$ and $\beta$ 
\Statex \textbf{Initialize:} Actor networks  $(\pi_{\theta_0}^1,...,\pi_{\theta_0}^n)$, global critic network $v_{\phi_0}$ and replay buffer $\mathcal{B}$.
\For {$m=0,1,...,M-1$} 
    \State Collect a set of trajectories by running the joint policy $\bm{\pi}_{\theta_m}=(\pi_{\theta_m}^1,...,\pi_{\theta_m}^n)$.
    \State Push transitions \{$(s_t^i,a_t^i,s_{t+1}^i,r_t), i=1,...,n, t=1,...,T$\} into $\mathcal{B}$.
    \State Sample a random mini-batch of $B$ transitions from $\mathcal{B}$.
    \State Compute advantage function $A_t^i$ for actor $i$ in period $t$ based on global critic network $v_{\phi_m}$ with GAE. 
    \State Initialize sequential update factor $F_t^{1}=1, t=1,...,T$ for actor $1$
    \If{$m \mid k == 0$}
        \For {actor $i = 1,2$}
        \State Compute policy loss for actor $i$:
        $$L_p^i = -\frac{1}{BT} \sum_{b=1}^{B} \sum_{t=1}^{T} \min \Big( \frac{\pi_{\theta_{m+1}}^{i}(a_t^{i}|s_t^{i})}{\pi_{\theta_{m}}^{i}(a_t^{i}|s_t^{i})} F_t^{i} A_t^{i}, clip\Big( \frac{\pi_{\theta_{m+1}}^{i}(a_t^{i}|s_t^{i})}{\pi_{\theta_{m}}^{i}(a_t^{i}|s_t^{i})} , 1-\epsilon, 1+\epsilon \Big) F_t^{i} A_t^{i} \Big) $$ 
        \State Compute entropy loss for actor $i$: $$L_e^i= \beta_E \sum_{t=1}^{T} \pi_{\theta_m}^{i}(\cdot|s_t^{i}) log \big(\pi_{\theta_m}^{i}(\cdot|s_t^{i})\big)$$
        \State \parbox[t]{0.8\linewidth} {Update parameters $\theta_{m}$ of actor $\pi_{\theta_m}^{i}$ to $\theta_{m+1}$ by minimizing $L_p^i+L_e^i$ with the Adam optimizer \citep{kingma2014adam}} 
        \State Compute sequential update factor $F_t^{i+1}$ for actor $i+1$:
        $$F_t^{i+1} = F_t^{i} \frac{\pi_{\theta_{m+1}}^{i}(a_t^{i}|s_t^{i})}{\pi_{\theta_{m}}^{i}(a_t^{i}|s_t^{i})} $$
        \EndFor
    \Else
        \State Repeat lines 10-12 for actor $i = 2$
    \EndIf      
    \State Compute critic loss:
    $$L_c = \frac{1}{BT} \sum_{b=1}^{B} \sum_{t=1}^{T} (v_{\phi}(s_t)-r_t)^2$$
    \State \parbox[t]{0.8\linewidth} {Update parameters $\phi_m$ of critic $v_{\phi_m}$ to $\phi_{m+1}$ by minimizing $L_c$ with the Adam optimizer.}

\EndFor
\end{algorithmic}
\end{algorithm}
% Text of your paper here
%%%%%%%%%%%%%%%%%
\end{document}